\title{\bf First Provable Guarantees for Practical Private FL:\\ Beyond Restrictive Assumptions}
\author{Egor Shulgin \quad Grigory Malinovsky \quad Sarit Khirirat \quad Peter Richt{\'a}rik}
\affil{King Abdullah University of Science and Technology (KAUST), Thuwal, Saudi Arabia}
\date{}
\definecolor{darkscarlet}{rgb}{0.34, 0.01, 0.1}
\definecolor{yaleblue}{rgb}{0.06, 0.3, 0.57}
\definecolor{darkpowderblue}{rgb}{0.0, 0.2, 0.6}
\definecolor{midnightblue}{HTML}{0059b3}
\definecolor{noonblue}{HTML}{e5eef7}
\definecolor{chromered}{HTML}{f14233}
\definecolor{olivedrab}{HTML}{6b8e23}
\renewcommand*{\backrefalt}[4]{%
	\ifcase #1 \footnotesize{(Not cited.)}%
	\or        \footnotesize{(Cited on page~#2)}%
	\else      \footnotesize{(Cited on pages~#2)}%
	\fi}
\newcommand{\ouralg}{\textcolor{midnightblue!70!black}{\small\sf Fed-$\alpha$-NormEC}\xspace}
\newcommand{\prioralg}{\textcolor{midnightblue!70!black}{\small\sf $\alpha$-NormEC}\xspace}
\newcommand{\Normalize}[2]{{\rm Norm}_{#1}\left( #2 \right)}
\newcommand{\norm}[1]{\left\| #1 \right\|}
\newcommand{\sqnorm}[1]{\left\| #1 \right\|^2}
\newcommand{\inp}[2]{\left\langle#1,#2\right\rangle} 
\newcommand{\cD}{\mathcal{D}}
\newcommand{\cO}{\mathcal{O}}
\newcommand{\del}[1]{}
\definecolor{junglegreen}{rgb}{0.16, 0.67, 0.53}
\definecolor{lasallegreen}{rgb}{0.03, 0.47, 0.19}
\definecolor{midnightblue}{HTML}{0059b3}
 \newcommand{\algname}[1]{\textcolor{midnightblue!70!black}{\small\sf  #1}\xspace}
\newcommand{\R}{\mathbb{R}} 
\newcommand{\eqdef}{:=} 
\newcommand{\Exp}[1]{{\rm E}\left[#1\right]}
\newtheorem{theorem}{Theorem}        
\newtheorem{lemma}{Lemma}        
\newtheorem{corollary}{Corollary}
\newtheorem{definition}{Definition}
\newtheorem{assumption}{Assumption}
\newtheorem*{theorem*}{Theorem}
\newtheorem*{corollary*}{Corollary}
\definecolor{red}{rgb}{1.0, 0.01, 0.24}
\definecolor{red}{rgb}{0,0,0}
\definecolor{orange}{rgb}{0,0,0}
\definecolor{Emerald}{rgb}{0,0,0}
\definecolor{blue}{rgb}{0, 0, 0}
\begin{document}

\maketitle

\begin{abstract}
Federated Learning (FL) enables collaborative training on decentralized data. Differential privacy (DP) is crucial for FL, but current private methods often rely on unrealistic assumptions (e.g., bounded gradients or heterogeneity), hindering practical application. Existing works that relax these assumptions typically neglect practical FL features, including multiple local updates and partial client participation. We introduce \ouralg, the first differentially private FL framework providing provable convergence and DP guarantees under standard assumptions while fully supporting these practical features. \ouralg integrates local updates (full and incremental gradient steps), separate server and client stepsizes, and, crucially, partial client participation, which is essential for real-world deployment and vital for privacy amplification. Our theoretical guarantees are corroborated by experiments on private deep learning tasks.
\end{abstract}

\section{Introduction}

Federated Learning (FL)~\citep{mcmahan2017communication,konevcny2016federated} has been widely adopted for training machine learning models  across multiple collaborative devices without centralized data collection. 
Despite its advantages, FL poses three key challenges. 
The first  challenge is the communication bottleneck caused by unreliable and  slow network connections between the server and the clients \citep{caldas2018expanding}. The second challenge is partial client participation, which arises from huge population and intermittent availability of the clients, thus making the training infeasible to involve all the clients in every communication round \citep{chen2020optimal}.  
The third challenge is data heterogeneity in FL systems, where clients' datasets are diverse and not identically distributed \citep{ karimireddy2020scaffold, mishchenko2022proxskip}. 
This has necessitated the development of FL methods that improve communication efficiency, accommodate partial client participation, and address data heterogeneity \citep{wang2021field, kairouz2021advances}.

FL methods keep  data decentralized across  devices, and thus were initially perceived as privacy-preserving methods. 
However, FL remains vulnerable to various privacy threats. 
Prior studies ~\citep{boenisch2023curious,8835269} have shown that sensitive information can still be inferred from the shared model parameters--either by an untrusted central server or by adversaries performing inference attacks.
To address these privacy concerns, FL methods have been extended to incorporate formal privacy guarantees through  Differential Privacy (DP)~\citep{dwork2014algorithmic}. 
This integration in FL methods is achieved by applying DP mechanisms to 
the information, such as gradients or local updates, before it is shared with the server. 
Typically, this involves applying a clipping or normalization operator to bound the sensitivity of the updates, followed by the addition of carefully tuned DP noise.

Although clipping enables DP in FL methods, it introduces bias that can hinder convergence. 
For instance, even in the absence of DP noise, \algname{FedAvg} with model clipping  fails to converge to the optimum for solving a simple quadratic problem~\citep{zhang2022understanding}.
As a result, convergence guarantees for federated DP methods with clipping are typically established under strong and often unrealistic  assumptions, such as uniformly bounded gradient norms~\citep{zhang2020private,li2022soteriafl,lowy2023private} and/or bounded heterogeneity~\citep{noble2022differentially,li2024improved}.
These assumptions do not often hold in practical FL scenarios,  where client data can be arbitrarily heterogeneous, and they tend to obscure the impact of clipping-induced bias. 
To our knowledge, convergence guarantees for FL DP methods remain elusive unless this  bias is explicitly accounted for.

To eliminate the bias caused by clipping and ensure convergence, Error Compensation (EC), also known as Error Feedback (EF, EF21)~\citep{gorbunov2020linearly,stich2019error, richtarik2021ef21}, has been integrated into distributed methods that use gradient clipping, such as Clip21~\citep{khirirat2023clip21}. 
This technique tracks the error between the clipped gradient and the true gradient, and uses it to correct future clipped gradients, thus effectively reducing the clipping bias as the methods progress. 
In non-private settings, EC ensures the convergence of distributed clipping methods under standard assumptions. 
However, its benefit does not  extend to private settings, where DP noise is added.

To ensure theoretical guarantees in the private setting, two recent distributed differentially private (DP) methods, which operate on gradients, achieve strong convergence and privacy guarantees under standard assumptions without requiring bounded gradients or bounded  heterogeneity.
The first method by \citet{islamov2025double} combines the momentum update with the EC mechanism, while the second method  by \citet{shulgin2025smoothed} replaces clipping in the EC mechanism with smoothed normalization~\citep{yun2021can, bu2023automatic}, the operator that is more robust across parameter choices than standard clipping.  
Although these methods converge in both non-private and private settings, they do not support key components that are essential in practical FL systems, such as multiple local training steps and partial client participation. 
Therefore, formal convergence and privacy guarantees of private FL methods under standard assumptions remain largely unexplored.

\subsection{Contributions}

We summarize our key contributions as follows:

\textbf{$\bullet$ A practical, private FL method.} 
We propose \ouralg, an FL method that extends \prioralg~\citep{shulgin2025smoothed} by integrating smoothed normalization and the error feedback mechanism \algname{EF21} into clients' local update steps, rather than directly into their gradients as in \prioralg.
Unlike \prioralg, \ouralg supports partial client participation and allows local training through multiple gradient steps, while introducing separate step sizes for client-side local updates and server-side global aggregation--providing flexibility in controlling their respective effects. 
Additionally, to reduce the computational cost of full gradient steps at clients, \ouralg incorporates cyclic incremental gradient descent locally, a feature absent in \prioralg.

\textbf{$\bullet$ Convergence guarantees for non-convex, smooth problems under standard assumptions.}
We establish the convergence of \ouralg for minimizing non-convex, smooth objectives without relying on additional restrictive assumptions, such as bounded gradients or bounded heterogeneity. Our analysis encompasses both local gradient descent and incremental gradient updates. 
{\color{red}
A fundamentally new convergence analysis is required because \ouralg applies the EF21 mechanism to the clients' local updates, a shift from~\prioralg that applies it to the gradients. 
Specifically, our novel proof technique must bound the Euclidean distance involving the fixed-point operator $\mathcal{T}_i(x)$  for local updating at client $i$. Ensuring this bound allows us to remove the dependence on additional restrictive assumptions often required by prior works studying private FL methods.
}

\textbf{$\bullet$ Differential privacy guarantees with amplification via partial participation.}
We provide a privacy analysis of \ouralg for both single and multiple local update steps. 
In both cases, we introduce an independent client sampling scheme, where each client participates in each round with sampling probability $p$, independently of the others. Our analysis shows that this partial participation in \ouralg, which differs from \prioralg that relies on full participation, enables a significant reduction in the variance of the DP noise through the mechanism of privacy amplification via subsampling.

\textbf{$\bullet$ Empirical validations of \ouralg~on image classification.}
We demonstrate the effectiveness of \ouralg by applying it to the image classification task on the CIFAR-10 dataset using the ResNet20 model. 
Experiments highlight the impact of key algorithm parameters and client participation levels, corroborating our theoretical insights on convergence and privacy trade-offs. Notably, we show that partial participation, by leveraging privacy amplification, can achieve target accuracy with significantly improved communication efficiency compared to full participation, showcasing \ouralg's utility for real-world private deep learning.

\section{Related Works}

In this section, we review existing literature that are closely related to our work. 

\textbf{Communication efficiency in FL.}
Two key strategies for improving communication efficiency in FL algorithms are local updates and compression. 
First, local updates reduce communication frequency by allowing clients to perform multiple training steps locally before sending the updates to the server. 
This idea has been widely explored in the FL literature, e.g. by~\citet{khaled2020tighter, malinovskiy2020local, koloskova2020unified, gorbunov2021local, patel2024limits}.
Second, compression improves communication efficiency by reducing the size of transmitted messages.
Compression operators can be either unbiased or biased. 
The convergence of FL algorithms with unbiased compression--such as \algname{FedAvg}~\citep{haddadpour2021federated}, local gradient descent~\citep{khaled2019gradient, shulgin2022shifted}, and fixed-point methods~\citep{chraibi2019distributed}--has been extensively studied.
While many FL analyses assume unbiased compression, biased compression has also been investigated. 
For instance, ~\citet{gruntkowska2023ef21} study the use of biased compression in combination with error feedback mechanisms. 
However, 
their approaches focus only on compressing the global iterates maintained at the server side, and are limited to the distributed setting, where clients synchronize with the server after every update.

\textbf{Clipping.}
Two popular clipping operators for FL algorithms are per-sample clipping and per-update clipping. 
Per-sample clipping~\citep{liu2022communication} bounds the norm of the local gradient being used to update the local model parameters on each client, and ensures example-level privacy~\citep{abadi2016deep}. Per-update clipping~\citep{geyer2017differentially} limits the bound of the local model update, and preserves user-level privacy~\citep{zhang2022understanding,geyer2017differentially}, which provides stronger privacy guarantee than example-level privacy. 
The convergence of FL algorithms, such as   
\algname{FedAvg}~\citep{mcmahan2018learning} and \algname{SCAFFOLD}~\citep{karimireddy2020scaffold},  with  per-sample and/or per-update clipping  was analyzed by~\citep{zhang2022understanding,noble2022differentially,li2024improved,liu2022communication,wang2023efficient, shulgin2024convergence}. 
In this paper, we leverage per-update smoothed normalization, introduced by~\citet{bu2023automatic} as an alternative to clipping, to design FL algorithms that accommodate local training and differential privacy.

\textbf{Federated learning with clipping and privacy.}
A simple yet popular FL algorithm, \algname{FedAvg}~\citep{mcmahan2017communication}, has been adapted to provide differential privacy (DP) by clipping model updates and injecting random noise~\citep{mcmahan2018learning,geyer2017differentially,triastcyn2019federated}.
These \algname{DP-FedAvg} algorithms were outperformed by \algname{DP-SCAFFOLD}~\citep{noble2022differentially}, a DP version of \algname{SCAFFOLD}~\citep{karimireddy2020scaffold}.
However, these existing results require restrictive assumptions that do not hold in practice, especially in deep neural network training, such as uniformly bounded stochastic noise~\citep{liu2022communication,crawshaw2023episode}, bounded gradients~\citep{zhang2022understanding,li2022soteriafl,lowy2023private,zhang2020private} (which effectively ignores the impact of clipping bias), and/or bounded heterogeneity~\citep{noble2022differentially,li2024improved}. 
To the best of our knowledge, there has been a recent work by~\citet{das2021convergence} that provides convergence guarantees for \algname{DP-FedAvg} without these restrictive assumptions, but their results are limited to convex, smooth problems and require a stepsize to depend on an inaccessible constant $\Delta_i\eqdef f_i(x^\star) - \min_{x\in\R^d} f_i(x)$, where $x^\star = \arg\min_{x\in\R^d} f(x)$. 
In this paper, we provide convergence guarantees for private FL algorithms with smoothed normalization and error feedback. In particular, our guarantees do not rely on the restrictive assumptions commonly used by prior work, and our theoretical stepsizes can be implemented in practice.

\textbf{Server and local stepsizes.} 
The use of separate server and local stepsizes has been shown to be crucial in federated learning~\citep{charles2020outsized, reddi2020adaptive, malinovsky2023federated}. This separation provides greater flexibility in optimization. The local stepsize helps mitigate the impact of data heterogeneity and controls the variance from local updates~\citep{malinovsky2023server}, while the global (server-side) stepsize manages the aggregation process and stabilizes extrapolation during model updates~\citep{li2024power}.

\textbf{Random reshuffling.}
Random reshuffling, or without-replacement sampling, is used in SGD and often outperforms sampling with replacement. Its convergence properties have been extensively studied~\citep{mishchenko2020random, haochen2019random, safran2021random, yun2021can}, including in FL settings~\citep{mishchenko2022proximal, sadiev2022federated, malinovsky2022federated}. Other without-replacement strategies include Shuffle-Once~\citep{safran2020good} and Incremental Gradient methods~\citep{bertsekas2011incremental, koloskova2023convergence}.
In this work, \ouralg~can be extended to support Incremental Gradient updates, partial participation, and differential privacy with provable convergence.

\textbf{Error feedback.}
Error feedback, also known as error compensation, has proven effective in enhancing the convergence of distributed gradient algorithms with compressed communication. Popular error feedback mechanisms include 
\algname{EF14}~\citep{seide20141}, \algname{EF21}~\citep{richtarik2021ef21}, \algname{EF21-SGDM}~\citep{fatkhullin2023momentum}, \algname{EControl}~\citep{gao2023econtrol}, and \algname{EFSkip}~\citep{bao2025efskip}.
Beyond compression, error feedback has been adapted by substituting compression with other operators. 
For instance, \algname{EF21} has inspired the development of \algname{Clip21}~\citep{khirirat2023clip21} (using clipping instead of compression) and \prioralg~\citep{shulgin2025smoothed} (employing smoothed normalization).
In this paper, we contribute by adapting \prioralg~to the FL setting, resulting in \ouralg.

\section{Preliminaries} \label{sec:preliminaries}

In this section, we provide notations and  problem formulation that will be used throughout this paper.

\textbf{Notations.}
We use $[a,b]$ for the set $\{a,a+1,\ldots,b\}$ for integers $a,b$ such that $a \leq b$,  $\Exp{u}$ for the expectation of a random variable $u$, and $f(x)=\cO(g(x))$ if $f(x) \leq A g(x)$ for some $A>0$ for functions $f,g:\R^d\rightarrow \R$. Finally, for vectors $x,y\in\R^d$, $\inp{x}{y}$ denotes their inner product, and $\norm{x}$ denotes the Euclidean norm of $x$.

\textbf{Federated optimization problem.}
Consider an FL setting with the server being connected with $M$ clients over the network. 
Each client $i \in [1,M]$ has a private dataset.
The objective is to determine the vector of model parameters $x\in\R^d$ that solves the following optimization problem:
\begin{eqnarray}\label{eqn:problem}
    \underset{x\in\R^d}{\text{minimize}} \quad f(x) = \frac{1}{M}\sum_{i=1}^M f_i(x),
\end{eqnarray}
where $f_i(x) \eqdef \frac{1}{N}\sum_{j=1}^N f_{i,j}(x)$, and  $f_{i,j}(x)$ is the loss of the model parameterized by $x$ on training data $j \in [1,N]$ of client $i \in [1,M]$. Also, we assume that the objective functions $f$, $f_i$, and $f_{i,j}$ satisfy the following conditions, which are standard for analyzing federated algorithms.
\begin{assumption}\label{assum:smooth}
Consider Problem~\eqref{eqn:problem}. Let each individual function $f_{i,j}(x)$ be $L$-smooth and bounded below by $f_{i,j}^{\inf} > -\infty$; let each local function $f_i(x)$ be bounded below by $f_i^{\inf} > -\infty$; and let the global objective $f(x)$ be bounded below by $f^{\inf} > -\infty$.
\end{assumption}

\textbf{DP-FedAvg.} 
The simplest FL algorithm for solving Problem~\eqref{eqn:problem} is \algname{DP-FedAvg}~\citep{mcmahan2018learning}.
The algorithm contains two steps:  model updating on each client and model aggregation on the server. The server updates the next global model vector $x^{k+1}$ via: 
\begin{eqnarray}
    x^{k+1} = x^k - \frac{\eta}{B} \left[\sum_{i\in S^k } \Psi(x^k - \mathcal{T}_i(x^k)) + z_i^k \right],
\end{eqnarray}
where $S^k$ is the subset of $[1,M]$ with size $B \leq M$,  $\Psi(\cdot)$ is a bounding operation such as clipping or normalization, $\mathcal{T}_i(\cdot)$ is the fixed point operator representing the local update performed by client $i$ based on the current global model $x^k$ and its private data associated with the local function $f_i(\cdot)$, and 
$z_i^k\in\R^d$ is the DP noise.
Since $\Psi(\cdot)$ constrains the magnitude of the model update $\mathcal{T}_i(x^k) - x^k$, we can calibrate the variance of the DP noise $z_i^k$ proportionally to this bound to achieve the desired privacy guarantees.
Moreover, the fact that only a subset of $B$ clients communicate with the server in each round leads to a significant reduction in the required noise variance due to the privacy amplification effect of subsampling.

\textbf{Bias from Clipping or Normalization.}
Clipping and normalization inherently introduce bias, causing \algname{DP-FedAvg} to generally not converge, even without the addition of the DP noise. For instance, \citet[Claim 2.1]{zhang2022understanding} demonstrates that \algname{FedAvg} with model clipping fails to converge to the global optimum when solving a convex quadratic problem.
Existing analyses of \algname{DP-FedAvg} often circumvent the impact of this clipping bias by assuming bounded gradients~\citep{zhang2022understanding,li2022soteriafl,lowy2023private,zhang2020private}.
Acknowledging this limitation,~\citet{das2021convergence} recently attempts to analyze the convergence of \algname{DP-FedAvg} without relying on the bounded gradient assumption. However, their findings are restricted to convex and smooth problems, and require a step size to know the constant that cannot be accessed in practice: $\Delta_i\eqdef f_i(x^\star) - \min_{x\in\R^d} f_i(x)$, where $x^\star = \arg\min_{x\in\R^d} f(x)$.

{\color{red}
\textbf{Client-level differential privacy.} 
In FL, we wish to prevent information leakage about each client's entire dataset from the exchanged updates. 
We adopt client-level differential privacy~\citep{li2022soteriafl}: 
\begin{definition}[Client-level DP]
A randomized mechanism $\mathcal{M}:\cD\rightarrow\mathcal{R}$ with domain $\mathcal{D}$ and range $\mathcal{R}$ is $(\epsilon,\delta)$-DP at the client level if for every pair of client-neighboring datasets $D,D^{\prime} \in \cD$ and for all events $S\in\mathcal{R}$ in the output space of $\mathcal{M}$,  we have 
\begin{eqnarray*}
    {\rm Pr}(\mathcal{M}(D) \in S) \leq e^{\epsilon} {\rm Pr}(\mathcal{M}(D^{\prime}) \in S) + \delta.
\end{eqnarray*}
\end{definition}
}

\section{Fed-$\alpha$-NormEC}\label{sec:fed_normec}

To resolve theoretical limitations in existing private FL algorithms, we propose~\ouralg (\Cref{alg:norm_norm21_dp_v1-fed}). 
Our method achieves this by leveraging smoothed normalization, the EF21 mechanism, local updating, and partial client participation. 
At each communication round $k = 0, 1, \ldots, K$, 
\ouralg perform the following key updating rules: 

\begin{enumerate}[label=\textbf{\arabic*)}, leftmargin=*, align=left]    
\item The server broadcasts the current global model $x^k$ to a subset of participating clients. 
Specifically, client participation is determined independently for each client by a fixed probability $p$ in each round, which models the partial client participation. 
    \item  Every client performs a local update based on the received model using a  fixed-point operator $\mathcal{T}_i(x^k)$, which involves multiple gradient descent steps.
Also, each client computes its local memory vector $v_i^k$ to capture its local model update $\nicefrac{(x^k-\mathcal{T}_i(x^k))}{\gamma}$  using  smoothed normalization\footnote{Smoothed normalization guarantees bounded sensitivity of the client update. That is, it ensures  
$\|\operatorname{Norm}_\alpha(v)\| \leq 1$ for any $v \in \R^d$,~\citet{shulgin2025smoothed}.}  
$$\operatorname{Norm}_\alpha(v) := \frac{1}{\alpha+\|v\|} v,$$
with $\alpha \geq 0$ and 
the EF21 mechanism via: 
\begin{eqnarray*}
	v_i^{k+1} = v_i^k +\beta \Normalize{\alpha}{\frac{x^k-\mathcal{T}_i(x^{k})}{\gamma} - v_i^{k}},
\end{eqnarray*}	 
where $\beta>0$ controls the update of error feedback and $\gamma>0$ is a local stepsize associated with local update operator $\mathcal{T}_i(x)$.
\item  Each participating client transmits its update $\hat \Delta_i^k$ to the server. 
In the non-private setting, $\hat \Delta_i^k \coloneqq q_i^k \Normalize{\alpha}{\frac{x^k - \mathcal{T}_i(x^{k})}{\gamma} - v_i^k}$, while in the private setting, $\hat \Delta_i^k \coloneqq q_i^k \left( \Normalize{\alpha}{\frac{x^k - \mathcal{T}_i(x^{k})}{\gamma} - v_i^k} + z_i^k \right)$.
Here, $q_i^k$ is equal to $1/p$ with probability $p$, and 0 otherwise. The DP noise vector $z_i^k$ is sampled from a Gaussian distribution with zero mean and variance $\sigma_{\text{DP}}^2$.
\item  The server aggregates the normalized local update vectors received from the participated clients and computes the global memory vector $\hat v^k$ and the server updates the global model $x^{k+1}$ using the normalized step as follows:
\begin{eqnarray*}
	\hat v^{k+1} = \hat v^k + \frac{\beta}{M}\sum_{i=1}^M \hat \Delta_i^k, \quad 	x^{k+1}  =x^k - \frac{\eta}{\norm{ \hat v^{k+1}}}\hat v^{k+1},
\end{eqnarray*}	
where $\eta>0$ is the server-side stepsize.
\end{enumerate}

\begin{algorithm}[t]
	\caption{\algname{(DP-)}\ouralg}
	\label{alg:norm_norm21_dp_v1-fed}
	\begin{algorithmic}[1]
		\STATE \textbf{Input:} Tuning parameters $\gamma>0$,  $\beta>0$, and $\eta \in (0, 1)$; normalization parameter $\alpha>0$;  initialized vectors $x^0,{v_i^{0}} \in \R^d$ for $i\in [1,M]$ and $\hat v^{0} = \frac{1}{M}\sum_{i=1}^M v_i^{0}$; local fixed-point operators $\mathcal{T}_i(\cdot)$; probability of transmitting the client's local vector to the server $p\in [0,1]$; Gaussian noise with zero mean and $\sigma_{\rm DP}^2$-variance $z_i^k \in \R^d$.
		\FOR{each iteration $k = 0, 1, \dots, K$}
        \STATE Select $S^k$ with sampling probability $p$
		\FOR{clients $i \in [1,M]$ in parallel}
		\STATE Compute local update  $\mathcal{T}_i(x^k)$
		\STATE Compute $\Delta_i^k = \Normalize{\alpha}{\frac{x^k - \mathcal{T}_i(x^{k})}{\gamma} - v_i^{k}}$
		\STATE Update  {$v_i^{k+1} = v_i^{k} + \beta \Delta_i^k$}
        \ENDFOR
  		\FOR{clients $i \in S^k$ in parallel}
		\STATE Set $q_i^k = 1/p$ 
		\STATE \textbf{Non-private setting:}  Transmit $\hat \Delta_i^k = q_i^k \Delta_i^k$
		\STATE \textbf{Private setting:}  Transmit $\hat \Delta_i^k = q_i^k (\Delta_i^k + z_i^k)$
		\ENDFOR
		\STATE Server computes $\hat v^{k+1} = \hat v^{k} + \frac{\beta}{M}\sum_{i=1}^M \hat \Delta_i^k$
		\STATE Server updates {$x^{k+1} = x^k - \frac{\eta}{\norm{ \hat v^{k+1}}} \hat v^{k+1} $} 
		\ENDFOR
		\STATE \textbf{Output:} $x^{K+1}$
	\end{algorithmic}
\end{algorithm}

{\color{red}
\textbf{Comparison to \prioralg and DP-SGD.} 
\ouralg adapts \prioralg to FL settings, where local updates and partial client participation are key components. 
Unlike \prioralg, which applies the normalization operator and EF21 mechanism to the gradients, \ouralg applies them to the  model updates $\nicefrac{( x^k-\mathcal{T}_i(x^k))}{\gamma}$.
Furthermore, \ouralg adopts the partial client participation protocol from DP-SGD \citep{abadi2016deep}, where gradients are exchanged over the server-client network.
}

\section{Convergence Results}\label{sec:thm}

To this end, we provide the convergence theorem for \ouralg that incorporates multiple local gradient descent (GD) steps and partial participation in the private setting. 
Here, the local fixed-point operator $\mathcal{T}_i(\cdot)$ is defined as: $
\mathcal{T}_i(x^k) = x^k - \gamma \cdot \frac{1}{T} \sum_{j=0}^{T-1} \nabla f_{i}(x_i^{k,j})$, where  the sequence $\{x_i^{k,j}\}$ is generated by
$$
x^{k,j+1}_i = x^{k,j}_i - \frac{\gamma}{T} \nabla f_{i}(x^{k,j}_i) \quad \text{for} \quad j=0,1,\ldots,T-1,
$$
given $x_i^{k,0}=x^k$.

\begin{theorem}[\ouralg with local GD steps]\label{thm:DP_PP_full}
Consider \ouralg~for solving Problem~\eqref{eqn:problem} where~\Cref{assum:smooth} holds.    
Let $\beta,\alpha>0$ be chosen such that $\frac{\beta}{\alpha+R} < 1$ with $R = \max_{i \in [1,M]} \norm{v_i^0 - \frac{x^0 - \mathcal{T}_i(x^0)}{\gamma}}$.
{\color{blue} If $\gamma = \frac{1}{2L}$ and $\eta \leq \min\left( \frac{1}{K+1}  \frac{\Delta^{\inf}}{2\sqrt{2L}} , \frac{1}{4L}\frac{\beta R}{\alpha + R} \right)$, then}
\begin{align*}
    \underset{k \in [0,K]}{\min} \Exp{\norm{\nabla f(x^k)}}  \leq \frac{3}{K+1} \frac{f(x^0)-f^{\inf}}{\eta} + 2R  + 2\sqrt{\frac{\beta^2 B}{M}(K+1)}  + \frac{\eta L}{2} + \gamma \cdot   \mathbb{I}_{T\neq 1} \left[8L\sqrt{2L}  \sqrt{\Delta^{\inf}} \right],
\end{align*}
where 
$B= 2\frac{(p-1)^2}{p} + 2(1-p) + 2\sigma^2_{\rm DP}/p$, and $\Delta^{\inf} = f^{\inf} - \frac{1}{M}\sum_{i=1}^M f_i^{\inf} \geq0$. 
\end{theorem}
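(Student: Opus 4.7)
The plan is to adapt the EF21-with-smoothed-normalization template of \prioralg~to the federated setting, treating the rescaled local update $T_i^k \coloneqq (x^k - \mathcal{T}_i(x^k))/\gamma$ as the ``pseudo-gradient'' that the memory vectors $v_i^k$ must track. My starting point is a descent inequality for the normalized server step: $L$-smoothness of $f$ together with $x^{k+1} = x^k - \eta\, \hat v^{k+1}/\norm{\hat v^{k+1}}$ and the elementary bound $\inp{g}{u/\norm{u}} \geq \norm{g} - 2\norm{g-u}$ yield $\eta \norm{\nabla f(x^k)} \leq f(x^k) - f(x^{k+1}) + 2\eta\norm{\nabla f(x^k) - \hat v^{k+1}} + \eta^2 L/2$. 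Summing over $k = 0, \ldots, K$, taking expectation, dividing by $\eta(K+1)$, and using $\min \leq$ average reduces the task to controlling $\frac{1}{K+1}\sum_k \Exp{\norm{\nabla f(x^k) - \hat v^{k+1}}}$.

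I would then triangle-split this residual into three pieces around $\tfrac{1}{M}\sum_i T_i^k$ and $\bar v^{k+1} \coloneqq \tfrac{1}{M}\sum_i v_i^{k+1}$. The first piece, the \emph{local-update bias} $\nabla f(x^k) - \tfrac{1}{M}\sum_i T_i^k$, vanishes when $T=1$ since $T_i^k = \nabla f_i(x^k)$; for $T>1$, a standard local-GD argument bounds the per-iterate displacement $\norm{x_i^{k,j} - x^k}$ via a descent lemma against $f_i^{\inf}$, and $L$-smoothness of $f_i$ converts this into the $\gamma \cdot 8L\sqrt{2L}\sqrt{\Delta^{\inf}}$ term. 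The third piece, $\bar v^{k+1} - \hat v^{k+1}$, captures \emph{sampling plus DP noise}: since $\hat\Delta_i^k - \Delta_i^k = (q_i^k - 1)\Delta_i^k + q_i^k z_i^k$ is conditionally zero-mean and independent across clients, a straightforward variance calculation using $\norm{\Delta_i^k} \leq 1$ (smoothed normalization) and $\Exp{\norm{z_i^k}^2} \leq \sigma_{\rm DP}^2$ gives a per-round variance of order $\beta^2 B/M$ that accumulates linearly in $K$; Jensen's inequality then produces the $\sqrt{\beta^2 B(K+1)/M}$ term.

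The technical core is the second piece, $\tfrac{1}{M}\sum_i (T_i^k - v_i^{k+1})$, the \emph{EF21 contraction error}. I would establish by induction the invariant $\max_i \norm{v_i^k - T_i^k} \leq R$ for all $k$, which produces the flat $2R$ contribution. The smoothed-normalization update gives per-step contraction toward the \emph{current} target, $\norm{v_i^{k+1} - T_i^k} = \norm{v_i^k - T_i^k}\br{1 - \tfrac{\beta}{\alpha + \norm{v_i^k - T_i^k}}}$, but the error at round $k+1$ is measured against the \emph{next} target $T_i^{k+1}$; one must therefore control the drift $\norm{T_i^{k+1} - T_i^k}$. Because $\mathcal{T}_i$ composes $T$ local GD steps of size $\gamma/T$ starting from $x^k$, $L$-smoothness of each $f_{i,j}$ propagates through the local trajectory to give $\norm{T_i^{k+1} - T_i^k} = \cO(L\norm{x^{k+1} - x^k}) = \cO(L\eta)$, and the constraint $\eta \leq \tfrac{1}{4L}\cdot\tfrac{\beta R}{\alpha+R}$ is exactly what keeps the drift strictly below the per-round contraction gain $\beta R/(\alpha+R)$, preserving the invariant. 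The main obstacle, and the reason the paper emphasises that a genuinely new argument is needed, is precisely this coupling: unlike in \prioralg, where $T_i^k = \nabla f_i(x^k)$ is a one-step smooth function of $x^k$, here $T_i^k$ is a multi-step fixed-point map whose Lipschitz dependence on $x^k$ must be tracked along the entire local trajectory before it can be fed back into the EF21 recursion. Once the invariant is in place, substituting the three bounds back into the summed descent inequality and plugging in the stated constraints on $\gamma$ and $\eta$ delivers the claimed estimate.
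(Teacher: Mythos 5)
Your overall architecture matches the paper's: the normalized descent inequality, the three-way triangle split of $\nabla f(x^k)-\hat v^{k+1}$ around $\frac{1}{M}\sum_i T_i^k$ and $\bar v^{k+1}$, the inductive invariant $\max_i\norm{v_i^k - T_i^k}\leq R$ maintained via the drift bound $\norm{\mathcal{T}_i(x^{k+1})-\mathcal{T}_i(x^k)}\leq 2\eta$, and the martingale-type accumulation of the sampling-plus-DP residual giving $\sqrt{\beta^2 B(K+1)/M}$ are all exactly the paper's Steps 1--2 together with \Cref{lemma:bound_v_and_T}, \Cref{lemma:fedGD_dp_pp}, \Cref{lemma:descentIneq}, and \Cref{lemma:localGD_recursion}.

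There is, however, one genuine gap: your treatment of the local-update bias for $T>1$. You assert that $\norm{\nabla f(x^k)-\frac{1}{M}\sum_i T_i^k}$ can be bounded by the flat constant $\gamma\cdot 8L\sqrt{2L}\sqrt{\Delta^{\inf}}$, but the actual bound obtainable without a bounded-gradient assumption is $\frac{1}{\gamma}\frac{1}{M}\sum_i\norm{(x^k-\gamma\nabla f_i(x^k))-\mathcal{T}_i(x^k)}\leq 2L\gamma\cdot\frac{1}{M}\sum_i\norm{\nabla f_i(x^k)}$ (statement 5 of \Cref{lemma:localGD_recursion}), and $\frac{1}{M}\sum_i\norm{\nabla f_i(x^k)}$ is \emph{not} uniformly bounded: by \Cref{lemma:localGD_avg_gradient_norm} it is controlled by $\sqrt{2L/\Delta^{\inf}}\,[f(x^k)-f^{\inf}]+\sqrt{2L\Delta^{\inf}}$, and the first, iterate-dependent summand cannot be dropped. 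This turns the descent inequality into a recursion with a multiplicative factor $\bigl(1+\frac{4L\sqrt{2L}}{\sqrt{\Delta^{\inf}}}\gamma\eta\bigr)$ on $f(x^k)-f^{\inf}$, so your plan of ``sum over $k$, divide by $\eta(K+1)$'' does not telescope. The paper resolves this with the weighted-sum argument of \Cref{lemma:localGD_convergence} combined with the condition $\eta\gamma\leq\frac{1}{K+1}\frac{\Delta^{\inf}}{4L\sqrt{2L}}$, which caps the compounded factor at $e\leq 3$; this is precisely the origin of both the first branch of the hypothesis $\eta\leq\frac{1}{K+1}\frac{\Delta^{\inf}}{2\sqrt{2L}}$ and the constant $3$ in front of $\frac{f(x^0)-f^{\inf}}{\eta}$, neither of which your outline accounts for. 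Everything else in your proposal goes through, but without this step the $T>1$ case of the theorem is not proved.
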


\textbf{Discussion on~\Cref{thm:DP_PP_full}.}
From~\Cref{thm:DP_PP_full},
\ouralg with multiple local GD steps achieves sub-linear convergence with additive constants arising from smoothed normalization $R$, partial participation and private noise $
B = 2\frac{(p - 1)^2}{p}+2(1-p) + \frac{2\sigma^2_{\rm DP}}{p}$,  data heterogeneity $\Delta^{\inf}$.
In contrast to \prioralg~\citep{shulgin2025smoothed}, \ouralg naturally handles the local updates and partial participation protocol.
{\color{red}
Moreover, the convergence bound of \ouralg contains the heterogeneity metric $\Delta^{\inf} = f^{\inf} - \frac{1}{M}\sum_{i=1}^M f_i^{\inf}$.
{\color{blue}
This metric can be small for many learning problems. 
In particular, \(\Delta^{\inf} = 0\) for problems with non-negative losses, such as squared error loss in linear regression, logistic loss in logistic regression, and cross-entropy loss in neural network training.
}
Our results are in stark contrast to many analyses for FL algorithms, e.g. by~\citet{noble2022differentially, li2024improved}, that  assume uniformly bounded heterogeneity, i.e. $\|\nabla f_i(x) - \nabla f(x)\|^2 \le \xi^2$ for some $\xi >0$.
This bounded heterogeneity assumption is often unrealistic for many settings. For instance, it does not hold for  minimizing quadratic objective functions with heterogeneous data over an unbounded domain. 
}
{
\color{red} \textbf{Novel analysis techniques for \ouralg.}  
Our convergence analysis for \ouralg departs from that of \prioralg~\citep{shulgin2025smoothed}, as it relies on two key error bounds due to client drifts by local updates and data heterogeneity. 
Specifically, we bound (1) $\norm{x^k-\gamma \nabla f_i(x^k) - \mathcal{T}_i(x^k)}$ using~\Cref{lemma:localGD_avg_gradient_norm}, and (2)  $\norm{\nabla f_i(x^k)}$ using ~\Cref{lemma:localGD_recursion}, which depends on the heterogeneity metric $\Delta^{\inf}$.
This metric allows us to handle arbitrary levels of data heterogeneity. 
}

In the subsequent sections, based on~\Cref{thm:DP_PP_full}, we present the convergence of \ouralg that use either a single local update step or multiple steps.

\subsection{One Local GD Step}

We begin by analyzing \ouralg with one local GD step, i.e. with $\mathcal{T}_i(x) = x - \gamma \nabla f_i(x),  \text{for } i \in [1, M]$.

\textbf{Full participation and non-private setting.}
In a full participation and non-private setting, 
according to~\Cref{thm:DP_PP_full} with $T=1$ (one local GD step), $p=1$ (full client participation), and $\sigma_{\rm DP}=0$ (no DP noise), \ouralg with one local GD step achieves the convergence bound consisting of three terms:
$\frac{3}{K+1}  \frac{f(x^0) - f^{\inf}}{\eta} + 2R + \frac{\eta L}{2}.$ 
By choosing $R = \cO(1/\sqrt{K+1})$ and $\eta = \cO(1/\sqrt{K+1})$,  \ouralg with one local GD step achieves the $\cO(1/\sqrt{K+1})$ convergence rate in the gradient norm, which matches that of \prioralg by \citet[Corollary 1]{shulgin2025smoothed} and that of classical gradient descent.
The upper-bound for $R$ can be achieved, e.g., by setting $v_i^0=\nicefrac{(x^0-\mathcal{T}_i(x^0))}{\gamma} + e$, where $e = (D/\sqrt{K+1},0,0,\ldots,0)\in\R^d$.

\textbf{Partial participation and private setting.}
In the partial participation and private setting, \ouralg with one local GD step can be shown to be $(\epsilon,\delta)$-DP at the client level, by leveraging the instance-level DP guarantee of \citet{abadi2016deep}, which uses the moments accountant analysis on DP-SGD. 
This is justified because \ouralg uses the same partial participation protocol as DP-SGD, where at each communication round a random subset of clients is sampled to participate. 
Instead of per-example clipping in DP-SGD, \ouralg applies normalization to each client's local update to enforce the sensitivity of $1$. 
\ouralg injects zero-mean Gaussian noise to the aggregated update, which is chosen with the same scaling rule according to~\citet{abadi2016deep}.
By treating each client’s normalized update as analogous to a clipped per-example gradient in DP‑SGD, the instance-level privacy guarantee by~\citet{abadi2016deep} can be directly translated into the client-level setting.
Specifically, we set $\sigma_{\rm DP} = c \cdot \frac{p \sqrt{(K+1)\log(1/\delta)}}{\epsilon}$ for some constant $c > 0$ and $0 < p \leq 1$. Notably, $\sigma_{\rm DP}$ exhibits a reduced dependency on $p$ thanks to the amplification effect of subsampling in the client-level privacy setting. The formal utility guarantee is given below:

\begin{corollary}
\label{corr:DP_PP_utility}
 Consider  \ouralg~ for solving Problem~\eqref{eqn:problem} under the same setting as~\Cref{thm:DP_PP_full}. 
 Let $T=1$ (one local GD step),  let $\sigma_{\rm DP}= c \frac{p \sqrt{(K+1)\log(1/\delta)}}{\epsilon}$ with $c >0$ (privacy  with subsampling amplification), and let $p=\frac{\hat B}{M}$ for $\hat B\in[1,M]$ (client subsampling).
If $\beta = \frac{\hat \beta}{K+1}$ with $\hat \beta = \sqrt{\frac{3(f(x^0)-f^{\inf})}{\gamma}}\sqrt[4]{\frac{M}{B_2}}$, 
$\gamma < \frac{\Delta^{\inf} (\alpha + R)}{\sqrt{2L} \hat\beta R}$
$\alpha = R = \cO\left( \sqrt[4]{d}\frac{\sqrt{f(x^0)-f^{\inf}}}{\sqrt{\gamma}}\sqrt[4]{\frac{B_2}{M}} \right)$ with $B_2 = 2c^2 \frac{\hat B}{M} \frac{ \log(1/\delta)}{\epsilon^2}$, and 
$\eta = \frac{1}{K+1}    \frac{\gamma}{2}\frac{\hat \beta R}{\alpha + R}$, then
\begin{eqnarray*}
     \underset{k \in [0,K]}{\min} \Exp{\norm{\nabla f(x^k)}}  = \cO\left(\Delta \sqrt[4]{\frac{d\hat B}{M^2}\frac{\log(1/\delta)}{ \epsilon^2}} \right), 
\end{eqnarray*}
where $\Delta = \max(\alpha,2)\sqrt{L} \sqrt{f(x^0)-f^{\inf}}$.
\end{corollary}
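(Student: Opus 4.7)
The plan is to specialize Theorem~\ref{thm:DP_PP_full} to the one-local-step setting $T=1$, in which the indicator term $\gamma\cdot \mathbb{I}_{T\neq 1}[\cdot]$ vanishes, and then substitute the corollary's explicit parameters and simplify. With $T=1$, $\gamma = 1/(2L)$, and $\alpha = R$, the ratio $R/(\alpha+R) = 1/2$, so the prescribed step $\eta = \frac{1}{K+1}\cdot\frac{\gamma}{2}\cdot\frac{\hat\beta R}{\alpha+R}$ collapses to $\eta = \hat\beta/(8L(K+1))$. The four ingredients of Theorem~\ref{thm:DP_PP_full} then become (i) $\frac{3(f(x^0)-f^{\inf})}{(K+1)\eta} = \frac{24L(f(x^0)-f^{\inf})}{\hat\beta}$, (ii) $2R$, (iii) $2\sqrt{\beta^2 B(K+1)/M}$, and (iv) $\eta L/2 = \hat\beta/(16(K+1))$, which is negligible for large $K$.

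The key simplification is the third term. Expanding $B = 2(p-1)^2/p + 2(1-p) + 2\sigma_{\rm DP}^2/p$ and using the privacy-amplified choice $\sigma_{\rm DP} = c p \sqrt{(K+1)\log(1/\delta)}/\epsilon$ yields $2\sigma_{\rm DP}^2/p = 2 c^2 p (K+1)\log(1/\delta)/\epsilon^2 = B_2(K+1)$. Substituting $\beta = \hat\beta/(K+1)$ therefore gives $\beta^2 B(K+1)/M = \hat\beta^2 B /(M(K+1)) = \hat\beta^2 B_2/M$ up to a vanishing remainder, so the third term reduces to $2\hat\beta\sqrt{B_2/M}$.

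Balancing the first and third terms, which have the form $A/\hat\beta + C\hat\beta$ with $A = 24 L F_0$ (where $F_0 = f(x^0)-f^{\inf}$) and $C = 2\sqrt{B_2/M}$, is achieved by the prescribed $\hat\beta \propto \sqrt{F_0/\gamma}\sqrt[4]{M/B_2}$, producing a value of order $\sqrt{LF_0}\sqrt[4]{B_2/M}$. The second term $2R$ has the same shape but with an extra $\sqrt[4]{d}$ factor since $R = \cO(\sqrt[4]{d}\sqrt{F_0/\gamma}\sqrt[4]{B_2/M}) = \cO(\sqrt[4]{d}\sqrt{LF_0}\sqrt[4]{B_2/M})$, so it dominates when $d \geq 1$. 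Summing the contributions and substituting $B_2 = 2 c^2 (\hat B/M)\log(1/\delta)/\epsilon^2$ yields $\cO\bigl(\sqrt{L F_0}\sqrt[4]{d \hat B/M^2 \cdot \log(1/\delta)/\epsilon^2}\bigr)$, which, with the leading constant absorbed into $\Delta = \max(\alpha,2)\sqrt{LF_0}$, matches the stated bound.

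The main obstacles are two consistency checks. First, the specified $\eta$ must satisfy both stepsize caps in Theorem~\ref{thm:DP_PP_full}: the bound $\eta \leq \beta R/(4L(\alpha+R))$ is saturated by construction, while the other bound $\eta \leq \Delta^{\inf}/(2\sqrt{2L}(K+1))$ translates, after substituting the form of $\eta$, exactly into the auxiliary condition $\gamma < \Delta^{\inf}(\alpha+R)/(\sqrt{2L}\hat\beta R)$ that the statement imposes--this is precisely why that side condition appears. Second, and more delicately, the sampling-variance terms $2(p-1)^2/p + 2(1-p)$ inside $B$ must be shown to be absorbed by $B_2(K+1)$ in the operating regime, which requires $K+1$ to exceed a constant of order $\epsilon^2/\log(1/\delta)$. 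Beyond these bookkeeping checks, the proof is a direct specialization of Theorem~\ref{thm:DP_PP_full}.
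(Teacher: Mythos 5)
Your proposal is correct and follows essentially the same route as the paper's proof: specialize Theorem~\ref{thm:DP_PP_full} to $T=1$, split $B$ into the client-sampling part and the DP-noise part $B_2(K+1)$, use $\beta=\hat\beta/(K+1)$ to make the noise term $2\hat\beta\sqrt{B_2/M}$, verify that the side condition on $\gamma$ makes the $\Delta^{\inf}$ stepsize cap non-binding, and balance the first and third terms via the prescribed $\hat\beta$ with $2R$ contributing the $\sqrt[4]{d}$ factor. The only (immaterial) differences are that you substitute $\gamma=1/(2L)$ and $\alpha=R$ earlier to simplify $\eta$, and you phrase the residual sampling-variance contribution as an absorption condition on $K$ where the paper simply carries it as a vanishing $\cO(1/\sqrt{K+1})$ term.
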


\textbf{Discussion on~\Cref{corr:DP_PP_utility}.}
\Cref{corr:DP_PP_utility} establishes the utility bound of \ouralg in the partial participation and private  setting.
By setting $p = \hat{B}/M$, where $\hat{B} \in [1, M]$ denotes the number of clients sampled at each round, \ouralg achieves a utility bound of $
\mathcal{O}\left( \Delta \sqrt[4]{d \cdot \frac{B}{M^2} \cdot \frac{\log(1/\delta)}{\epsilon^2}} \right),
$ which improves upon the $
\mathcal{O}\left( \Delta \sqrt[4]{d \cdot \frac{1}{M} \cdot \frac{\log(1/\delta)}{\epsilon^2}} \right)
$ utility bound of \prioralg. This improvement arises due to privacy amplification via subsampling induced by partial participation.
Finally, when $p = 1$ (full client participation), \ouralg recovers the same utility bound as \prioralg.
Moreover, we can ensure $\hat v^0 = \frac{1}{M}\sum_{i=1}^M v_i^0$ by using secure aggregation techniques.
Specifically, each client adds cryptographic noise before their updates are communicated to the server, thus maintaining this initialization condition while preserving privacy. We refer to~\citet{shulgin2025smoothed} for a comprehensive discussion on this initialization approach.

\subsection{Multiple Local GD Steps}
Next, we analyze the convergence of \ouralg with multiple local GD  steps in a partial participation and private setting. 
According to~\Cref{thm:DP_PP_full} with $T>1$, the convergence bound of \ouralg includes one additional error term:  $\mathbb{I}_{T \neq 1} \left[ 8L\sqrt{2L} \cdot \sqrt{\Delta^{\inf}} \right]$,
which stems from client drift due to local updates $T$ and data heterogeneity measured by $\Delta^{\inf} = f^{\inf} - \frac{1}{M} \sum_{i=1}^M f_i^{\inf}$.
This term indicates that \ouralg becomes more effective as client data becomes less heterogeneous.  
Moreover, it can be controlled, or even eliminated, by reducing the local stepsize $\gamma$.  
It is important to note that $\Delta^{\inf}$ can be arbitrarily large, in contrast to the uniform heterogeneity bounds often imposed by~\citet{noble2022differentially,li2024improved}.  
However, $\Delta^{\inf}$ may vanish for homogeneous datasets.
In extreme cases, $\Delta^{\inf}=0$ when all clients share the same infimum, i.e., $f^{\inf}_1 = f^{\inf}_2 = \ldots = f^{\inf}_M$.
Furthermore, by applying~\Cref{thm:DP_PP_full} with $T>1$, and by  using the variance of the DP noise $\sigma_{\rm DP} = c \cdot \frac{p \sqrt{(K+1)\log(1/\delta)}}{\epsilon}$ for some constant $c > 0$ and $0 < p \leq 1$, \ouralg achieves the utility guarantee under properly tuned parameters, as shown in~\Cref{corr:DP_PP_utility_local_GD}.

\subsection{Extension to Local IG Steps} 
To reduce the cost of computing full gradients on clients, we introduce a variant of \ouralg that uses cyclic incremental gradient (IG) steps\footnote{\color{orange}We focus on the cyclic IG steps to avoid high-probability analyses typically required for methods using clipping or normalization. Extension to random reshuffling and arbitrary numbers of epochs is left for future work. 
}. 
In this variant, each client performs local updates using gradient steps on its individual component functions $f_{i,j}$, applied in a cyclic order over its local dataset. 
Here, the local fixed-point operator $\mathcal{T}_i(\cdot)$ is defined as: $
\mathcal{T}_i(x^k) = x^k - \gamma \cdot \frac{1}{N} \sum_{j=0}^{N-1} \nabla f_{i,j}(x_i^{k,j})$, where  the sequence $\{x_i^{k,j}\}$ is generated by
$$
x^{k,j+1}_i = x^{k,j}_i - \frac{\gamma}{N} \nabla f_{i,j}(x^{k,j}_i) \quad \text{for} \quad j=0,1,\ldots,T-1,
$$
with initialization $x_i^{k,0}=x^k$. 
The convergence bound of \ouralg using local cyclic IG steps in \Cref{thm:FedNormEC_IG_PP_DP_multiple} introduces an additional error term of
$\gamma \cdot 4L\sqrt{2L} \cdot \sqrt{ \frac{1}{M} \sum_{i=1}^M \Delta^{\inf}_i },$
where $\Delta^{\inf}_i = f^{\inf} - \frac{1}{N} \sum_{j=1}^N f_{i,j}^{\inf}$. This error vanishes if all functions $f_{i,j}$ share the same infimum $f_i^{\inf}$, in which case we recover the previous result for \ouralg using the local GD steps.
Utility guarantees for  \ouralg using the local GD steps can be found in~\Cref{corr:DP_PP_utility_local_IG}.

\section{Experiments} \label{sec:experiments}

We evaluate the performance of \ouralg~through numerical experiments on a non-convex optimization task involving deep neural network training. Following the experimental setup from prior work~\citep{shulgin2025smoothed} common for DP training, we use the CIFAR-10 dataset~\citep{krizhevsky2009learning} and the ResNet20 model~\citep{he2016deep}. Detailed settings and additional results are provided in the Appendix.
We analyze the performance of \ouralg in the differentially private setting by training the model for 300 communication rounds. We set the variance of added noise at $p \beta \sqrt{K \log(1/\delta)}\epsilon^{-1}$ for $\epsilon=8, \delta=10^{-5}$ and vary $\beta$ to simulate different privacy levels. The step size (learning rate) $\gamma$ is tuned for every combination of parameters $p$ and $\beta$.

\begin{figure*}[h]
    \centering
    \includegraphics[width=0.9\linewidth]{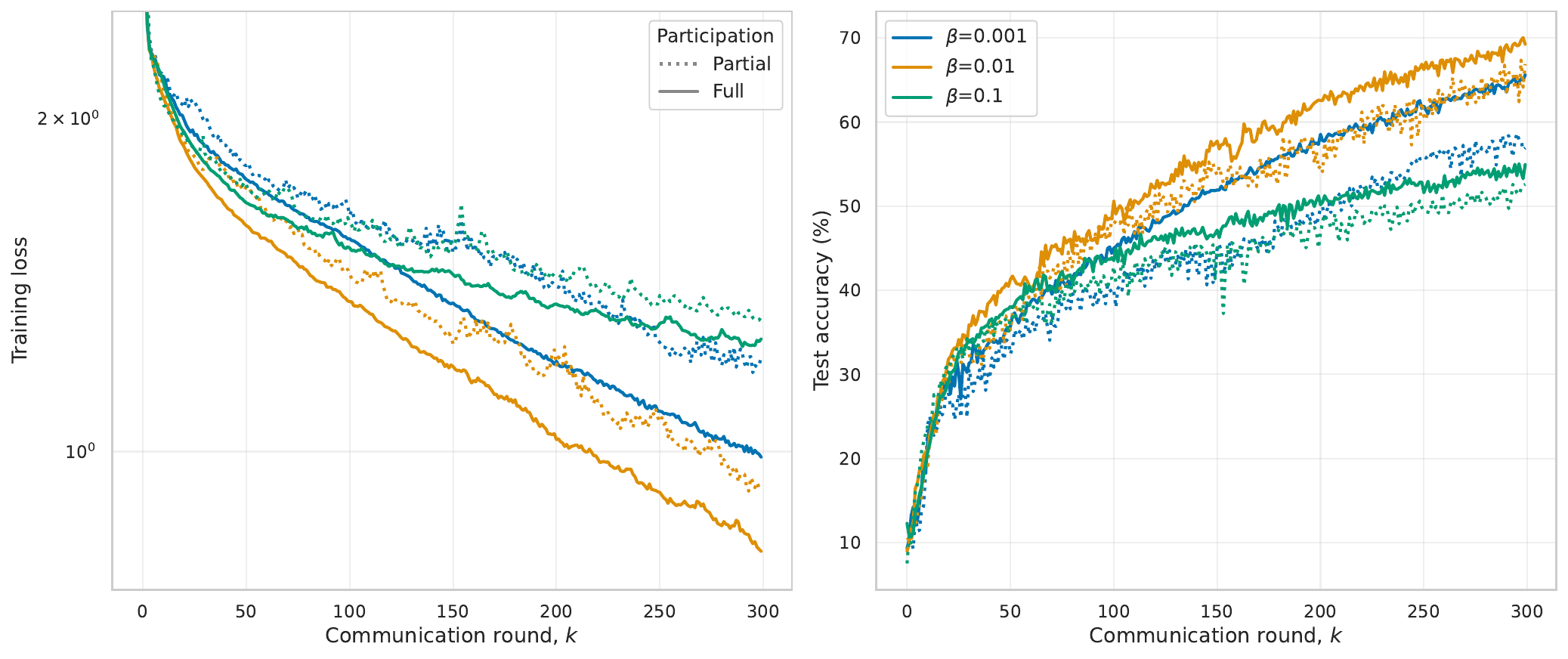}
    \caption{Convergence of \ouralg under Full [solid] and Partial participation [dotted] for $p=0.25$.}
    \label{fig:convergence}
\end{figure*}

The convergence behavior of \ouralg as a function of communication rounds is depicted in \Cref{fig:convergence}. The plots illustrate performance for full client participation ($p=1.0$, solid lines) and partial client participation ($p=0.25$, dotted lines) across three settings for the hyperparameter $\beta$.
The choice of $\beta$ markedly influences performance. Empirically, $\beta=0.01$ (orange lines) consistently delivers the best results, achieving the lowest training loss and highest test accuracy for both full and partial participation. For instance, with full participation, $\beta=0.01$ leads to approximately 70\% test accuracy, while $\beta=0.1$ (green lines) results in the poorest performance (around 55-60\% accuracy). Our theory (\Cref{thm:DP_PP_full}) supports this sensitivity, as $\beta$ influences both error feedback and the DP noise term (since $\sigma_{\text{DP}} \propto p\beta$). The convergence bound includes terms like $\sqrt{\beta^2 B(K+1)/M}$, implying an optimal $\beta$ balances error compensation and noise.

In terms of convergence with respect to the communication round, Full participation ($p=1.0$) outperforms Partial participation ($p=0.25$) for a fixed $\beta$. This is consistent with \Cref{thm:DP_PP_full}: the client sampling variance component of $B$ ($(p-1)^2/p$) is zero for $p=1$ but positive for $p=0.25$. Although the DP noise contribution to $B$ ($\sigma^2_{\rm DP}/p \propto p\beta^2$) is smaller for $p=0.25$, the client sampling variance appears more dominant in round-wise performance. These results underscore the trade-offs in selecting $\beta$ and the impact of client participation on round-wise performance.

\begin{figure*}[h] 
    \centering
    \includegraphics[width=0.9\linewidth]{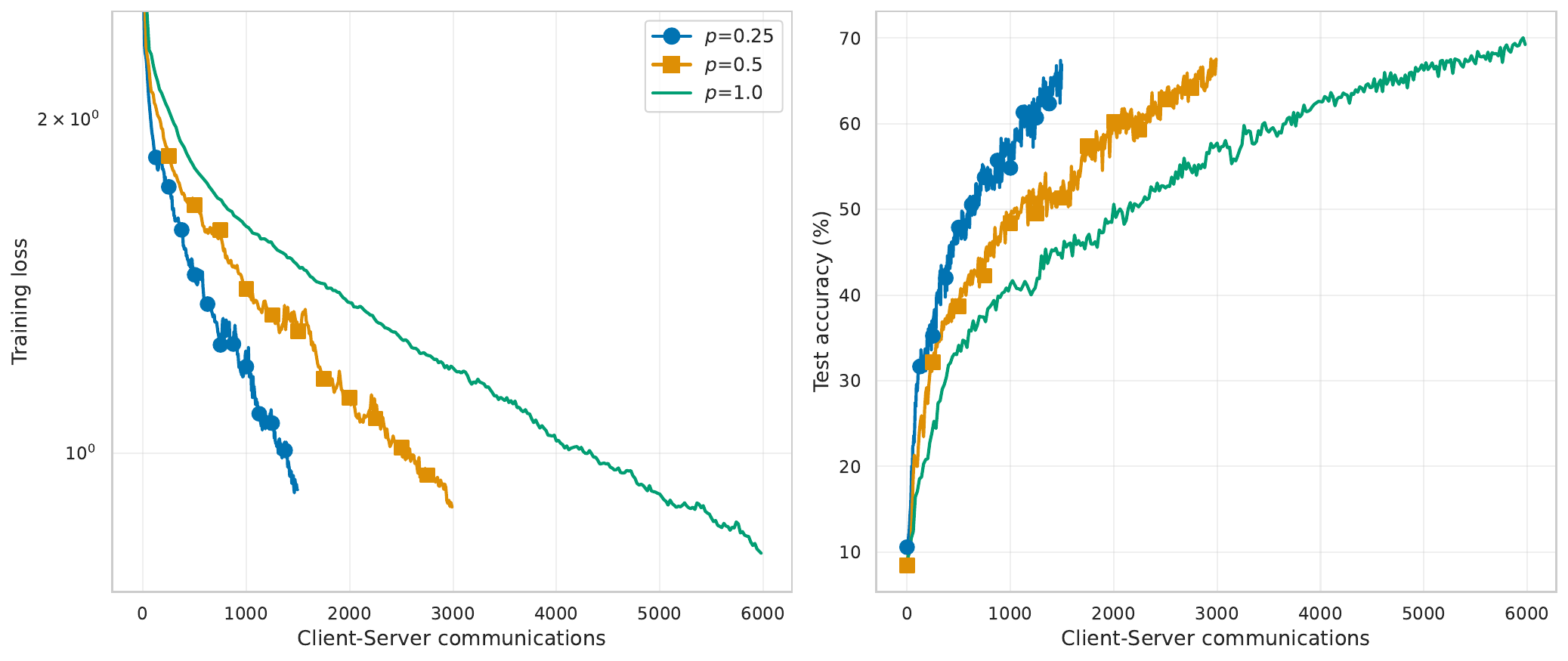}
    \caption{Convergence of \ouralg across different partial participation rates. Horizontal axis takes into account the total number of transmissions from client to server.}
    \label{fig:communication}
\end{figure*}

\Cref{fig:communication} further analyzes \ouralg's performance against the total number of client-server communications (i.e., $k \times p \times M$). This visualization offers direct insights into communication efficiency. Notably, configurations with smaller client participation probabilities ($p=0.25$ and $p=0.5$) achieve target performance levels with significantly fewer total client-server transmissions compared to full participation ($p=1.0$). For instance, to reach approximately 65\% test accuracy, $p=0.25$ (blue circles) requires about 1200 total communications, whereas $p=1.0$ (green line) needs nearly 4500.

This efficiency aligns with our theory (\Cref{corr:DP_PP_utility} and \Cref{thm:DP_PP_full}). With privacy amplification and $\sigma_{\rm DP} \propto p\beta$, the DP noise component in the error term $B$ ($2\sigma^2_{\rm DP}/p$) becomes proportional to $p\beta^2$. Thus, for a fixed $\beta$, a smaller $p$ reduces this noise contribution. This reduction, coupled with fewer transmissions per round for smaller $p$, explains the overall communication efficiency gains. Although full participation converges in fewer rounds (see \Cref{fig:convergence}), partial participation is more economical under a fixed total communication budget due to privacy amplification.

\section{Conclusion}

This paper presented \ouralg, the first differentially private federated learning algorithm to offer provable convergence for nonconvex, smooth problems without resorting to unrealistic assumptions such as bounded gradients or heterogeneity. \ouralg uniquely combines smoothed normalization and error compensation with essential practical FL components: local updates, distinct server/client learning rates, partial client participation (vital for privacy amplification), and DP noise. 
Finally, we verify the effectiveness of \ouralg by experiments on private deep neural network training.

\section*{Acknowledgements}

The research reported in this publication was supported by funding from King Abdullah
University of Science and Technology (KAUST): i) KAUST Baseline Research Scheme, ii)
Center of Excellence for Generative AI, under award number 5940, iii) SDAIA-KAUST Center
of Excellence in Artificial Intelligence and Data Science.

\bibliography{refs}

\appendix
\tableofcontents

\section{Additional experiments and details}\label{app:exp}

\paragraph{Additional details.}
All methods are run using a constant learning rate, without auxiliary techniques such as learning rate schedules, warm-up phases, or weight decay. The CIFAR-10 dataset is partitioned into 90\% for training and 10\% for testing. Training samples are randomly shuffled and evenly distributed across $n=20$ workers, each using a local batch size of 32. We use
a fixed random seed (42) to ensure reproducibility.
Our implementation builds upon the publicly available GitHub repository of \citet{idelbayev18a}, and all experiments are conducted on a single NVIDIA GeForce RTX 3090 GPU.

We use a fixed smoothed normalization parameter $\alpha=0.01$, as it was shown to have an insignificant effect on convergence~\citep{shulgin2025smoothed}. Server normalization (Line 12 in Algorithm~\ref{alg:norm_norm21_dp_v1-fed}) is not used, as omitting it empirically improves final performance~\citep{shulgin2025smoothed}. All methods are evaluated across the following hyperparameter combinations: step size $\gamma \in \{0.001, 0.01, 0.1\}$ and sensitivity threshold $\beta \in \{0.001, 0.01, 0.1\}$.

\subsection{\ouralg vs \algname{FedAvg}}

We compare the performance of our Algorithm~\ref{alg:norm_norm21_dp_v1-fed} (\ouralg) with the standard \algname{FedAvg} approach, as defined in Section~\ref{sec:preliminaries}:
\begin{eqnarray*}
    x^{k+1} = x^k - \frac{\eta}{B} \left[\sum_{i\in S^k } \Psi(x^k - \mathcal{T}_i(x^k)) + z_i^k \right],
\end{eqnarray*}
where $\Psi$ is the smoothed normalization operator, $\mathcal{T}_i(x) = x - \gamma \nabla f_i(x)$ is the local gradient mapping, $\eta = \gamma$, and $p=1$ in the Differentially Private (DP) setting.
We follow the same experimental setup as described in Section~\ref{sec:experiments}.

Figure~\ref{fig:avg_vs_normec} presents the convergence of training loss and test accuracy for both methods across different values of the sensitivity parameter $\beta$. The results demonstrate that the Error Compensation (EC) mechanism in \ouralg~consistently accelerates convergence and improves test accuracy compared to \algname{FedAvg}, across all privacy levels (i.e., all tested values of $\beta$). Notably, \ouralg~achieves its best performance for $\beta=0.01$, which aligns with the findings in Section~\ref{sec:experiments}.
\begin{figure*}[!htbp]
    \centering
    \includegraphics[width=0.9\linewidth]{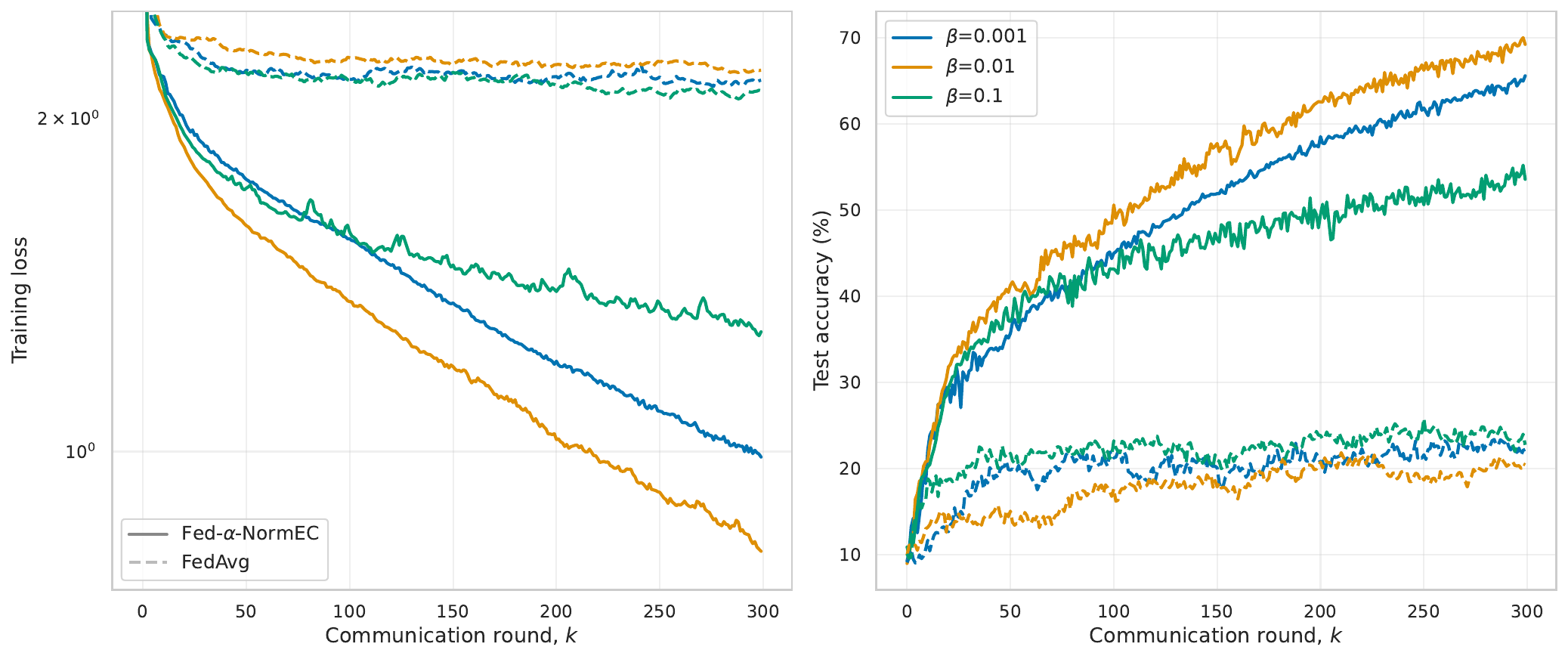}
    \caption{Error Compensation (EC) provides significant benefits across various $\beta$ values.}
\vskip -0.1in
    \label{fig:avg_vs_normec}
\end{figure*}

To further analyze the effect of hyperparameters, Figure~\ref{fig:avg_heatmap} shows the highest test accuracy achieved by \algname{FedAvg} for each $(\beta, \gamma)$ pair. The optimal performance for \algname{FedAvg} is observed at $\beta=0.1$, while the best results are generally found along the diagonal, where the product $\beta \cdot \gamma = 0.001$.
\begin{figure*}[h]
    \centering
    \includegraphics[width=0.4\linewidth]{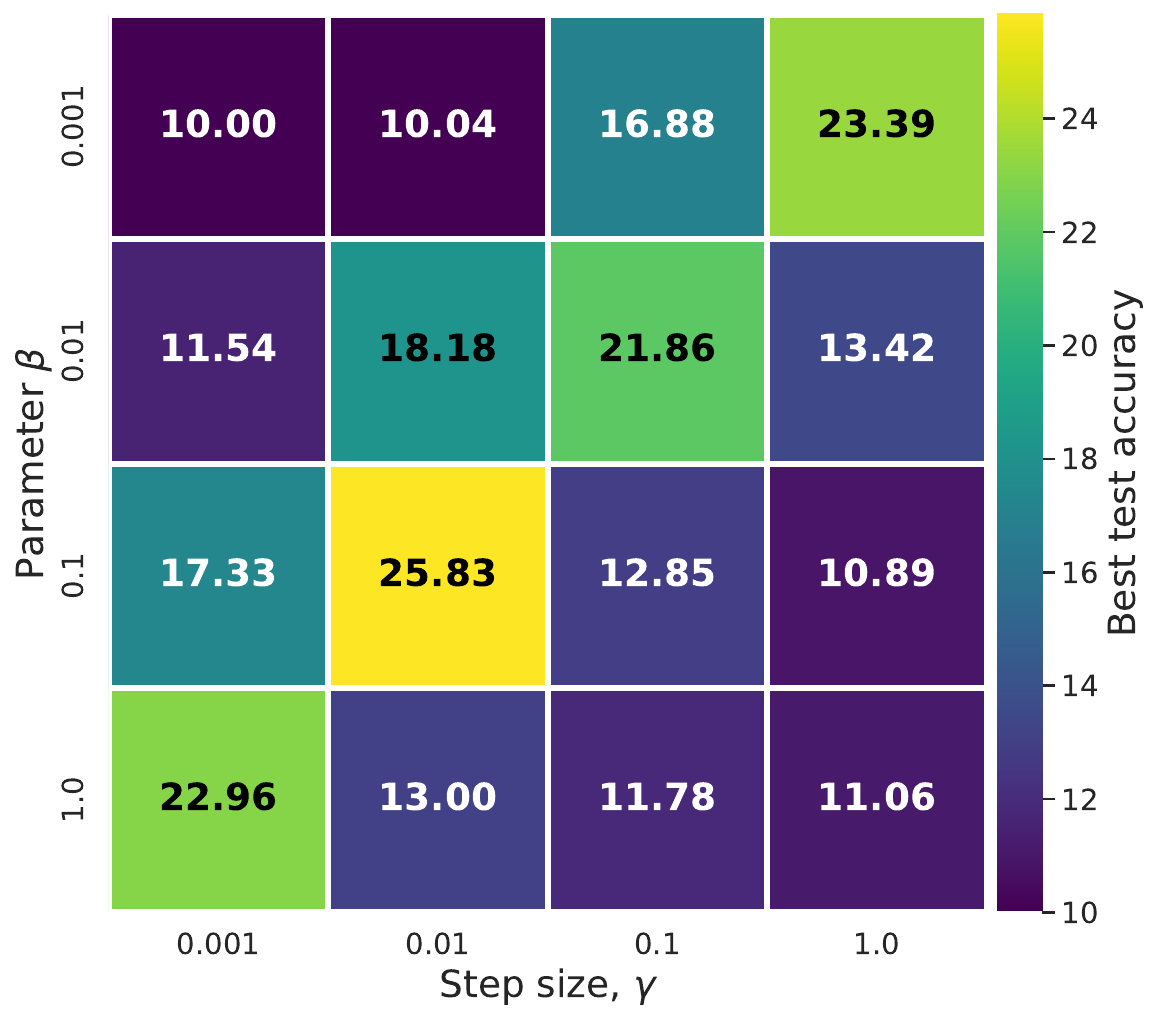}
    \caption{The highest test accuracy achieved by \algname{FedAvg} for different $\beta$ and $\gamma$ parameters.}
    \label{fig:avg_heatmap}
\vskip -0.1in
\end{figure*}

Importantly, prior works, including \cite{khirirat2023clip21, shulgin2025smoothed}, have shown that \algname{FedAvg} with clipping or normalization may fail to converge in certain settings, whereas \ouralg~remains robust and convergent. Our results further support this observation, highlighting the effectiveness of the Error Compensation mechanism in improving both convergence speed and final accuracy, especially in privacy-constrained federated learning scenarios.

\section{Useful Lemmas}

We introduce five lemmas that are useful for our  analysis.

First, \Cref{lemma:bound_v_and_T} establishes the bounds for  $\norm{\frac{x^k-\mathcal{T}_i(x^k)}{\gamma}-v_i^{k}}$ and $\norm{\frac{x^k - \mathcal{T}_i(x^k)}{\gamma} - v_i^{k+1}}$.
Both quantities will be applied in the induction proof, which is the first step of the convergence proof of~\ouralg.

\begin{lemma}\label{lemma:bound_v_and_T}
Let $v_i^{k}\in\R^d$ be governed by 
\begin{eqnarray*}
	v_i^{k+1} = v_i^k + \beta \Normalize{\alpha}{ \frac{x^k - \mathcal{T}_i(x^{k})}{\gamma} - v_i^{k}},  \text{ for $i \in [1,M]$ and }  k \geq 0,
\end{eqnarray*}	
and let the fixed-point operator $\mathcal{T}_i(\cdot)$ satisfy 
\begin{eqnarray*}
	\norm{\mathcal{T}_i(x)-\mathcal{T}_i(y)} \leq \rho \norm{x-y},  \text{ for $\rho > 0$ and } x,y\in\R^d.
\end{eqnarray*}	
If $\norm{\frac{x^k - \mathcal{T}_i(x^{k})}{\gamma} - v_i^{k}} \leq C$ for some $C>0$, $\norm{x^{k+1}-x^k} \leq \eta$, $\frac{\beta}{\alpha + C} < 1$, and $\eta \leq \frac{\gamma\beta C}{(1+\rho) (\alpha + C)}$, then $\norm{\frac{x^{k+1} - \mathcal{T}_i(x^{k+1})}{\gamma} - v_i^{k+1}} \leq C$ and $\norm{\frac{x^{k} - \mathcal{T}_i(x^{k})}{\gamma} - v_i^{k+1}} \leq C$.	
\end{lemma}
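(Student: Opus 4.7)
My plan is to prove the two bounds in sequence. Introducing the shorthand $a^k := (x^k - \mathcal{T}_i(x^k))/\gamma$ and $w_i^k := a^k - v_i^k$, the hypothesis reads $\|w_i^k\| \leq C$ and the two conclusions become $\|a^{k+1} - v_i^{k+1}\| \leq C$ and $\|a^k - v_i^{k+1}\| \leq C$. I would first establish the second (easier) bound on the ``refined'' error that uses the new memory but the old $a^k$, and then deduce the first via a triangle inequality that absorbs the client drift $\|a^{k+1} - a^k\|$ using the assumed step-size restriction.

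\textbf{Bounding $\|a^k - v_i^{k+1}\|$.}
Substituting the update rule for $v_i^{k+1}$ into $a^k - v_i^{k+1}$ gives the key identity $a^k - v_i^{k+1} = w_i^k - \beta\,\mathrm{Norm}_\alpha(w_i^k) = \bigl(1 - \beta/(\alpha + \|w_i^k\|)\bigr)\,w_i^k$, so that the norm equals $\|w_i^k\|\cdot\bigl|1 - \beta/(\alpha + \|w_i^k\|)\bigr|$. A case split on the sign of $\alpha + \|w_i^k\| - \beta$ handles this cleanly. In the \emph{contractive} regime $\beta \leq \alpha + \|w_i^k\|$ the factor lies in $[0,1)$, yielding $\leq \|w_i^k\| \leq C$ directly. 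In the \emph{overshooting} regime $\|w_i^k\| < \beta - \alpha$ (which forces $\beta > \alpha$), I would use $\beta - \alpha - \|w_i^k\| < \beta - \alpha < C$ (the last inequality coming from $\beta/(\alpha+C) < 1$) together with $\|w_i^k\|/(\alpha+\|w_i^k\|) \leq 1$ to again conclude the bound by $C$.

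\textbf{Bounding $\|a^{k+1} - v_i^{k+1}\|$.}
The triangle inequality gives $\|a^{k+1} - v_i^{k+1}\| \leq \|a^{k+1} - a^k\| + \|a^k - v_i^{k+1}\|$. The first term is controlled by the $\rho$-Lipschitzness of $\mathcal{T}_i$ and the bound $\|x^{k+1} - x^k\| \leq \eta$: one obtains $\|a^{k+1} - a^k\| \leq (1+\rho)\eta/\gamma$. For the second term, I would sharpen the analysis of the previous step to the tighter estimate $\|a^k - v_i^{k+1}\| \leq C - \beta C/(\alpha + C)$, by exploiting the monotonicity of the scalar map $s \mapsto s\bigl(1 - \beta/(\alpha+s)\bigr)$ on its contractive branch, which pushes the worst case to $s = \|w_i^k\| = C$. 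Combining these two contributions with the step-size condition $(1+\rho)\eta/\gamma \leq \beta C/(\alpha + C)$ collapses the upper bound exactly to $C$.

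\textbf{Main obstacle.}
The principal technical subtlety lies in establishing the \emph{strict} contraction $\|a^k - v_i^{k+1}\| \leq C - \beta C/(\alpha + C)$ uniformly over $\|w_i^k\| \in [0, C]$. Whenever $\beta > \alpha$, the scalar map $s \mapsto s\cdot|1 - \beta/(\alpha+s)|$ develops an additional interior maximum $(\sqrt{\beta}-\sqrt{\alpha})^2$ on the overshoot sub-interval $[0, \beta-\alpha)$, and this interior bump can in principle exceed the endpoint value $C - \beta C/(\alpha + C)$. To close the argument I would either appeal to the parameter regime of \ouralg~(see the choice of $\beta = \hat\beta/(K+1)$ in \Cref{corr:DP_PP_utility}, which keeps $\beta$ much smaller than $\alpha$ and so empties the overshoot sub-interval), or perform a finer comparison between $(\sqrt{\beta}-\sqrt{\alpha})^2$ and $C(\alpha+C-\beta)/(\alpha+C)$. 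Once the tight contraction is in hand, the rest of the proof reduces to the triangle-inequality bookkeeping above.
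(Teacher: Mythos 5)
Your decomposition is exactly the paper's: with $P_i(x)=(x-\mathcal{T}_i(x))/\gamma$, split $\norm{P_i(x^{k+1})-v_i^{k+1}}\le \norm{P_i(x^{k+1})-P_i(x^k)}+\norm{P_i(x^k)-v_i^k-\beta\Normalize{\alpha}{P_i(x^k)-v_i^k}}$, bound the drift by $(1+\rho)\eta/\gamma\le \beta C/(\alpha+C)$ via the Lipschitz property of $\mathcal{T}_i$ and the stepsize condition, and bound the residual by the contraction factor $\abs{1-\beta/(\alpha+\norm{w_i^k})}$ of the smoothed-normalization error map. Your case-split proof of the second conclusion $\norm{P_i(x^k)-v_i^{k+1}}\le C$ is complete, and on the overshoot branch it is in fact more careful than the paper's, which does not case-split.

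The obstacle you flag is, however, a genuine gap in your argument for the first conclusion --- and it is precisely the step the paper disposes of in one line. The paper asserts $\abs{1-\frac{\beta}{\alpha+\norm{w_i^k}}}\norm{w_i^k}\le \abs{1-\frac{\beta}{\alpha+C}}C$ for $\norm{w_i^k}\le C$, i.e., that $s\mapsto\abs{1-\frac{\beta}{\alpha+s}}s$ is maximized on $[0,C]$ at the endpoint $s=C$. This holds when $\beta\le\alpha$ (the map is then nonnegative with derivative $1-\alpha\beta/(\alpha+s)^2\ge 0$), but not in general: for $\beta>\alpha$ the overshoot branch has the interior maximum $(\sqrt{\beta}-\sqrt{\alpha})^2$ that you identify, e.g.\ $\alpha=0.01$, $\beta=1$, $C=1$ gives the value $0.81$ at $s=0.09$ versus the endpoint value $(1-\frac{\beta}{\alpha+C})C\approx 0.0099$, so the slack $\beta C/(\alpha+C)$ reserved for the drift is not actually available. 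So you have not missed an idea that is present in the paper; rather, the paper's proof rests on the endpoint-domination claim that you correctly doubt. To close the lemma one must either add a hypothesis ensuring the overshoot branch is empty, such as $\beta\le\alpha$ or $\norm{w_i^k}\ge\beta-\alpha$ (the regime of \Cref{corr:DP_PP_utility}, where $\beta=\hat\beta/(K+1)\ll\alpha$, satisfies this), or replace the residual bound by $\max\bigl((\sqrt{\beta}-\sqrt{\alpha})^2,\,(1-\frac{\beta}{\alpha+C})C\bigr)$ and check separately that adding the drift keeps the total below $C$. As written, neither your proposal nor the paper's own argument establishes the first conclusion when $\beta>\alpha$.
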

\begin{proof}
From the definition of the Euclidean norm, 
\begin{eqnarray*}
    \norm{ P_i(x^{k+1}) - v_i^{k+1} }
    & \overset{v_i^{k+1}}{=} & \norm{P_i(x^{k+1}) - v_i^k - \beta N_\alpha(P_i(x^{k})-v_i^k)} \\
    & \overset{\text{triangle inequality}}{\leq}& \norm{P_i(x^{k+1})-P_i(x^k)} + \norm{P_i(x^k)-v_i^k - \Normalize{\alpha}{P_i(x^k)-v_i^k}},
\end{eqnarray*}
where $P_i(x) = (x-\mathcal{T}_i(x))/\gamma$.

Next, we bound $\norm{P_i(x^{k+1})-P_i(x^k)}$.
From the definition of $P_i(x)$, 
\begin{align*}
    \norm{P_i(x^{k+1})-P_i(x^k)}
    & = \left\| \frac{x^{k+1} - \mathcal{T}_i(x^{k+1})}{\gamma} - \frac{x^{k} - \mathcal{T}_i(x^{k})}{\gamma} \right\| \\
    &\overset{(*)}{\leq} \frac{1}{\gamma}\left( \left\| x^{k+1} - x^k \right\| + \left\| \mathcal{T}_i(x^k) - \mathcal{T}_i(x^{k+1})  \right\| \right)\\
    &\overset{(**)}{\leq} \frac{1}{\gamma}\left(1+\rho\right) \left\|x^{k+1} - x^k\right\|.
\end{align*}
Here, we reach $(*)$ by the triangle inequality,  and  $(**)$ by the fact that $\norm{\mathcal{T}_i(x)-\mathcal{T}_i(y)} \leq \rho \norm{x-y}$ for $\rho >0$ and $x,y\in\R^d$. 
Plugging the upper-bound for $\norm{P_i(x^{k+1})-P_i(x^k)}$ into the main inequality yields 
\begin{eqnarray*}
    \norm{ P_i(x^{k+1}) - v_i^{k+1} }
    \leq \frac{1}{\gamma}(1+\rho)\norm{x^{k+1}-x^k} + \norm{P_i(x^k)-v_i^k - \Normalize{\alpha}{P_i(x^k)-v_i^k}}.
\end{eqnarray*}

Next, we bound $\norm{P_i(x^k)-v_i^k - \Normalize{\alpha}{P_i(x^k)-v_i^k}}$. From the property of $\Normalize{\alpha}(\cdot)$ according to  Lemma 1 of~\cite{shulgin2025smoothed}, 
\begin{align*}
\norm{P_i(x^k)-v_i^k - \Normalize{\alpha}{P_i(x^k)-v_i^k}}
& \leq \left\vert 1 - \frac{\beta}{\alpha + \norm{\frac{x^k - \mathcal{T}_i(x^k)}{\gamma} -v_i^k} } \right\vert \norm{ \frac{x^k - \mathcal{T}_i(x^k)}{\gamma} -v_i^k } .
\end{align*}

If  $\norm{ \frac{x^k - \mathcal{T}_i(x^k)}{\gamma} -v_i^k} \leq C$ for some $C>0$, and $\frac{\beta}{\alpha+C}< 1$, then 
\begin{align*}
\norm{P_i(x^k)-v_i^k - \Normalize{\alpha}{P_i(x^k)-v_i^k}}
& \leq \left\vert 1 - \frac{\beta}{\alpha + C} \right\vert C \\
& \leq \left(1 - \frac{\beta}{\alpha + C} \right)C.
\end{align*}
Hence, by plugging the upper-bound for $\norm{P_i(x^k)-v_i^k - \Normalize{\alpha}{P_i(x^k)-v_i^k}}$ into the main inequality, we obtain
\begin{eqnarray*}
    \norm{ P_i(x^{k+1}) - v_i^{k+1} }
    \leq \frac{1}{\gamma}(1+\rho)\norm{x^{k+1}-x^k} + \left(1 - \frac{\beta}{\alpha + C} \right)C.
\end{eqnarray*}

If  $\left\| x^{k+1} - x^k \right\| \leq \eta $, then 
\begin{eqnarray*}
    \norm{ P_i(x^{k+1}) - v_i^{k+1} }
    \leq \frac{1}{\gamma}(1+\rho)\eta + \left(1 - \frac{\beta}{\alpha + C} \right)C.
\end{eqnarray*}

If $\eta \leq \frac{\gamma}{1+\rho}\frac{\beta C}{ (\alpha +C)}$, then we can show that 
$$\norm{P_i(x^{k+1}) - v_i^{k+1}} \leq C,$$
and that 
\begin{eqnarray*}
	    \norm{ P_i(x^k) - v_i^{k+1}}     	& \overset{v_i^{k+1}}{=} & \norm{ P_i(x^k) - v_i^k -  \beta \Normalize{\alpha}{ P_i(x^k) - v_i^{k}}}  \\ 
	& \overset{\text{Lemma 1 of~\cite{shulgin2025smoothed}}}{\leq} & \left\vert  1 - \frac{\beta}{\alpha + \norm{\frac{x^{k} - \mathcal{T}_i(x^{k})}{\gamma} - v_i^{k}}} \right\vert \norm{\frac{x^{k} - \mathcal{T}_i(x^{k})}{\gamma} - v_i^{k}} \\
	& \overset{\beta/(\alpha+C)<1}{\leq} & \left( 1 - \frac{\beta}{\alpha + C } \right)  C \\
	& \leq & C.
\end{eqnarray*}	

\end{proof}

Next, \Cref{lemma:bound_noise} provides the upper-bound for $\norm{e^k}$, where $e^{k+1}=e^k + \beta \frac{1}{M}\sum_{i=1}^M z_i^k$, and $z_i^k$ is the random vector with zero mean and $\sigma^2$-variance.

\begin{lemma}\label{lemma:bound_noise}
Let $e^k \in \R^d$ be governed by 	
\begin{eqnarray*}
	e^{k+1}  = e^k + \beta z^k, \quad \text{for $0 \leq k \leq K$},
\end{eqnarray*}	
where $z^k = \frac{1}{M}\sum_{i=1}^M z_i^k$ and each $z_i^k \in \R^d$ is an independent random vector satisfying
\begin{eqnarray*}
 \Exp{z_i^k} = 0, \quad \text{and} \quad \Exp{\sqnorm{z_i^k}} \leq \sigma^2.
\end{eqnarray*}
Then, 
\begin{eqnarray*}
	\Exp{\norm{e^{k+1}}} \leq \Exp{\norm{e^0}} + \sqrt{\frac{\beta^2 (K+1)\sigma^2}{M}}.
\end{eqnarray*}	
\end{lemma}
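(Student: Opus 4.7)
The plan is to unroll the linear recursion, apply the triangle inequality to separate the initial term, and then control the accumulated noise via Jensen's inequality and independence. Specifically, iterating $e^{k+1} = e^k + \beta z^k$ from step $0$ to step $K$ gives the telescoped form $e^{K+1} = e^0 + \beta \sum_{t=0}^{K} z^t$. Applying the triangle inequality yields $\norm{e^{K+1}} \leq \norm{e^0} + \beta \norm{\sum_{t=0}^{K} z^t}$, and then taking expectations reduces the problem to bounding $\Exp{\norm{\sum_{t=0}^{K} z^t}}$.

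The next step is to pass from the expected norm to the expected squared norm via Jensen's inequality (concavity of $\sqrt{\cdot}$): $\Exp{\norm{\sum_{t=0}^{K} z^t}} \leq \sqrt{\Exp{\sqnorm{\sum_{t=0}^{K} z^t}}}$. Here the zero-mean and independence assumptions on the $z_i^k$'s across both indices $i$ and $k$ become essential. Independence across $t$ eliminates the cross terms in $\Exp{\sqnorm{\sum_{t=0}^{K} z^t}}$, reducing it to $\sum_{t=0}^{K}\Exp{\sqnorm{z^t}}$. Then, since $z^t = \frac{1}{M}\sum_{i=1}^M z_i^t$ with independent zero-mean $z_i^t$, another expansion gives $\Exp{\sqnorm{z^t}} = \frac{1}{M^2}\sum_{i=1}^M \Exp{\sqnorm{z_i^t}} \leq \frac{\sigma^2}{M}$, using the stated variance bound $\Exp{\sqnorm{z_i^t}} \leq \sigma^2$.

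Combining these pieces, $\Exp{\sqnorm{\sum_{t=0}^{K} z^t}} \leq (K+1)\sigma^2/M$, so Jensen gives $\Exp{\norm{\sum_{t=0}^{K} z^t}} \leq \sqrt{(K+1)\sigma^2/M}$. Substituting back into the triangle-inequality bound yields exactly $\Exp{\norm{e^{K+1}}} \leq \Exp{\norm{e^0}} + \sqrt{\beta^2(K+1)\sigma^2/M}$, which matches the claimed statement.

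There is no serious obstacle here; the result is essentially a standard random-walk bound. The only subtle point to state carefully is the scope of the independence assumption (across both $i$ and $t$), which is needed twice — once to decorrelate noise across time in the telescoped sum, and once to average across workers within a round. Everything else is a routine application of the triangle inequality, Jensen's inequality, and bilinearity of the inner product on zero-mean independent vectors.
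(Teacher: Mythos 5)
Your proposal is correct and follows essentially the same route as the paper's proof: unroll the recursion, apply the triangle inequality, pass to the squared norm via Jensen, and use independence (across rounds and across clients) to kill the cross terms and obtain $\Exp{\sqnorm{z^t}} \leq \sigma^2/M$. In fact your intermediate identity $\Exp{\sqnorm{z^t}} = \frac{1}{M^2}\sum_{i=1}^M \Exp{\sqnorm{z_i^t}}$ is written more carefully than the paper's (which displays a $\frac{1}{M}$ prefactor but still states the correct bound $\sigma^2/M$); nothing further is needed.
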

\begin{proof}
By applying the recursion of $e^{k+1}$ recursively, 	
\begin{eqnarray*}
	e^{k+1} = e^0 + \beta \sum_{l=0}^{k} z^l.
\end{eqnarray*}	
By taking the Euclidean norm, by the triangle inequality, and by taking the expectation, 
\begin{eqnarray*}
	\Exp{\norm{e^{k+1}}} \leq  \Exp{\norm{e^0}} + \Exp{ \norm{ \beta \sum_{l=0}^{k} z^l } } .
\end{eqnarray*}	
By Jensen's inequality, 
\begin{eqnarray*}
	\Exp{\norm{e^{k+1}}} 
	& \leq &  \Exp{\norm{e^0}} + \sqrt{ \Exp{ \sqnorm{ \beta \sum_{l=0}^{k} z^l } } } \\
	& = & \Exp{\norm{e^0}} + \sqrt{ \beta^2 \sum_{l=0}^k \Exp{\sqnorm{z^l}}  + \beta^2 \sum_{i \neq j} \Exp{\inp{z^i}{z^j}} }. 
\end{eqnarray*}	
Since $z_i^k$ is independent of one another, we obtain $\Exp{\inp{z^i}{z^j}} =0$ for $i \neq j$, and $\Exp{\sqnorm{z^k}}  = \frac{1}{M}\sum_{i=1}^M \Exp{\sqnorm{z_i^k}} \leq \sigma^2/M$. Therefore, 
\begin{eqnarray*}
	\Exp{\norm{e^{k+1}}} 
	\leq \Exp{\norm{e^0}} + \sqrt{ \beta^2 \frac{(K+1)\sigma^2}{M} }. 
\end{eqnarray*}	
\end{proof}

Next, by utilizing \Cref{lemma:bound_noise},  we obtain~\Cref{lemma:fedGD_dp_pp}, which bounds $\norm{ \frac{1}{M}\sum_{i=1}^M v_i^k - \hat v^k}$, the quantity that will be applied to conclude the convergence of \ouralg.

\begin{lemma}\label{lemma:fedGD_dp_pp}
Consider \ouralg~with any local updating operator $\mathcal{T}_i(\cdot)$ for solving Problem~\eqref{eqn:problem}, where~\Cref{assum:smooth} holds. Then,  
	\begin{eqnarray*}
	    \Exp{\norm{ \frac{1}{M}\sum_{i=1}^M v_i^{k+1} - \hat v^{k+1} }}	\leq \sqrt{ \frac{\beta^2 B}{M}  (K+1)  },
	\end{eqnarray*}
	where $B= 2p (1-1/p)^2 + 2(1-p) + 2\sigma^2_{\rm DP}/p$.
\end{lemma}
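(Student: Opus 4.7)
The plan is to reduce this lemma to \Cref{lemma:bound_noise} by exhibiting the residual $E^k := \frac{1}{M}\sum_{i=1}^M v_i^k - \hat v^k$ as the output of a noise-driven recursion with zero initialization and bounded per-step variance. By the initialization prescribed in \Cref{alg:norm_norm21_dp_v1-fed}, $\hat v^0 = \frac{1}{M}\sum_i v_i^0$, so $E^0 = 0$. Subtracting the server update $\hat v^{k+1} = \hat v^k + \frac{\beta}{M}\sum_i q_i^k(\Delta_i^k + z_i^k)$ from the averaged client update $\frac{1}{M}\sum_i v_i^{k+1} = \frac{1}{M}\sum_i v_i^k + \frac{\beta}{M}\sum_i \Delta_i^k$ yields the one-line recursion
\[
E^{k+1} = E^k + \frac{\beta}{M}\sum_{i=1}^M \xi_i^k, \qquad \xi_i^k := (1-q_i^k)\Delta_i^k - q_i^k z_i^k.
\]

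The next step is to verify that $\{\xi_i^k\}$ satisfies the hypotheses of \Cref{lemma:bound_noise}. Conditioning on the history through round $k$, the vector $\Delta_i^k$ is deterministic while $q_i^k$ and $z_i^k$ are fresh samples with $\mathbb{E}[q_i^k]=1$ (since $q_i^k = 1/p$ w.p. $p$ and $0$ otherwise) and $\mathbb{E}[z_i^k]=0$; hence $\mathbb{E}[\xi_i^k] = 0$. For the second moment, the inequality $\|a-b\|^2 \leq 2\|a\|^2 + 2\|b\|^2$ together with independence of $q_i^k$ from $z_i^k$ and $\Delta_i^k$ gives
\[
\mathbb{E}\|\xi_i^k\|^2 \leq 2\,\mathbb{E}[(1-q_i^k)^2]\,\|\Delta_i^k\|^2 + 2\,\mathbb{E}[(q_i^k)^2]\,\mathbb{E}\|z_i^k\|^2.
\]
A direct computation yields $\mathbb{E}[(1-q_i^k)^2] = p(1-1/p)^2 + (1-p)$ and $\mathbb{E}[(q_i^k)^2] = 1/p$. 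Combining these with the sensitivity property of smoothed normalization (i.e.\ $\|\Delta_i^k\| \leq 1$, cited in the algorithm footnote) and $\mathbb{E}\|z_i^k\|^2 \leq \sigma_{\rm DP}^2$ yields exactly $\mathbb{E}\|\xi_i^k\|^2 \leq B$ with the constant from the statement.

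Finally, I would invoke \Cref{lemma:bound_noise} with $\xi_i^k$ playing the role of $z_i^k$, $\sigma^2 = B$, and $\|e^0\| = 0$, which delivers $\mathbb{E}\|E^{k+1}\| \leq \sqrt{\beta^2 B(K+1)/M}$. The main (really the only) obstacle is ensuring the cross terms in $\mathbb{E}\bigl\|\sum_{l,i}\xi_i^l\bigr\|^2$ vanish. This requires carefully separating the three sources of randomness -- the participation indicators $q_i^k$, the Gaussian noises $z_i^k$, and the algorithmic randomness on which $\Delta_i^k$ depends. Because the Bernoulli sampling is performed independently across clients \emph{and} across rounds, and the $z_i^k$ are fully independent, a standard tower-of-expectations argument (identical to that in the proof of \Cref{lemma:bound_noise}) kills every off-diagonal $\mathbb{E}\langle \xi_i^l,\xi_{i'}^{l'}\rangle$ with $(i,l)\neq(i',l')$, so Jensen's inequality closes the bound.
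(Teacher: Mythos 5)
Your proposal is correct and follows essentially the same route as the paper: define the residual $e^k = \frac{1}{M}\sum_i v_i^k - \hat v^k$, derive the recursion $e^{k+1} = e^k + \beta n^k$ with $n_i^k = (1-q_i^k)\Delta_i^k - q_i^k z_i^k$, bound $\Exp{\|n_i^k\|^2} \le B$ via Young's inequality, the identities $\Exp{(1-q_i^k)^2} = p(1-1/p)^2 + (1-p)$ and $\Exp{(q_i^k)^2} = 1/p$, and the sensitivity bound $\|\Delta_i^k\|\le 1$, then invoke \Cref{lemma:bound_noise} with $e^0=0$. Your explicit remark that the off-diagonal terms vanish by a tower argument (rather than literal unconditional independence, since $\Delta_i^k$ depends on the history) is a slightly more careful reading of the hypotheses of \Cref{lemma:bound_noise} than the paper gives, but it is the same argument.
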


\begin{proof}
Define $e^k \eqdef \frac{1}{M}\sum_{i=1}^M v_i^{k} - \hat v^{k}$. Then, 
\begin{eqnarray*}
e^{k+1} 
& = & \frac{1}{M}\sum_{i=1}^M v_i^{k+1} - \hat v^{k+1} \\
& \overset{v_i^{k+1},\hat v^{k+1}}{=}& \frac{1}{M}\sum_{i=1}^M v_i^k - \hat v^k  + \beta n^k \\
& = & e^k + \beta n^k,
\end{eqnarray*}
where $n^k = \frac{1}{M}\sum_{i=1}^M n_i^k$ and  $n_i^k = (1-q_i^k) \Normalize{\alpha}{\frac{x^k - \mathcal{T}_i(x^k)}{\gamma}-v_i^k}  - q_i^k z_i^k$.

Next, from~\Cref{lemma:bound_noise} with $z^k=n^k$ and $z_i^k = n_i^k$, we obtain
\begin{eqnarray*}
	\Exp{\norm{e^{k+1}}} \leq \Exp{\norm{e^0}} + \sqrt{ \frac{\beta^2(K+1) \cdot \sigma^2 }{M} },
\end{eqnarray*}	
where $\sigma^2 = \Exp{\sqnorm{n_i^k}}$.

If $\hat v^0 = \frac{1}{n}\sum_{i=1}^n v_i^0$, then 
\begin{eqnarray*}
	\Exp{\norm{e^{k+1}}} \leq  \sqrt{ \frac{\beta^2(K+1) \cdot \sigma^2 }{M} }.
\end{eqnarray*}	

To complete the upper-bound for $\Exp{\norm{e^{k+1}}}$, 
we find $\sigma^2 = \Exp{\sqnorm{n_i^k}}$. From the definition of $n_i^k$,
\begin{eqnarray*}
	\Exp{\sqnorm{n_i^k}} & = & \Exp{  \sqnorm{ (1-q_i^k) \Normalize{\alpha}{\frac{x^k - \mathcal{T}_i(x^k)}{\gamma}-v_i^k}  - q_i^k z_i^k } } \\
	& \overset{(A)}{\leq} & 2 \Exp{ (1-q_i^k)^2 \sqnorm{\Normalize{\alpha}{\frac{x^k - \mathcal{T}_i(x^k)}{\gamma}-v_i^k}} } + 2\Exp{ (q_i^k)^2 \sqnorm{z_i^k}} \\
	& \overset{(B)}{\leq} & 2 \Exp{ (1-q_i^k)^2} + 2\Exp{(q_i^k)^2  \sqnorm{z_i^k}}\\
       & \overset{(C)}{=} &  2 \Exp{ (1-q_i^k)^2} + 2\Exp{(q_i^k)^2} \Exp{ \sqnorm{z_i^k} } \\
	& \leq & 2p (1-1/p)^2 + 2(1-p) + 2 p/p^2\cdot \sigma^2_{\rm DP},
\end{eqnarray*}	
where we reach $(A)$ by the fact that $\sqnorm{x+y} \leq 2 \sqnorm{x} + 2\sqnorm{y}$ for $x,y\in\R^d$, $(B)$ by the fact that $\norm{\Normalize{\alpha}{\cdot}}\leq 1$, and $(C)$ by the fact that $q_i^k$ and $z_i^k$ are independent. 

Finally, by plugging the upper-bound for $\Exp{\sqnorm{n_i^k}}$ into the upper-bound for $\Exp{\norm{e^{k+1}}}$, we obtain the final result.

\end{proof}

Next, \Cref{lemma:descentIneq} provides the descent inequality for $f(x^k)-f^{\inf}$ in normalized gradient descent.
\begin{lemma}\label{lemma:descentIneq}
Let $f:\R^d\rightarrow\R$ be lower-bounded by $f^{\inf} > -\infty$ and $L$-smoooth, and let $x^{k} \in \R^d$ be governed by 
\begin{eqnarray*}
	x^{k+1} = x^k - \gamma \frac{G^k}{\norm{G^k}},
\end{eqnarray*}	 	
where $\gamma>0$. Then, 
\begin{eqnarray*}
	f(x^{k+1}) - f^{\inf} \leq f(x^k) - f^{\inf} - \gamma \norm{\nabla f(x^k)} + 2\gamma \norm{\nabla f(x^k) - G^k} + \frac{L\gamma^2}{2}.
\end{eqnarray*}		
\end{lemma}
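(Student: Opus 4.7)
\textbf{Proof plan for \Cref{lemma:descentIneq}.} The plan is to combine the standard $L$-smoothness descent inequality with a two-step decomposition that swaps $\nabla f(x^k)$ for $G^k$ (and then bounds $-\norm{G^k}$ by $-\norm{\nabla f(x^k)}$ via a reverse triangle inequality). The approach is routine for normalized updates, so I expect the main subtlety to be producing the factor of $2$ in front of $\norm{\nabla f(x^k)-G^k}$, which arises because the perturbation is used twice: once inside the inner product (via Cauchy--Schwarz) and once in comparing $\norm{G^k}$ to $\norm{\nabla f(x^k)}$.

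First, I would apply $L$-smoothness of $f$ to the update $x^{k+1} = x^k - \gamma G^k/\norm{G^k}$. Since the step has norm exactly $\gamma$, this yields
\begin{equation*}
f(x^{k+1}) \leq f(x^k) - \gamma \inp{\nabla f(x^k)}{\tfrac{G^k}{\norm{G^k}}} + \tfrac{L\gamma^2}{2}.
\end{equation*}
Next, I would rewrite the inner product by adding and subtracting $G^k$:
\begin{equation*}
\inp{\nabla f(x^k)}{\tfrac{G^k}{\norm{G^k}}} = \norm{G^k} + \inp{\nabla f(x^k) - G^k}{\tfrac{G^k}{\norm{G^k}}},
\end{equation*}
and apply Cauchy--Schwarz together with $\|G^k/\norm{G^k}\|=1$ to obtain
\begin{equation*}
-\gamma \inp{\nabla f(x^k)}{\tfrac{G^k}{\norm{G^k}}} \leq -\gamma \norm{G^k} + \gamma \norm{\nabla f(x^k) - G^k}.
\end{equation*}

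Finally, I would convert the $-\gamma\norm{G^k}$ term into $-\gamma\norm{\nabla f(x^k)}$ using the reverse triangle inequality $\norm{G^k} \geq \norm{\nabla f(x^k)} - \norm{\nabla f(x^k)-G^k}$, which contributes a second $\gamma\norm{\nabla f(x^k)-G^k}$ and therefore produces the factor of $2$ in the claim. Substituting these bounds back into the smoothness inequality and subtracting $f^{\inf}$ from both sides gives exactly the stated descent inequality. No additional structural assumptions (convexity, bounded gradients, etc.) are needed, and the only ingredients are $L$-smoothness, Cauchy--Schwarz, and the triangle inequality, so I do not anticipate a real obstacle beyond careful bookkeeping.
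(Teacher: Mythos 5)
Your proposal is correct and follows essentially the same route as the paper's proof: the smoothness descent inequality, the add-and-subtract of $G^k$ inside the inner product followed by Cauchy--Schwarz, and the (reverse) triangle inequality to pass from $-\gamma\norm{G^k}$ to $-\gamma\norm{\nabla f(x^k)}$, which is exactly where the factor of $2$ appears in the paper as well. No gaps.
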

\begin{proof}
By the lower-bound and smoothess of $f(\cdot)$, and by the definition of $x^{k+1}$, 
\begin{eqnarray*}
f(x^{k+1}) - f^{\inf} 
& \leq& f(x^k) - f^{\inf} - \frac{\gamma}{\norm{G^k}} \inp{\nabla f(x^k)}{G^k} +  \frac{L\gamma^2}{2} \\
& \leq& f(x^k) - f^{\inf} - \frac{\gamma}{\norm{G^k}} \inp{G^k}{G^k} + \frac{\gamma}{\norm{G^k}} \inp{G^k-\nabla f(x^k)}{G^k} +  \frac{L\gamma^2}{2} \\
& = & f(x^k) - f^{\inf} - \gamma \norm{G^k} + \frac{\gamma}{\norm{G^k}} \inp{G^k-\nabla f(x^k)}{G^k} +  \frac{L\gamma^2}{2}.
\end{eqnarray*}		

By Cauchy-Scwartz inequality, i.e. $\inp{x}{y} \leq \norm{x}\norm{y}$ for $x,y\in\R^d$, 
\begin{eqnarray*}
	f(x^{k+1}) - f^{\inf} 
	& \leq& f(x^k) - f^{\inf} - \gamma \norm{G^k} + \gamma \norm{\nabla f(x^k)-G^k} +  \frac{L\gamma^2}{2}.
\end{eqnarray*}		

Finally, by the triangle inequality, 
\begin{eqnarray*}
	f(x^{k+1}) - f^{\inf} 
	& \leq& f(x^k) - f^{\inf} - \gamma \norm{\nabla f(x^k)} + 2\gamma \norm{\nabla f(x^k)-G^k} +  \frac{L\gamma^2}{2}.
\end{eqnarray*}		
\end{proof}

Finally, \Cref{lemma:localGD_convergence} derives the sublinear convergence from the given descent inequality. 
\begin{lemma}\label{lemma:localGD_convergence}
Let $\{V^k\}$, $\{W^k\}$ be non-negative sequences satisfying
\begin{eqnarray*}
    V^{k+1} \leq (1+b_1 \gamma^2) V^k - b_2\gamma W^k + b_3 \gamma.
\end{eqnarray*}
Then,
\begin{eqnarray*}
    \underset{k \in [0,K]}{\min} W^k \leq \frac{\exp(b_1\gamma^2(K+1))}{K+1} \frac{V^0}{b_2\gamma} + \frac{b_3}{b_2}.
\end{eqnarray*}
\end{lemma}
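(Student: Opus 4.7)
The plan is to reduce the perturbed, non-contractive recursion to a telescoping one. The main obstacle is the factor $1+b_1\gamma^2 > 1$ sitting in front of $V^k$, which prevents a direct telescoping sum of the hypothesis; I will neutralize it by rescaling, i.e.\ by working with $V^k/(1+b_1\gamma^2)^k$ rather than with $V^k$ itself.

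First, I would divide both sides of the given inequality by $(1+b_1\gamma^2)^{k+1}$ to obtain
$$\frac{V^{k+1}}{(1+b_1\gamma^2)^{k+1}} \leq \frac{V^k}{(1+b_1\gamma^2)^k} - \frac{b_2\gamma\, W^k}{(1+b_1\gamma^2)^{k+1}} + \frac{b_3\gamma}{(1+b_1\gamma^2)^{k+1}}.$$
Summing this from $k=0$ to $K$ telescopes the rescaled $V$-terms, and since $V^{K+1} \geq 0$, I would discard the surviving nonnegative left-hand side term to arrive at
$$\sum_{k=0}^{K} \frac{b_2\gamma\, W^k}{(1+b_1\gamma^2)^{k+1}} \leq V^0 + \sum_{k=0}^{K} \frac{b_3\gamma}{(1+b_1\gamma^2)^{k+1}}.$$

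Next, I would lower-bound the left-hand side by $\min_{k\in[0,K]} W^k$ times the common weight $S \eqdef \sum_{k=0}^{K} (1+b_1\gamma^2)^{-(k+1)}$, and then divide through by $b_2\gamma\, S$. The additive contribution cleanly simplifies to $b_3/b_2$ independently of $S$. For the remaining $V^0/(b_2\gamma\, S)$ term, I would invoke the coarse bound $S \geq (K+1)(1+b_1\gamma^2)^{-(K+1)}$, which holds because every summand is at least as large as the last one; this produces an overall factor of $(1+b_1\gamma^2)^{K+1}/(K+1)$ in front of $V^0/(b_2\gamma)$.

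Finally, applying the elementary inequality $(1+x)^n \leq \exp(nx)$ with $x=b_1\gamma^2$ and $n=K+1$ converts the geometric factor into $\exp(b_1\gamma^2(K+1))$, matching the bound in the statement. I expect everything after the initial rescaling to be routine algebra; the only conceptual step is recognizing that dividing by the growing factor $(1+b_1\gamma^2)^{k+1}$ is precisely what unlocks the telescoping structure hidden in the non-contractive recursion.
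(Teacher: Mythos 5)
Your proof is correct and follows essentially the same route as the paper: the paper introduces geometrically decaying weights $w^k \propto (1+b_1\gamma^2)^{-(k+1)}$ and multiplies the recursion by them, which is exactly your rescaling by $(1+b_1\gamma^2)^{-(k+1)}$, followed by the same telescoping, the same lower bound on the weight sum by $(K+1)$ times its smallest term, and the same final application of $(1+x)^n \le \exp(nx)$. No meaningful difference in substance.
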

\begin{proof}
Define $w^k:=\frac{w^{k+1}}{1+b_1\gamma^2}$ for all $k \geq 0$. Then, 
\begin{eqnarray*}
   w^k W^k 
   &  \leq & \frac{w^k(1+b_1\gamma^2) V^k}{b_2\gamma} - \frac{w^kV^{k+1}}{b_2\gamma}  + \frac{b_3}{b_2}\\
   & = & \frac{w^{k-1}V^k - w^k V^{k+1}}{b_2\gamma} + \frac{b_3}{b_2}.
\end{eqnarray*}
By summing the inequality over $k=0,1,\ldots,K$, 
\begin{eqnarray*}
   \sum_{k=0}^{K} w^k W^k  
   & \leq & \frac{w^{-1}V^0 - w^K V^{K+1}}{b_2\gamma} + \frac{b_3}{b_2} \sum_{k=0}^K w^k \\
   & \overset{w^k,V^k \geq 0}{\leq} & \frac{w^{-1}V^0}{b_2\gamma} + \frac{b_3}{b_2} \sum_{k=0}^K w^k.
\end{eqnarray*}
Therefore,
\begin{eqnarray*}
    \min_{k\in[0,K]} W^k 
    & \leq & \frac{1}{\sum_{k=0}^K w^k}  \sum_{k=0}^{K} w^k W^k  \\
    & \leq & \frac{w^{-1}V^0}{b_2\gamma \sum_{k=0}^K w^k} + \frac{b_3}{b_2}. 
\end{eqnarray*}
Next, since 
\begin{eqnarray*}
    \sum_{k=0}^K w^k 
    & \geq & (K+1) \min_{k\in[0,K]} w^k  \\
    & = & (K+1) w^{K+1} \\
    & = & \frac{(K+1)w^{-1}}{(1+b_1\gamma^2)^{K+1}},
\end{eqnarray*}
we get 
\begin{eqnarray*}
    \min_{k\in[0,K]} W^k 
    \leq \frac{(1+ b_1\gamma^2)^{K+1}V^0}{b_2\gamma (K+1) } + \frac{b_3}{b_2}. 
\end{eqnarray*}

Finally, since $1+x \leq \exp(x)$, we have $(1+b_1\gamma^2)^{K+1} \leq \exp(b_1\gamma^2(K+1) )$. Using this result, we complete the proof.

\end{proof}

\newpage 
\section{Multiple Local GD Steps}

We derive the convergence theorem (\Cref{thm:DP_PP_full}) and associated corollaries of \ouralg using multiple local gradient descent (GD) steps.
The multiple local GD steps have the following update rule: 
$$\mathcal{T}_i(x^k)=x^k- \frac{\gamma}{T} \sum_{j=0}^{T-1} \nabla f_i(x_i^{k,j}).$$ 
Here, 
$x_i^{k,j}$ is updated according to: 
$$
    x^{k,j+1}_i = x^{k,j}_i - \frac{\gamma}{T} \nabla f_{i}(x^{k,j}_i) \quad \text{for} \quad j=0,1,\ldots,T-1.
$$

\subsection{Key Lemmas}

We begin by introducing two key lemmas for analyzing \ouralg using multiple local GD steps.
\Cref{lemma:localGD_avg_gradient_norm} bounds $\frac{1}{M}\sum_{i=1}^M \norm{\nabla f_i(x)}$, while~\Cref{lemma:localGD_recursion} proves the properties of local GD steps.

\begin{lemma}\label{lemma:localGD_avg_gradient_norm}
Let $f$ be bounded from below by $f^{\inf} > -\infty$, and each $f_i$ be bounded from below by $f_i^{\inf} > -\infty$ and $L$-smooth. Then, 
\begin{eqnarray*}
    \frac{1}{M}\sum_{i=1}^M \norm{\nabla f_i(x)} 
    \leq  \sqrt{\frac{2L}{\Delta^{\inf}}}[f(x)-f^{\inf}] + \sqrt{2L \Delta^{\inf}},
\end{eqnarray*}
where $\Delta^{\inf} = f^{\inf} - \frac{1}{M}\sum_{i=1}^M  f_i^{\inf}  \geq 0$.
\end{lemma}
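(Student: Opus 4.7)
\textbf{Proof plan for Lemma on bounding $\tfrac{1}{M}\sum_{i=1}^M\|\nabla f_i(x)\|$.} The plan is to move from a per-client gradient bound (obtained from smoothness and the lower bound of each $f_i$) up to the claimed averaged bound via Jensen's inequality, then introduce the global quantities $f(x)-f^{\inf}$ and $\Delta^{\inf}$ through a carefully chosen algebraic inequality.

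First I would invoke the standard consequence of $L$-smoothness together with the lower bound $f_i \geq f_i^{\inf}$: for any $x \in \R^d$,
\begin{equation*}
\norm{\nabla f_i(x)}^2 \leq 2L\br{f_i(x) - f_i^{\inf}}.
\end{equation*}
Taking square roots and averaging over $i \in [1,M]$,
\begin{equation*}
\frac{1}{M}\sum_{i=1}^M \norm{\nabla f_i(x)} \leq \frac{1}{M}\sum_{i=1}^M \sqrt{2L\br{f_i(x) - f_i^{\inf}}}.
\end{equation*}

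Next I would apply Jensen's inequality to the concave map $t \mapsto \sqrt{t}$ and rewrite the resulting expression using the identity $\tfrac{1}{M}\sum_{i=1}^M f_i(x) = f(x)$ and the definition of $\Delta^{\inf}$:
\begin{equation*}
\frac{1}{M}\sum_{i=1}^M \sqrt{2L\br{f_i(x) - f_i^{\inf}}} \leq \sqrt{2L \cdot \frac{1}{M}\sum_{i=1}^M \br{f_i(x) - f_i^{\inf}}} = \sqrt{2L\br{\sbr{f(x)-f^{\inf}} + \Delta^{\inf}}}.
\end{equation*}

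The main (mildly tricky) step is converting this square-root bound into the claimed affine-in-$(f(x)-f^{\inf})$ form. For this I would use the elementary inequality $\sqrt{A+B} \leq \tfrac{A}{\sqrt{B}} + \sqrt{B}$, valid for $A \geq 0$ and $B > 0$ (easily verified by squaring the right-hand side, since $(A/\sqrt{B} + \sqrt{B})^2 = A^2/B + 2A + B \geq A + B$). Applying this with $A = f(x) - f^{\inf} \geq 0$ and $B = \Delta^{\inf} > 0$ yields exactly
\begin{equation*}
\sqrt{2L\br{\sbr{f(x)-f^{\inf}} + \Delta^{\inf}}} \leq \sqrt{\frac{2L}{\Delta^{\inf}}}\sbr{f(x)-f^{\inf}} + \sqrt{2L\Delta^{\inf}},
\end{equation*}
which combines with the two displayed estimates above to give the claim. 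The only caveat is the degenerate case $\Delta^{\inf} = 0$, where the stated bound is vacuous (the first coefficient is infinite) and nothing needs to be proved; I would mention this briefly. No real obstacle beyond picking the right $\sqrt{A+B}$ decomposition — the rest is routine smoothness-plus-Jensen bookkeeping.
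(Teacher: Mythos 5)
Your proposal is correct and follows essentially the same route as the paper: the $L$-smoothness bound $\|\nabla f_i(x)\|^2 \le 2L(f_i(x)-f_i^{\inf})$, Jensen's inequality to pass to $\sqrt{2L([f(x)-f^{\inf}]+\Delta^{\inf})}$ (the paper applies Jensen as "average of norms $\le$ root of average of squares" rather than to the concave square root, but the intermediate bound is identical), and the same elementary splitting $\sqrt{A+B}\le A/\sqrt{B}+\sqrt{B}$, which the paper carries out by writing $\sqrt{A+B}$ as $(A+B)/\sqrt{A+B}$ and lower-bounding the denominator by $\sqrt{B}$. Your remark on the degenerate case $\Delta^{\inf}=0$ is a sensible addition that the paper omits.
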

\begin{proof}
Let $f$ be bounded from below by $f^{\inf} > -\infty$, and each $f_i$ be bounded from below by $f_i^{\inf} > -\infty$ and $L$-smooth. Then, 
\begin{eqnarray*}
    \sqnorm{\nabla f_i(x)} \leq 2L[f_i(x)-f_i^{\inf}].
\end{eqnarray*}
Therefore, 
\begin{eqnarray*}
    \frac{1}{M}\sum_{i=1}^M \sqnorm{\nabla f_i(x)} \leq A[f(x)-f^{\inf}] + B,
\end{eqnarray*}
where $A = 2L$, $B = 2L \Delta^{\inf}$, and $\Delta^{\inf}=f^{\inf} - \frac{1}{M}\sum_{i=1}^M  f_i^{\inf} \geq 0$. Thus, we obtain
\begin{eqnarray*}
    \frac{1}{M}\sum_{i=1}^M \norm{\nabla f_i(x)} 
    & \overset{\text{Jensen's inequality}}{\leq} & \sqrt{  \frac{1}{M}\sum_{i=1}^M \sqnorm{\nabla f_i(x)} } \\
    & \leq & \sqrt{A[f(x)-f^{\inf}] + B} \\
    & = & \frac{A[f(x)-f^{\inf}] + B }{\sqrt{A[f(x)-f^{\inf}] + B } } \\
    & \overset{f(x) \geq f^{\inf}}{\leq} & \frac{A}{\sqrt{B}}[f(x)-f^{\inf}] + \sqrt{B}.
\end{eqnarray*}
\end{proof}

\begin{lemma}\label{lemma:localGD_recursion}
Let each $f_i$ be $L$-smooth, and  let $\mathcal{T}_i(x^k)=x^k- \frac{\gamma}{T} \sum_{j=0}^{T-1} \nabla f_i(x_i^{k,j})$, where the sequence $\{x_i^{k,l}\}$ is generated by 
\begin{eqnarray*}
    x_i^{k,l+1} = x_i^{k,l} - \frac{\gamma}{T}\nabla f_i(x_i^{k,l}), \quad \text{for} \quad l=0,1,\ldots,T-1,
\end{eqnarray*}
given that $x_i^{k,0}=x^k$. If $\gamma \leq \frac{1}{2L}$, and $\norm{x^{k+1}-x^k} \leq \eta$ with $\eta>0$, then 
\begin{enumerate}
    \item $x_i^{k,l} = x^k - \frac{\gamma}{T}\sum_{j=0}^{l-1} \nabla f_i(x_i^{k,l})$.  
    \item $\frac{1}{T}\sum_{j=0}^{T-1} \norm{x_i^{k+1,j}-x_i^{k,j}} \leq 2\eta $. 
    \item $\frac{1}{T}\sum_{j=0}^{T-1}\norm{x^k-x_i^{k,j}} \leq 2\gamma  \norm{\nabla f_i(x^k)}$.
    \item $\norm{\mathcal{T}_i(x^{k+1}) - \mathcal{T}_i(x^k)} \leq 2\eta$.
    \item $\norm{(x^k-\gamma\nabla f_i(x^k)) - \mathcal{T}_i(x^k)} \leq 2L\gamma^2 \norm{\nabla f_i(x^k)}$.
\end{enumerate}
\end{lemma}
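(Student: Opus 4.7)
The plan is to prove the five parts in order, since each builds on the previous. Part 1 is a straightforward induction on $l$: the recursion $x_i^{k,l+1} = x_i^{k,l} - (\gamma/T)\nabla f_i(x_i^{k,l})$ with base case $x_i^{k,0} = x^k$ telescopes immediately into $x^k - (\gamma/T)\sum_{j=0}^{l-1}\nabla f_i(x_i^{k,j})$. The remaining parts all reduce to discrete Grönwall-type arguments enabled by the stepsize condition $\gamma \leq 1/(2L)$, which makes the smoothness constant $L\gamma$ small enough to absorb the amplification across $T$ inner steps.

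For part 2, I would use part 1 to write the difference $x_i^{k+1,j} - x_i^{k,j} = (x^{k+1}-x^k) - (\gamma/T)\sum_{l=0}^{j-1}(\nabla f_i(x_i^{k+1,l}) - \nabla f_i(x_i^{k,l}))$, take norms, and apply $L$-smoothness to obtain $u_j \leq \eta + (L\gamma/T)\sum_{l=0}^{j-1} u_l$ where $u_j = \|x_i^{k+1,j} - x_i^{k,j}\|$. Discrete Grönwall then gives $u_j \leq \eta(1+L\gamma/T)^j \leq \eta e^{L\gamma} \leq \eta e^{1/2} \leq 2\eta$. For part 3, apply the same strategy to $v_j = \|x^k - x_i^{k,j}\|$: from part 1, $v_j \leq (\gamma j / T)\|\nabla f_i(x^k)\| + (L\gamma/T)\sum_{l=0}^{j-1} v_l$ after inserting $\pm \nabla f_i(x^k)$ and using smoothness; Grönwall yields $v_j \leq \gamma \|\nabla f_i(x^k)\| e^{L\gamma} \leq 2\gamma \|\nabla f_i(x^k)\|$, and averaging preserves the bound.

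Part 4 is a direct consequence of parts 1 and 2: the difference $\mathcal{T}_i(x^{k+1})-\mathcal{T}_i(x^k)$ equals $(x^{k+1}-x^k) - (\gamma/T)\sum_{j=0}^{T-1}(\nabla f_i(x_i^{k+1,j})-\nabla f_i(x_i^{k,j}))$, so triangle inequality with $L$-smoothness gives $\|\mathcal{T}_i(x^{k+1})-\mathcal{T}_i(x^k)\| \leq \eta + L\gamma \cdot (1/T)\sum_j \|x_i^{k+1,j}-x_i^{k,j}\| \leq \eta + 2L\gamma \eta \leq 2\eta$ under $2L\gamma \leq 1$. Part 5 uses part 1 to rewrite $(x^k - \gamma\nabla f_i(x^k)) - \mathcal{T}_i(x^k) = (\gamma/T)\sum_{j=0}^{T-1}(\nabla f_i(x_i^{k,j}) - \nabla f_i(x^k))$, then smoothness and part 3 deliver $\|(x^k-\gamma\nabla f_i(x^k)) - \mathcal{T}_i(x^k)\| \leq L\gamma \cdot (1/T)\sum_j \|x_i^{k,j}-x^k\| \leq 2L\gamma^2 \|\nabla f_i(x^k)\|$.

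The principal technical obstacle is ensuring that the Grönwall-type recursions in parts 2 and 3 yield constants independent of $T$, which is exactly what the condition $\gamma \leq 1/(2L)$ buys us: since $(1+L\gamma/T)^T \leq e^{L\gamma} \leq e^{1/2} < 2$, the local iterates stay in a controlled neighborhood regardless of how many local steps are taken. Once this is secured, parts 4 and 5 become plug-in applications. No additional assumptions (e.g.\ bounded heterogeneity) are needed, which is precisely why these estimates are the right tool to feed into the novel convergence analysis of \ouralg described in the red text.
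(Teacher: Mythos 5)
Your proposal is correct and follows essentially the same route as the paper: unroll the local recursion (part 1), apply the triangle inequality and $L$-smoothness, and use $L\gamma \le 1/2$ to control the amplification over the $T$ inner steps, with parts 4 and 5 obtained as plug-in applications of parts 2 and 3. The only (harmless) difference is in how the self-referential recursions of parts 2--3 are closed: you use a per-iterate discrete Gr\"onwall bound $(1+L\gamma/T)^j \le e^{L\gamma} \le e^{1/2} < 2$, whereas the paper sums the inequality over all $j$ and absorbs the term $L\gamma\sum_{j}\norm{\cdot}$ into the left-hand side; both give the same constants, and your version in fact yields the slightly stronger per-iterate bounds.
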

\begin{proof}
We prove the first statement by recursively applying the equation for $x_i^{k,j+1}$ for $j=0,1,\ldots,l-1$. 

Next, we prove the second statement. From the definition of the Euclidean norm, 
\begin{eqnarray*}
\norm{x^{k+1,l}_i - x_i^{k,l}}
& \overset{x_i^{k,j}}{=} & \norm{ x^{k+1} - x^k - \frac{\gamma}{T} \sum_{j=0}^{l-1}(\nabla f_i(x_i^{k+1,j}) - \nabla f_i(x_i^{k,j})) } \\
& \overset{\text{triangle inequality}}{\leq} & \norm{x^{k+1}-x^k} + \frac{\gamma}{T}\sum_{j=0}^{l-1}\norm{\nabla f_i(x_i^{k+1,j}) - \nabla f_i(x_i^{k,j})} \\
& \overset{\text{$L$-smoothness of $f_i$}}{\leq} & \norm{x^{k+1}-x^k} + \frac{L\gamma}{T}\sum_{j=0}^{l-1}\norm{x_i^{k+1,j} - x_i^{k,j}}.
\end{eqnarray*}

If $\norm{x^{k+1}-x^k} \leq \eta$ with $\eta>0$, then 
\begin{eqnarray*}
\norm{x^{k+1,l}_i - x_i^{k,l}}
 & \leq & \eta+ \frac{L\gamma}{T} \sum_{j=0}^{l-1} \norm{ x_i^{k+1,j} - x_i^{k,j} } \\
 & \overset{l \leq T}{\leq} & \eta+ \frac{L\gamma}{T} \sum_{j=0}^{T-1} \norm{ x_i^{k+1,j} - x_i^{k,j} }
\end{eqnarray*}
Therefore, 
\begin{eqnarray*}
\sum_{j=0}^{T-1}\norm{x^{k+1,j}_i - x_i^{k,j}} \leq \eta T+ L\gamma \sum_{j=0}^{T-1} \norm{ x_i^{k+1,j} - x_i^{k,j} }.
\end{eqnarray*}

If $\gamma \leq \frac{1}{2L}$, then $L\gamma \leq 1/2$, and 
\begin{eqnarray*}
\frac{1}{T}\sum_{j=0}^{T-1}\norm{x^{k+1,j}_i - x_i^{k,j}} \leq 2\eta .
\end{eqnarray*}

Next, we prove the third statement. From the definition of the Euclidean norm, 
\begin{eqnarray*}
    \norm{x^k - x_i^{k,j}} 
    & \overset{x_i^{k,l}}{=} & \norm{\frac{\gamma}{T}\sum_{j=0}^{l-1} \nabla f_i(x_i^{k,j})} \\
    & = & \norm{\frac{\gamma}{T}\sum_{j=0}^{l-1} [\nabla f_i(x_i^{k,j})-\nabla f_i(x^k) + \nabla f_i(x^k)]}. 
\end{eqnarray*}
By the triangle inequality, and by the $L$-smoothness of $f_i(\cdot)$, 
\begin{eqnarray*}
    \norm{x^k - x_i^{k,j}} 
    & \leq & \frac{\gamma}{T}\sum_{j=0}^{l-1} \norm{ \nabla f_i(x_i^{k,j})-\nabla f_i(x^k)} + \frac{\gamma}{T}\sum_{j=0}^{l-1}\norm{\nabla f_i(x^k)} \\
    & \leq & \frac{L\gamma}{T}\sum_{j=0}^{l-1} \norm{ x_i^{k,j}-x^k} + \frac{\gamma}{T}\sum_{j=0}^{l-1}\norm{\nabla f_i(x^k)}.
\end{eqnarray*}
By the fact that $l\leq T$ and that $\norm{x} \geq 0$ for $x\in\R^d$, 
\begin{eqnarray*}
    \norm{x^k - x_i^{k,j}} 
    & \leq & \frac{L\gamma}{T}\sum_{j=0}^{T-1} \norm{ x_i^{k,j}-x^k} + \gamma \norm{\nabla f_i(x^k)}.
\end{eqnarray*}
Therefore, 
\begin{eqnarray*}
    \sum_{j=0}^{T-1} \norm{x^k - x_i^{k,j}} 
    & \leq & L\gamma \sum_{j=0}^{T-1} \norm{ x_i^{k,j}-x^k} + \gamma T \norm{\nabla f_i(x^k)}.
\end{eqnarray*}

If $\gamma \leq \frac{1}{2L}$, then $L\gamma \leq 1/2$, and 
\begin{eqnarray*}
\sum_{j=0}^{T-1} \norm{x^k - x_i^{k,j}} \leq 2\gamma T \norm{\nabla f_i(x^k)}.
\end{eqnarray*}

Next, we prove the fourth statement. From the definition of $\mathcal{T}_i(x^k)$, 
\begin{eqnarray*}
    \norm{ \mathcal{T}_i(x^{k+1}) - \mathcal{T}_i(x^k)}
    & = & \norm{x^{k+1} - x^k - \frac{\gamma}{T}\sum_{j=0}^{l-1} [\nabla f_i(x_i^{k,l+1}) - \nabla f_i(x_i^{k,l})] }. 
\end{eqnarray*}
By the triangle inequality, and by the $L$-smoothness of $f_i(\cdot)$, 
\begin{eqnarray*}
    \norm{\mathcal{T}_i(x^{k+1}) - \mathcal{T}_i(x^k)}
    & \leq & \norm{x^{k+1} - x^k} + \frac{\gamma}{T}\sum_{j=0}^{l-1} \norm{\nabla f_i(x_i^{k,l+1}) - \nabla f_i(x_i^{k,l}) } \\
    & \leq & \norm{x^{k+1} - x^k} + \frac{L\gamma}{T}\sum_{j=0}^{l-1} \norm{ x_i^{k,l+1} - x_i^{k,l} }. 
\end{eqnarray*}
By the fact that $\norm{x^{k+1}-x^k} \leq \eta$, that $l \leq T$, and that $\sum_{j=0}^{T-1} \norm{x_i^{k+1,j}-x_i^{k,j}} \leq 2\eta T$, 
\begin{eqnarray*}
    \norm{\mathcal{T}_i(x^{k+1}) - \mathcal{T}_i(x^k)} \leq \eta + L\gamma \cdot 2\eta \overset{L\gamma \leq 1/2}{\leq} 2\eta.
\end{eqnarray*}

Finally, we prove the fifth statement. From the definition of $\mathcal{T}_i(x^k)$,
\begin{eqnarray*}
    \norm{(x^k-\gamma\nabla f_i(x^k)) - \mathcal{T}_i(x^k)}= \norm{ \left(x^k-\frac{\gamma}{T}\sum_{l=0}^{T-1}\nabla f_i(x^k) \right) - \left(x^k - \frac{\gamma}{T}\sum_{l=0}^{T-1} \nabla f_i(x_i^{k,l})\right) }.
\end{eqnarray*}
By the triangle inequality, the $L$-smoothness of $f_i(\cdot)$, and the fact that $$\sum_{j=0}^{T}\norm{x^k-x_i^{k,j}} \leq 2\gamma T \norm{\nabla f_i(x^k)},$$
we obtain
\begin{eqnarray*}
    \norm{(x^k-\gamma\nabla f_i(x^k)) - \mathcal{T}_i(x^k)}
    & \leq & \frac{\gamma}{T} \sum_{l=0}^{T-1} \norm{\nabla f_i(x^k)-\nabla f_i(x_i^{k,l})} \\
    & \leq & \frac{L\gamma}{T} \sum_{l=0}^{T-1} \norm{x^k -x_i^{k,l}} \\
    & \leq & 2L\gamma^2 \norm{\nabla f_i(x^k)}.
\end{eqnarray*}

\end{proof}

\subsection{Proof of Theorem~\ref{thm:DP_PP_full}}

We prove \Cref{thm:DP_PP_full}, which we restate below:

\begin{theorem*}[\ouralg with local GD steps]\label{thm:FedNormEC_GD_PP_DP_multiple}
Consider \ouralg~for solving Problem~\eqref{eqn:problem} where~\Cref{assum:smooth} holds.    
Let  $\mathcal{T}_i(x^k)=x^k- \gamma\frac{1}{T} \sum\limits_{j=0}^{T-1} \nabla f_{i}(x_i^{k,j})$, where the sequence $\{x_i^{k,j}\}$ is generated by 
$
    x_i^{k,j+1} = x_i^{k,j} - \frac{\gamma}{T}\nabla f_i(x_i^{k,j}), \quad \text{for} \quad j=0,1,\ldots,T-1,
$
given that $x_i^{k,0}=x^k$. Furthermore, let $\beta,\alpha>0$ be chosen such that $\frac{\beta}{\alpha+R} < 1$ with $R = \max_{i \in [1,M]} \norm{v_i^0 - \frac{x^0 - \mathcal{T}_i(x^0)}{\gamma}}$.
{\color{blue} If $\gamma = \frac{1}{2L}$ and $\eta \leq \min\left( \frac{1}{K+1}  \frac{\Delta^{\inf}}{2\sqrt{2L}} , \frac{1}{4L}\frac{\beta R}{\alpha + R} \right)$, then}
\begin{align*}
    \underset{k \in [0,K]}{\min} \Exp{\norm{\nabla f(x^k)}}  \leq& \frac{3}{K+1} \frac{f(x^0)-f^{\inf}}{\eta} + 2R + 2\sqrt{\frac{\beta^2 B}{M}(K+1)} \\ 
    & + \gamma \cdot   \mathbb{I}_{T\neq 1} \left[8L\sqrt{2L}  \sqrt{\Delta^{\inf}} \right]   + \eta \cdot \frac{L}{2},
\end{align*}
where 
$B= 2p (1-1/p)^2 + 2(1-p) + 2\sigma^2_{\rm DP}/p$, and $\Delta^{\inf} = f^{\inf} - \frac{1}{M}\sum_{i=1}^M f_i^{\inf} \geq 0$. 
\end{theorem*}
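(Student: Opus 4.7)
The plan is to start from the normalized-descent inequality of \Cref{lemma:descentIneq} applied with stepsize $\eta$ (playing the role of $\gamma$ in that lemma) and direction $G^k = \hat v^{k+1}$, namely
\begin{align*}
f(x^{k+1}) - f^{\inf} \leq f(x^k) - f^{\inf} - \eta \norm{\nabla f(x^k)} + 2\eta \norm{\nabla f(x^k) - \hat v^{k+1}} + \tfrac{L\eta^2}{2},
\end{align*}
and then control the tracking error $\norm{\nabla f(x^k) - \hat v^{k+1}}$. I would split this error via the triangle inequality into three pieces: (i) a local-drift piece $\norm{\nabla f(x^k) - \tfrac{1}{M}\sum_i \tfrac{x^k - \mathcal{T}_i(x^k)}{\gamma}}$ measuring the gap between $T$ local GD steps and a single gradient evaluation, (ii) an EF21 piece $\norm{\tfrac{1}{M}\sum_i \tfrac{x^k - \mathcal{T}_i(x^k)}{\gamma} - \tfrac{1}{M}\sum_i v_i^{k+1}}$, and (iii) a sampling/noise piece $\norm{\tfrac{1}{M}\sum_i v_i^{k+1} - \hat v^{k+1}}$ due to partial participation and DP noise.

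For piece (i) I would apply statement 5 of \Cref{lemma:localGD_recursion} to get $\norm{\tfrac{x^k - \mathcal{T}_i(x^k)}{\gamma} - \nabla f_i(x^k)} \leq 2L\gamma \norm{\nabla f_i(x^k)}$ and then bound the averaged gradient norm using \Cref{lemma:localGD_avg_gradient_norm}, producing the heterogeneity contribution $\sqrt{2L\Delta^{\inf}}$ together with an unwelcome linear coupling to $f(x^k)-f^{\inf}$. The $2L\gamma$ prefactor here is exactly why this contribution vanishes when $T=1$, which in turn is why the final bound carries the indicator $\mathbb{I}_{T\neq 1}$. For piece (ii) I would set up an induction: I expect the invariant $\norm{\tfrac{x^k - \mathcal{T}_i(x^k)}{\gamma} - v_i^k} \leq R$ to be preserved for all $k$ by \Cref{lemma:bound_v_and_T}, feeding in statement 4 of \Cref{lemma:localGD_recursion} (which provides the non-expansive-style bound $\norm{\mathcal{T}_i(x^{k+1}) - \mathcal{T}_i(x^k)} \leq 2\eta$, i.e., $\rho = 2$) together with the exact equality $\norm{x^{k+1}-x^k} = \eta$ from the normalized server step; this is precisely where the second stepsize cap $\eta \leq \tfrac{1}{4L}\tfrac{\beta R}{\alpha+R}$ is spent. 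Piece (iii) is already done in expectation by \Cref{lemma:fedGD_dp_pp}, delivering the contribution $\sqrt{\beta^2 B (K+1)/M}$ with the stated $B$.

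Substituting all three bounds back into the descent inequality with $\gamma = 1/(2L)$ and taking expectation, I would arrive at a recursion of the form
\begin{align*}
\Exp{f(x^{k+1})-f^{\inf}} \leq (1+c_1\eta)\Exp{f(x^k)-f^{\inf}} - \eta \Exp{\norm{\nabla f(x^k)}} + c_2 \eta,
\end{align*}
where $c_1 \propto \sqrt{2L/\Delta^{\inf}}$ is inherited from piece (i), and $c_2$ collects $R$, the heterogeneity term $\sqrt{2L\Delta^{\inf}}$ (gated by $\mathbb{I}_{T\neq 1}$), the DP-amplified noise from piece (iii), and the $L\eta/2$ smoothness term. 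I would then invoke \Cref{lemma:localGD_convergence}: the first stepsize cap $\eta \leq \tfrac{1}{K+1}\tfrac{\Delta^{\inf}}{2\sqrt{2L}}$ keeps $c_1\eta(K+1)$ bounded, so $\exp(c_1\eta(K+1))$ is an absolute constant (absorbed into the leading factor $3$ in front of $\tfrac{f(x^0)-f^{\inf}}{\eta(K+1)}$), and rearrangement reproduces the claimed bound. The main obstacle I anticipate is exactly piece (i): the local-drift bound forces a linear coupling to the sub-optimality $f(x^k)-f^{\inf}$ that no earlier EF21-style analysis (including that of \prioralg) has to deal with, and it is this coupling that makes a fundamentally new analytic route, built around \Cref{lemma:localGD_convergence} and a vanishing $\eta \sim 1/(K+1)$ schedule, necessary; it is also why heterogeneity surfaces through the intrinsic quantity $\Delta^{\inf}$ rather than through an external uniform bounded-heterogeneity assumption.
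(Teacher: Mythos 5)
Your proposal follows essentially the same route as the paper's proof: the same descent inequality with $G^k=\hat v^{k+1}$, the same three-way split of the tracking error handled by \Cref{lemma:localGD_recursion} (statements 4 and 5), \Cref{lemma:localGD_avg_gradient_norm}, the induction via \Cref{lemma:bound_v_and_T} with $\rho=2$, and \Cref{lemma:fedGD_dp_pp}, concluded by \Cref{lemma:localGD_convergence} exactly as in the paper. The only cosmetic slip is attributing the vanishing of the drift term at $T=1$ to the $2L\gamma$ prefactor --- it vanishes because $\mathcal{T}_i(x^k)=x^k-\gamma\nabla f_i(x^k)$ holds exactly in that case --- but this does not affect the argument.
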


\begin{proof}

We prove the theorem in the following steps.

\paragraph{Step 1) Bound  $\norm{v_i^k - \frac{x^k - \mathcal{T}_i(x^k)}{\gamma}}$ by induction, and bound $\norm{v_i^{k+1} - \frac{x^k - \mathcal{T}_i(x^k)}{\gamma}}$.}
We prove by induction that  $$\norm{v_i^k - \frac{x^k - \mathcal{T}_i(x^k)}{\gamma}} \leq \max_{i\in[1,M]}\norm{v_i^0 - \frac{x^0 - \mathcal{T}_i(x^0)}{\gamma}}.$$ 
It is trivial to show the condition when $k=0$. Next, suppose that   $\norm{v_i^k - \frac{x^k - \mathcal{T}_i(x^k)}{\gamma}} \leq \max_{i\in[1,M]} \norm{v_i^0 - \frac{x^0 - \mathcal{T}_i(x^0)}{\gamma}}$ holds.  From~\Cref{lemma:localGD_recursion}, $\mathcal{T}_i(x^k)=x^k- \frac{\gamma}{T} \sum \limits_{j=0}^{T-1} \nabla f_i(x_i^{k,j})$ satisfies 
\begin{eqnarray*}
	\norm{\mathcal{T}_i(x^{k+1})-\mathcal{T}_i(x^k)} 
	\leq 2\eta.
\end{eqnarray*}
 Therefore, from Lemma~\ref{lemma:bound_v_and_T} with $\rho = 2$, $C = R = \max_{i\in[1,M]}\norm{v_i^0 -\frac{x^0 - \mathcal{T}_i(x^0)}{\gamma}}$, we can prove that by choosing $\frac{\beta}{\alpha + R} < 1$ and $\eta \leq \frac{\gamma\beta R}{(1+\rho) (\alpha +R)}$, $\norm{v_i^{k+1} - \frac{x^{k+1} - \mathcal{T}_i(x^{k+1})}{\gamma}} \leq R$. We complete the induction proof. 

Next, from Lemma~\ref{lemma:bound_v_and_T}, $\norm{v_i^{k+1} - \frac{x^k - \mathcal{T}_i(x^k)}{\gamma}} \leq \max_{i\in[1,M]}\norm{v_i^0 - \frac{x^0 - \mathcal{T}_i(x^0)}{\gamma}}$.

\paragraph{Step 2) Bound $f(x^k)-f^{\inf}$.}
From~\Cref{lemma:descentIneq} with $G^k = \hat v^{k+1}$, 
\begin{eqnarray*}
	f(x^{k+1}) - f^{\inf} 
	& \leq & 	f(x^{k}) - f^{\inf} - \eta \norm{\nabla f(x^k)} + 2\eta \norm{\nabla f(x^k)-\hat v^{k+1}} + \frac{L\eta^2}{2} \\
	& \overset{\text{triangle inequality}}{\leq} & 	f(x^{k}) - f^{\inf} - \eta \norm{\nabla f(x^k)} + 2\eta \norm{\nabla f(x^k)- v^{k+1}}  \\
    && + 2\eta \norm{\hat v^{k+1} - v^{k+1}} + \frac{L\eta^2}{2},
\end{eqnarray*}	
where  $v^{k+1}  =  \frac{1}{M}\sum^M_{i=1} v^{k+1}_i$.
Next, since 
\begin{eqnarray*}
	\norm{\nabla f(x^k)-v^{k+1}} 
	& = & \norm{ \nabla f(x^k) -     \frac{1}{M}\sum_{i=1}^M v^{k+1}_i  } \\
	& \overset{ \text{triangle inequality} }{\leq} & \frac{1}{M}\sum_{i=1}^M  \norm{v_i^{k+1}-  \nabla f_i(x^k)} \\
    & \overset{ \text{triangle inequality} }{\leq} & \frac{1}{M}\sum_{i=1}^M  \norm{v_i^{k+1}- \frac{x^k - \mathcal{T}_i(x^k)}{\gamma}}+ \frac{1}{M}\sum_{i=1}^M \norm{\frac{x^k - \mathcal{T}_i(x^k)}{\gamma} - \nabla f_i(x^k)},  
\end{eqnarray*}	
where $\mathcal{T}_i(x^k)=x^k- \frac{\gamma}{T} \sum_{j=0}^{T-1} \nabla f_i(x_i^{k,j})$, we get 
\begin{align*}
    	\norm{\nabla f(x^k)-v^{k+1}}  \leq \frac{1}{M}\sum_{i=1}^M  \norm{v_i^{k+1}- \frac{x^k - \mathcal{T}_i(x^k)}{\gamma}} + \frac{1}{\gamma}\frac{1}{M}\sum_{i=1}^M \norm{x^k - \mathcal{T}_i(x^k) - \gamma\nabla f_i(x^k)}.
\end{align*}
Plugging the upperbound for $\norm{\nabla f(x^k)-v^{k+1}}$ into the main inequality in $f(x^k)-f^{\inf}$, we obtain  
\begin{eqnarray*}
	f(x^{k+1}) - f^{\inf} 
	& \leq & 	f(x^{k}) - f^{\inf} - \eta \norm{\nabla f(x^k)} + 2\eta \frac{1}{M}\sum_{i=1}^M  \norm{v_i^{k+1} - \frac{x^k - \mathcal{T}_i(x^k)}{\gamma}}   \\
    && + \frac{2\eta}{\gamma} \frac{1}{M}\sum_{i=1}^M \norm{(x^k-\gamma\nabla f_i(x^k)) - \mathcal{T}_i(x^k)} + 2\eta \norm{\hat v^{k+1} - v^{k+1}}+ \frac{L\eta^2}{2}.
\end{eqnarray*}

By the fact that $\norm{v_i^{k+1} - \frac{x^k - \mathcal{T}_i(x^k)}{\gamma}} \leq R$ from Step 1), 
\begin{eqnarray*}
	f(x^{k+1}) - f^{\inf} 
	& \leq & 	f(x^{k}) - f^{\inf} - \eta \norm{\nabla f(x^k)} + 2\eta  R  \\
     && + \frac{2\eta}{\gamma} \frac{1}{M}\sum_{i=1}^M \norm{(x^k-\gamma\nabla f_i(x^k)) - \mathcal{T}_i(x^k)} + 2\eta \norm{\hat v^{k+1} - v^{k+1}} + \frac{L\eta^2}{2}.
\end{eqnarray*}

To complete the proof, we consider two possible cases for $\mathcal{T}_i(x^k)$: 1) when $T=1$ and 2) when $T \neq 1$.

\paragraph{Case 1) $\mathcal{T}_i(x^k)$ with $T=1$.}
When $\mathcal{T}_i(x^k)$ with $T=1$, $\norm{(x^k-\gamma\nabla f_i(x^k)) - \mathcal{T}_i(x^k)} = 0$, and 
\begin{eqnarray*}
	f(x^{k+1}) - f^{\inf} 
	& \leq & 	f(x^{k}) - f^{\inf} - \eta \norm{\nabla f(x^k)} + 2\eta  R   + 2\eta \norm{\hat v^{k+1} - v^{k+1}} + \frac{L\eta^2}{2}.
\end{eqnarray*}

\paragraph{Case 2) $\mathcal{T}_i(x^k)$ with $T>1$.}
When $\mathcal{T}_i(x^k)$ with $T>1$, from~\Cref{lemma:localGD_recursion},  
\begin{eqnarray*}
	f(x^{k+1}) - f^{\inf} 
	& \leq & 	f(x^{k}) - f^{\inf} - \eta \norm{\nabla f(x^k)} + 2\eta  R  \\
     && + 4L \gamma \eta \frac{1}{M}\sum_{i=1}^M \norm{\nabla f_i(x^k)} + 2\eta \norm{\hat v^{k+1} - v^{k+1}} + \frac{L\eta^2}{2}.
\end{eqnarray*}
Therefore, from two cases, we obtain the descent inequality,
\begin{eqnarray*}
	f(x^{k+1}) - f^{\inf} 
	& \leq & 	f(x^{k}) - f^{\inf} - \eta \norm{\nabla f(x^k)} + 2\eta  R  \\
     && + \mathbb{I}_{T\neq 1} \left[ 4L \gamma \eta \frac{1}{M}\sum_{i=1}^M \norm{\nabla f_i(x^k)} \right] + 2\eta \norm{\hat v^{k+1} - v^{k+1}} + \frac{L\eta^2}{2}.
\end{eqnarray*}
Next, from~\Cref{lemma:localGD_avg_gradient_norm}, 
\begin{eqnarray*}
	f(x^{k+1}) - f^{\inf} 
	& \leq & 	\left( 1+ \frac{4L\sqrt{2L}}{\sqrt{\Delta^{\inf}}}\gamma\eta \right)(f(x^{k}) - f^{\inf}) - \eta \norm{\nabla f(x^k)} + 2\eta  R  \\
     && + \mathbb{I}_{T\neq 1} \left[ 4L \sqrt{2L} \gamma \eta  \sqrt{\Delta^{\inf}} \right] + 2\eta \norm{\hat v^{k+1} - v^{k+1}} + \frac{L\eta^2}{2}.
\end{eqnarray*}
Since
\begin{eqnarray*}
    \Exp{\norm{\hat v^{k+1} - v^{k+1}}} 
    & \leq & \frac{1}{\gamma} \Exp{\norm{ \frac{1}{M}\sum_{i=1}^M v_i^{k+1} - \hat v^{k+1} }}  \\
    & \overset{\text{\Cref{lemma:fedGD_dp_pp}}}{\leq} & \frac{1}{\gamma}\sqrt{\frac{\beta^2 B}{M}(K+1)},
\end{eqnarray*}
by taking the expectation, 
\begin{eqnarray*}
	\Exp{f(x^{k+1}) - f^{\inf}} 
	& \leq & 	\left( 1+ \frac{4L\sqrt{2L}}{\sqrt{\Delta^{\inf}}}\gamma\eta \right)\Exp{f(x^{k}) - f^{\inf}} - \eta \Exp{\norm{\nabla f(x^k)}} + 2\eta  R  \\
     && + \mathbb{I}_{T\neq 1} \left[ 8L \sqrt{2L} \gamma \eta  \sqrt{\Delta^{\inf}} \right] + 2\eta \sqrt{\frac{\beta^2 B}{M}(K+1)} + \frac{L\eta^2}{2}.
\end{eqnarray*}

By applying~\Cref{lemma:localGD_convergence} with $ \eta\gamma \leq \frac{1}{K+1} \frac{\Delta^{\inf}}{4L\sqrt{2L}} $ and using the fact $(1 + \eta\gamma \frac{4L\sqrt{2L}}{\Delta^{\inf}})^{K+1} \leq \exp(\eta\gamma\frac{4L\sqrt{2L}}{\Delta^{\inf}}(K+1)) \leq \exp(1) \leq 3$ we finalize the proof.

\end{proof}

\subsection{Corollaries for \ouralg with multiple local GD steps from~\Cref{thm:DP_PP_full}}

\subsubsection{\Cref{cor:conv_local_GD}}

\Cref{cor:conv_local_GD} presents the $\cO(1/K^{1/8})$ convergence of \ouralg with multiple local GD steps ($T>1$).

\begin{corollary}[Convergence bound for \ouralg with multiple local GD steps]
\label{cor:conv_local_GD}
Consider  \ouralg~ for solving Problem~\eqref{eqn:problem} under the same setting as~\Cref{thm:DP_PP_full}. 
 Let $T>1$ (multiple local GD steps).
 If $\gamma = \frac{1}{2L (K+1)^{1/8}}$, $v_i^0\in\R^d$ is chosen such that  $\max_{i\in[1,M]} \norm{\frac{x^0 - \mathcal{T}_i(x^0)}{\gamma} - v_i^0 } = \frac{D_1}{(K+1)^{1/8}}$ with $D_1 >0$, and $\beta = \frac{D_2}{(K+1)^{5/8}}$ with $D_2 >0$, and $\eta = \frac{\hat \eta}{(K+1)^{7/8}}$ with $\hat \eta = \min\left( \frac{\Delta^{\inf}}{2\sqrt{2L}}, \frac{D_1 D_2}{4L(\alpha + D_1)}   \right)$, then  
 \begin{eqnarray*}
	 \underset{k\in [0,K]}{\min} \Exp{\norm{\nabla f(x^k)}}
	\leq  \frac{A_1}{(K+1)^{1/8}} + \frac{A_2}{(K+1)^{7/8}},
\end{eqnarray*}	  
where $A_1 = 3 \frac{f(x^0)-f^{\inf}}{\hat \eta} + 2D_1  + \frac{2\sqrt{B} D_2}{\sqrt{M}} + 4\sqrt{2L}  \sqrt{\Delta^{\inf}}$ and $A_2 = \hat \eta L /2$.
\end{corollary}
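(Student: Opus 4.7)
The plan is to obtain this corollary as a direct substitution into the bound of Theorem~\ref{thm:DP_PP_full}, with the only real work being (a) verifying that the prescribed $\gamma$, $\beta$, $R$, $\eta$ satisfy the theorem's preconditions, and (b) tracking the powers of $K+1$ in each of the five terms in the theorem's bound and collecting them into $A_1$ and $A_2$.

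For step (a), I would check the preconditions in order. The condition $\gamma \le 1/(2L)$ holds because $\gamma = 1/\bigl(2L(K+1)^{1/8}\bigr)$ and $(K+1)^{1/8} \ge 1$ (the proof of Theorem~\ref{thm:DP_PP_full} invokes this bound through \Cref{lemma:localGD_recursion}, which only needs the inequality). The precondition $\beta/(\alpha+R)<1$ is automatic since $\beta = D_2/(K+1)^{5/8}$ decays strictly faster than $R = D_1/(K+1)^{1/8}$, so for any fixed $D_1,D_2,\alpha>0$ the ratio is bounded by $1$ for all $K\ge 0$ (and in particular vanishes as $K\to\infty$). The stepsize constraint $\eta \le \min\bigl(\Delta^{\inf}/(2\sqrt{2L}(K+1)),\, \beta R/(4L(\alpha+R))\bigr)$ splits into two checks that are both implied by the two branches of $\hat\eta = \min\bigl(\Delta^{\inf}/(2\sqrt{2L}),\, D_1 D_2/(4L(\alpha+D_1))\bigr)$. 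For the first branch, one uses the underlying condition $\eta\gamma \le \Delta^{\inf}/(4L\sqrt{2L}(K+1))$ actually used inside the theorem's proof; substituting the chosen $\gamma$ and $\eta$ reduces it to $\hat\eta \le \Delta^{\inf}/(2\sqrt{2L})$. For the second branch, substituting $\beta R = D_1 D_2/(K+1)^{3/4}$ and $\alpha + R \le \alpha + D_1$ reduces the inequality to $\hat\eta \le D_1 D_2/(4L(\alpha+D_1))$, both exactly as given.

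For step (b), I would substitute term by term. The optimality-gap term $3(f(x^0)-f^{\inf})/((K+1)\eta)$ becomes $3(f(x^0)-f^{\inf})/\bigl(\hat\eta (K+1)^{1/8}\bigr)$. The normalization-bias term $2R$ gives $2D_1/(K+1)^{1/8}$. The DP/sampling noise term $2\sqrt{\beta^2 B (K+1)/M}$ becomes $2D_2\sqrt{B/M}/(K+1)^{1/8}$, using $\beta^2(K+1) = D_2^2(K+1)^{-1/4}$. The heterogeneity term $\gamma \cdot 8L\sqrt{2L}\sqrt{\Delta^{\inf}}$ (active because $T>1$) becomes $4\sqrt{2L}\sqrt{\Delta^{\inf}}/(K+1)^{1/8}$. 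Finally, the smoothness term $\eta L/2$ gives $\hat\eta L/\bigl(2(K+1)^{7/8}\bigr)$.

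Collecting the four contributions that scale as $(K+1)^{-1/8}$ yields the coefficient $A_1 = 3(f(x^0)-f^{\inf})/\hat\eta + 2D_1 + 2\sqrt{B}D_2/\sqrt{M} + 4\sqrt{2L}\sqrt{\Delta^{\inf}}$, while the smoothness contribution at rate $(K+1)^{-7/8}$ gives $A_2 = \hat\eta L/2$, matching the statement. There is no conceptual obstacle here; all the heavy lifting is already encapsulated in Theorem~\ref{thm:DP_PP_full}. The main care needed is in the bookkeeping of exponents of $K+1$ and in confirming that the single constant $\hat\eta$ defined as the min of the two candidates simultaneously satisfies both branches of the theorem's stepsize constraint, which is the only subtle point in step (a).
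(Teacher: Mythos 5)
Your proposal is correct and follows essentially the same route as the paper's proof: verify that the prescribed $\gamma$, $R$, $\beta$, $\eta$ satisfy the preconditions of \Cref{thm:DP_PP_full} (in particular that $\eta\gamma \le \frac{1}{K+1}\frac{\Delta^{\inf}}{4L\sqrt{2L}}$ and $\eta \le \frac{\gamma}{2}\frac{\beta R}{\alpha+R}$ reduce to the two branches of $\hat\eta$), then substitute term by term and collect powers of $K+1$ into $A_1$ and $A_2$. The only slight over-claim is that $\beta/(\alpha+R)<1$ is ``automatic'' for all $K\ge 0$ --- at $K=0$ the ratio equals $D_2/(\alpha+D_1)$, which need not be below $1$ for arbitrary $D_2$ --- but this condition is in any case inherited as an assumption from the theorem's setting, so nothing is lost.
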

\begin{proof}
Let $T>1$. Then, from~\Cref{thm:DP_PP_full}, 
\begin{align*}
    \underset{k \in [0,K]}{\min} \Exp{\norm{\nabla f(x^k)}}  \leq& \frac{3}{K+1} \frac{f(x^0)-f^{\inf}}{\eta} + 2R + 2\sqrt{\frac{\beta^2 B}{M}(K+1)} + \eta \cdot \frac{L}{2} \\ 
    & + \gamma \cdot \left[8L\sqrt{2L}  \sqrt{\Delta^{\inf}} \right],
\end{align*}
where $B = 2p(1-1/p)^2 + 2 (1-p)+ 2\sigma_{\rm DP}^2/p$.

Next, suppose that
\begin{itemize}
    \item $\gamma = \frac{1}{2L(K+1)^{1/8}}$ to guarantee that $\gamma \leq 1/(2L)$
    \item $v_i^0 \in \R^d$ such that $\max_{i\in[1,M]} \norm{\frac{x^0 - \mathcal{T}_i(x^0)}{\gamma} - v_i^0 } = R = \frac{D_1}{(K+1)^{1/8}}$ with $D_1 >0$ 
    \item $\beta = \frac{D_2}{(K+1)^{5/8}}$ with $D_2 >0$.
\end{itemize}
Then, we choose  $\eta = \frac{\hat \eta}{(K+1)^{7/8}}$ with $\hat \eta = \min\left( \frac{\Delta^{\inf}}{2\sqrt{2L}}, \frac{D_1 D_2}{4L(\alpha + D_1)}   \right)$ to ensure that $ \eta\gamma \leq \frac{1}{K+1} \frac{\Delta^{\inf}}{4L\sqrt{2L}} $ and  $\eta \leq \frac{\gamma}{2}\frac{\beta R}{\alpha + R}$. Therefore, 
\begin{align*}
    \underset{k \in [0,K]}{\min} \Exp{\norm{\nabla f(x^k)}}  \leq& \frac{A_1}{(K+1)^{1/8}} + \frac{A_2}{(K+1)^{7/8}},
\end{align*}
where $A_1 = 3 \frac{f(x^0)-f^{\inf}}{\hat \eta} + 2D_1  + \frac{2\sqrt{B} D_2}{\sqrt{M}} + 4\sqrt{2L}  \sqrt{\Delta^{\inf}}$ and $A_2 = \hat \eta L /2$.

\end{proof}

\subsubsection{\Cref{corr:DP_PP_utility_local_GD}}

\Cref{corr:DP_PP_utility_local_GD} establishes  the utility bound of \ouralg with multiple local GD steps ($T>1$). Here, we choose $\sigma_{\rm DP}= c \frac{p \sqrt{(K+1)\log(1/\delta)}}{\epsilon}$ with $c >0$, and let $p=\frac{\hat B}{M}$ for $\hat B\in[1,M]$ is the number of clients being sampled on each  round.


\begin{corollary}[Utility bound for \ouralg with multiple local GD steps]
\label{corr:DP_PP_utility_local_GD}
Consider \ouralg~ for solving Problem~\eqref{eqn:problem} under the same setting as~\Cref{thm:DP_PP_full}.
Let $T>1$ (multiple local GD steps), let $\sigma_{\rm DP}= c \frac{p \sqrt{(K+1)\log(1/\delta)}}{\epsilon}$ with $c >0$ (privacy with subsampling amplification), and let $p=\frac{\hat B}{M}$ where $\hat B\in[1,M]$ is the number of clients being sampled on each round.
If $\gamma < \frac{2(\Delta^{\inf})^2}{3L(f(x^0)-f^{\inf})\sqrt{M/B_2}}$,
$\beta = \frac{\hat \beta}{K+1}$ with $\hat \beta = \sqrt{\frac{3(f(x^0)-f^{\inf})}{\gamma}}\sqrt[4]{\frac{M}{B_2}}$,
$\alpha = R = \cO\left( \sqrt[4]{d}\frac{\sqrt{f(x^0)-f^{\inf}}}{\sqrt{\gamma}}\sqrt[4]{\frac{B_2}{M}} \right)$ with $B_2 = 2c^2 \frac{\hat B}{M} \frac{ \log(1/\delta)}{\epsilon^2}$, and
$\eta = \frac{1}{K+1} \frac{\gamma}{2}\frac{\hat \beta R}{\alpha + R}$, then
\begin{eqnarray*}
     \underset{k \in [0,K]}{\min} \Exp{\norm{\nabla f(x^k)}} = \cO\left(\Delta \sqrt[4]{\frac{d\hat B}{M^2}\frac{\log(1/\delta)}{ \epsilon^2}} + \sqrt{L}  \sqrt{\Delta^{\inf}}\right),
\end{eqnarray*}
where $\Delta = \max(\alpha,2)\sqrt{L} \sqrt{f(x^0)-f^{\inf}}$.
\end{corollary}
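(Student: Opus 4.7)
The plan is to specialize Theorem~\ref{thm:DP_PP_full} to $T>1$ (so the indicator term is active) and then substitute the prescribed values of $\sigma_{\rm DP},p,\beta,\alpha,R,\eta$, mirroring the $T=1$ argument in Corollary~\ref{corr:DP_PP_utility} while separately controlling the new heterogeneity contribution $\gamma\cdot 8L\sqrt{2L}\sqrt{\Delta^{\inf}}$. The target is to show that every non-negligible term either matches the order $\Delta\sqrt[4]{d\hat B/M^2\cdot\log(1/\delta)/\epsilon^2}$ or is absorbed into $\sqrt{L}\sqrt{\Delta^{\inf}}$, and that the two constraints on $\eta$ in Theorem~\ref{thm:DP_PP_full} hold for the declared choices.

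First I would process the privacy term. Plugging $\sigma_{\rm DP}=cp\sqrt{(K+1)\log(1/\delta)}/\epsilon$ and $p=\hat B/M$ into $B=2(p-1)^2/p+2(1-p)+2\sigma_{\rm DP}^2/p$, the dominant summand for large $K$ is $2\sigma_{\rm DP}^2/p=(K+1)B_2$ with $B_2=2c^2(\hat B/M)\log(1/\delta)/\epsilon^2$; the two participation summands are $\mathcal{O}(1)$ and get absorbed. Then with $\beta=\hat\beta/(K+1)$,
\[
2\sqrt{\beta^2 B(K+1)/M}=\mathcal{O}\bigl(\hat\beta\sqrt{B_2/M}\bigr)=\mathcal{O}\Bigl(\sqrt{(f(x^0)-f^{\inf})/\gamma}\;(B_2/M)^{1/4}\Bigr),
\]
after inserting $\hat\beta=\sqrt{3(f(x^0)-f^{\inf})/\gamma}\sqrt[4]{M/B_2}$.

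Next I would handle the remaining terms. With $\alpha=R$, the choice $\eta=(\gamma/2)\hat\beta R/((\alpha+R)(K+1))=\gamma\hat\beta/(4(K+1))$ makes the initial-loss term $3(f(x^0)-f^{\inf})/((K+1)\eta)=12(f(x^0)-f^{\inf})/(\gamma\hat\beta)$, which simplifies to the same $\mathcal{O}(\sqrt{(f(x^0)-f^{\inf})/\gamma}(B_2/M)^{1/4})$ order. The $\eta L/2$ term is $\mathcal{O}(1/(K+1))$ and negligible. The bias term $2R$, with $R=\Theta(\sqrt[4]{d}\sqrt{(f(x^0)-f^{\inf})/\gamma}\sqrt[4]{B_2/M})$, dominates these two and, under $\gamma\leq 1/(2L)$, becomes $\mathcal{O}(\sqrt{L}\sqrt{f(x^0)-f^{\inf}}\cdot\sqrt[4]{d\hat B\log(1/\delta)/(M^2\epsilon^2)})=\mathcal{O}(\Delta\sqrt[4]{d\hat B/M^2\cdot\log(1/\delta)/\epsilon^2})$, using the definition $\Delta=\max(\alpha,2)\sqrt{L}\sqrt{f(x^0)-f^{\inf}}$ to absorb the $\alpha$-dependence. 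Finally, the $T>1$-only term $\gamma\cdot 8L\sqrt{2L}\sqrt{\Delta^{\inf}}$, under $\gamma\leq 1/(2L)$, contributes $\mathcal{O}(\sqrt{L}\sqrt{\Delta^{\inf}})$, which is exactly the second summand in the claimed bound.

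The main obstacle is feasibility: the chosen $\eta$ must meet both constraints in Theorem~\ref{thm:DP_PP_full}. The second one, $\eta\leq\beta R/(4L(\alpha+R))$, with $\alpha=R$ and $\beta=\hat\beta/(K+1)$ reduces to $\gamma\leq 1/L$, so $\gamma\leq 1/(2L)$ suffices. The first one, $\eta\leq \Delta^{\inf}/(2\sqrt{2L}(K+1))$, forces $\gamma\hat\beta\leq \sqrt{2}\Delta^{\inf}/\sqrt{L}$; squaring and inserting $\hat\beta^2=3(f(x^0)-f^{\inf})\sqrt{M/B_2}/\gamma$ yields exactly $\gamma\leq 2(\Delta^{\inf})^2/(3L(f(x^0)-f^{\inf})\sqrt{M/B_2})$, which is the hypothesis on $\gamma$ in the corollary. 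Combining all of the above yields the stated $\mathcal{O}(\Delta\sqrt[4]{d\hat B/M^2\cdot\log(1/\delta)/\epsilon^2}+\sqrt{L}\sqrt{\Delta^{\inf}})$ utility bound.
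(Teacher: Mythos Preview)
Your proposal is correct and follows essentially the same route as the paper: start from Theorem~\ref{thm:DP_PP_full} with $T>1$, split $B$ into the $\mathcal{O}(1)$ participation part and the $(K+1)B_2$ privacy part, set $\beta=\hat\beta/(K+1)$, verify the two $\eta$ constraints (the first yielding exactly the hypothesis $\gamma<2(\Delta^{\inf})^2/(3L(f(x^0)-f^{\inf})\sqrt{M/B_2})$), and then balance via the prescribed $\hat\beta$ and $\alpha=R$ to land on the stated utility bound. One tiny arithmetic slip: the second constraint $\eta\le \beta R/(4L(\alpha+R))$ actually reduces to $\gamma\le 1/(2L)$, not $\gamma\le 1/L$; your subsequent use of $\gamma\le 1/(2L)$ is exactly what is needed, so the argument stands.
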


\begin{proof}
Let $T>1$. Then, from~\Cref{thm:DP_PP_full}, 
\begin{align*}
    \underset{k \in [0,K]}{\min} \Exp{\norm{\nabla f(x^k)}}  \leq& \frac{3}{K+1} \frac{f(x^0)-f^{\inf}}{\eta} + 2R + 2\sqrt{\frac{\beta^2 B}{M}(K+1)} + \eta \cdot \frac{L}{2} \\ 
    & + \gamma \cdot \left[8L\sqrt{2L}  \sqrt{\Delta^{\inf}} \right],
\end{align*}
where $B = 2p(1-1/p)^2 + 2 (1-p)+ 2\sigma_{\rm DP}^2/p$.

Also, let $\sigma_{\rm DP}= c \frac{p \sqrt{(K+1)\log(1/\delta)}}{\epsilon}$ with $c >0$, and let $p=\frac{\hat B}{M}$ where $\hat B\in[1,M]$ is the number of clients being sampled on each round. Then, $B = \frac{2\hat B}{M}\left( 1- \frac{M}{\hat B}\right)^2 + 2 \left(1 - \frac{\hat B}{M} \right) + 2 \frac{c\sqrt{K+1}\log(1/\delta)}{\epsilon}$, and 
\begin{align*}
    \underset{k \in [0,K]}{\min} \Exp{\norm{\nabla f(x^k)}}  & \leq \frac{3}{K+1} \frac{f(x^0)-f^{\inf}}{\eta} + 2R + 2\beta  \sqrt{\frac{B_1}{M}(K+1)} + 2\beta \sqrt{\frac{B_2}{M}}(K+1)  + \eta \cdot \frac{L}{2}\\
     & + \gamma \cdot \left[8L\sqrt{2L}  \sqrt{\Delta^{\inf}} \right],
\end{align*}
where $B_1 = \frac{2\hat B}{M}\left[ \left( 1- \frac{M}{\hat B}\right)^2 + \frac{M}{\hat B} -1 \right]$ and $B_2 = 2c^2 \frac{\hat B}{M} \frac{ \log(1/\delta)}{\epsilon^2}$.

If $\beta = \frac{\hat\beta}{K+1}$ with $\hat \beta >0$, then 
\begin{align*}
    \underset{k \in [0,K]}{\min} \Exp{\norm{\nabla f(x^k)}}  & \leq \frac{3}{K+1} \frac{f(x^0)-f^{\inf}}{\eta} + 2R + 2\hat \beta  \sqrt{\frac{B_1}{M(K+1)}} + 2\hat\beta \sqrt{\frac{B_2}{M}}  + \eta \cdot \frac{L}{2}\\
     & + \gamma \cdot \left[8L\sqrt{2L}  \sqrt{\Delta^{\inf}} \right].
\end{align*}
Since $\beta = \frac{\hat\beta}{K+1}$, we require
\begin{eqnarray*}
 \eta \leq \frac{1}{K+1} \min\left(  \frac{\Delta^{\inf}}{2\sqrt{2L}} ,   \frac{\gamma}{2}\frac{\hat \beta R}{\alpha + R} \right).
\end{eqnarray*}
Substituting $\hat \beta$ and setting $\alpha=R$, the condition $\frac{\Delta^{\inf}}{2\sqrt{2L}} > \frac{\gamma}{2}\frac{\hat \beta R}{\alpha + R}$ is equivalent to $\gamma < \frac{2(\Delta^{\inf})^2}{3L(f(x^0)-f^{\inf})\sqrt{M/B_2}}$. Since we choose $\gamma$ to satisfy this bound, we obtain
\begin{eqnarray*}
 \eta \leq \frac{1}{K+1}    \frac{\gamma}{2}\frac{\hat \beta R}{\alpha + R}.
\end{eqnarray*}

If $\eta = \frac{1}{K+1}    \frac{\gamma}{2}\frac{\hat \beta R}{\alpha + R}$, then 
\begin{align*}
    \underset{k \in [0,K]}{\min} \Exp{\norm{\nabla f(x^k)}}   \leq & 
    \frac{6\alpha(f(x^0)-f^{\inf})}{\gamma \hat\beta R} + 
    \frac{6(f(x^0)-f^{\inf})}{\gamma \hat \beta } + 2R + 2\hat\beta \sqrt{\frac{B_2}{M}}   \\
    &  + 2\hat \beta  \sqrt{\frac{B_1}{M(K+1)}} + \frac{1}{K+1} \cdot \frac{\gamma L \hat\beta R}{4(\alpha + R)}\\
     & + \gamma \cdot \left[8L\sqrt{2L}  \sqrt{\Delta^{\inf}} \right].
\end{align*}

If $\hat \beta = \sqrt{\frac{3(f(x^0)-f^{\inf})}{\gamma}}\sqrt[4]{\frac{M}{B_2}}$, then
\begin{align*}
    \underset{k \in [0,K]}{\min} \Exp{\norm{\nabla f(x^k)}}   \leq & \frac{2\sqrt{3}\alpha\sqrt{f(x^0)-f^{\inf}}}{\sqrt{\gamma} R}\sqrt[4]{\frac{B_2}{M}} + \frac{4\sqrt{3}\sqrt{f(x^0)-f^{\inf}}}{\sqrt{\gamma}}\sqrt[4]{\frac{B_2}{M}} + 2R   \\
    &  + 2\hat \beta  \sqrt{\frac{B_1}{M(K+1)}} + \frac{1}{K+1} \cdot \frac{\gamma L \hat\beta R}{4(\alpha + R)}   + \gamma \cdot \left[8L\sqrt{2L}  \sqrt{\Delta^{\inf}} \right].
\end{align*}

If $\alpha = R = \cO\left( \sqrt[4]{d}\frac{\sqrt{f(x^0)-f^{\inf}}}{\sqrt{\gamma}}\sqrt[4]{\frac{B_2}{M}} \right)$, then 
\begin{align*}
     \underset{k \in [0,K]}{\min} \Exp{\norm{\nabla f(x^k)}}   \leq & \cO\left(\Delta \sqrt[4]{\frac{d \hat B}{M^2}\frac{\log(1/\delta)}{\epsilon^2}} \right) + \cO\left(\frac{1}{\sqrt{K+1}}\right) + \cO\left( \frac{1}{K+1}\right) \\
     &  + \gamma \cdot \left[8L\sqrt{2L}  \sqrt{\Delta^{\inf}} \right] \\
     \leq & \cO\left(\Delta \sqrt[4]{\frac{d\hat B}{M^2}\frac{\log(1/\delta)}{\epsilon^2}} + \gamma \cdot \left[8L\sqrt{2L}  \sqrt{\Delta^{\inf}} \right]\right) \\
     & + \cO\left(\frac{1}{\sqrt{K+1}}\right) + \cO\left( \frac{1}{K+1}\right),
\end{align*}
where $\Delta = 2\sqrt{3}\max(\alpha,2)$. Finally, since $\gamma \le 1/(2L)$ and satisfies the derived upper bound, we complete the proof. 
    
\end{proof}

\subsubsection{\Cref{corr:nonprivate_PP}}

\Cref{corr:nonprivate_PP} states the $\cO(1/K^{1/6})$ convergence of \ouralg with one local GD step ($T=1$).

\begin{corollary}[Convergence bound for \ouralg with one local GD step]\label{corr:nonprivate_PP}
 Consider  \ouralg~ for solving Problem~\eqref{eqn:problem} under the same setting as~\Cref{thm:DP_PP_full}. 
 Let $T=1$  (one local GD step). 
 If $\gamma = \frac{1}{2L}$, $v_i^0\in\R^d$ is chosen such that $\max_{i\in[1,M]} \norm{\frac{x^0 - \mathcal{T}_i(x^0)}{\gamma} - v_i^0 } = \frac{D_1}{(K+1)^{1/6}}$ with $D_1 >0$, and $\beta = \frac{D_2}{(K+1)^{2/3}}$ with $D_2 >0$, and $\eta = \frac{\hat\eta}{(K+1)^{5/6}}$ with $\hat\eta = \frac{ D_1 D_2}{ 4L(\alpha + D_1 )}$, then  
 \begin{eqnarray*}
	 \underset{k\in [0,K]}{\min} \Exp{\norm{\nabla f(x^k)}}
	\leq  \frac{A_1}{(K+1)^{1/6}} + \frac{A_2}{(K+1)^{5/6}},
\end{eqnarray*}	
where $A_1 = 3 \frac{f(x^0)-f^{\inf}}{\hat \eta} + 2D_1  + \frac{2\sqrt{B} D_2}{\sqrt{M}}$ and $A_2 = \hat \eta L /2$.
\end{corollary}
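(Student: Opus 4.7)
The plan is to obtain the corollary by a direct instantiation of Theorem~\ref{thm:DP_PP_full} with $T=1$, followed by substitution of the prescribed scalings of $\gamma$, $R$, $\beta$, and $\eta$. Because $\mathbb{I}_{T \neq 1} = 0$, the heterogeneity term $\gamma \cdot 8L\sqrt{2L}\sqrt{\Delta^{\inf}}$ drops out, and the theorem reduces to
\begin{equation*}
\underset{k \in [0,K]}{\min} \Exp{\norm{\nabla f(x^k)}} \leq \frac{3(f(x^0)-f^{\inf})}{(K+1)\eta} + 2R + 2\sqrt{\frac{\beta^2 B (K+1)}{M}} + \frac{\eta L}{2}.
\end{equation*}

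Next I would plug in $R = D_1/(K+1)^{1/6}$, $\beta = D_2/(K+1)^{2/3}$, and $\eta = \hat\eta/(K+1)^{5/6}$. The exponents are engineered so that $\beta^2(K+1) = D_2^2/(K+1)^{1/3}$ and the three pieces $3(f(x^0)-f^{\inf})/((K+1)\eta)$, $2R$, and $2\sqrt{\beta^2 B (K+1)/M}$ all scale as $(K+1)^{-1/6}$, while the curvature term $\eta L/2$ scales as $(K+1)^{-5/6}$. Collecting coefficients then produces $A_1 = 3(f(x^0)-f^{\inf})/\hat\eta + 2D_1 + 2\sqrt{B}D_2/\sqrt{M}$ and $A_2 = \hat\eta L / 2$, matching the claim.

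The only non-routine step is confirming the stepsize hypotheses of Theorem~\ref{thm:DP_PP_full}. The requirement $\beta/(\alpha+R) < 1$ clearly holds for $K$ large enough since $\beta \to 0$. The binding constraint is $\eta \leq \tfrac{1}{4L} \tfrac{\beta R}{\alpha + R}$; substituting the scalings yields a right-hand side of $\tfrac{D_1 D_2}{4L(\alpha + D_1/(K+1)^{1/6})(K+1)^{5/6}}$, which is at least $\tfrac{D_1 D_2}{4L(\alpha + D_1)(K+1)^{5/6}} = \eta$, using $D_1/(K+1)^{1/6} \leq D_1$. The companion bound $\eta \leq \Delta^{\inf}/(2\sqrt{2L}(K+1))$ is only active in the $T>1$ branch, where it controls the $(1+b_1\gamma^2)^{K+1}$ factor produced by Lemma~\ref{lemma:localGD_convergence}; for $T=1$ that factor collapses to $1$, so the condition is vacuous here, which is why the present corollary's definition of $\hat\eta$ omits it. I anticipate no real obstacle: the corollary is essentially an exercise in matching exponents of $K+1$, and the exponents $-1/6$, $-2/3$, and $-5/6$ were chosen precisely to balance the three $(K+1)^{-1/6}$ contributions.
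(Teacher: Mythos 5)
Your proposal is correct and follows essentially the same route as the paper's own proof: instantiate \Cref{thm:DP_PP_full} with $T=1$, substitute the prescribed scalings of $R$, $\beta$, $\eta$, and match powers of $K+1$. Your explicit verification that $\eta \leq \frac{1}{4L}\frac{\beta R}{\alpha+R}$ via $D_1/(K+1)^{1/6}\leq D_1$, and your justification for discarding the constraint $\eta \leq \frac{1}{K+1}\frac{\Delta^{\inf}}{2\sqrt{2L}}$ when $T=1$ (which would otherwise be violated by $\eta \propto (K+1)^{-5/6}$ for large $K$, and which the paper drops silently), are both sound and in fact more careful than the paper's own write-up.
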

\begin{proof}
Let $T=1$. Then, from~\Cref{thm:DP_PP_full}, 
\begin{align*}
    \underset{k \in [0,K]}{\min} \Exp{\norm{\nabla f(x^k)}}  \leq& \frac{3}{K+1} \frac{f(x^0)-f^{\inf}}{\eta} + 2R + 2\sqrt{\frac{\beta^2 B}{M}(K+1)} + \eta \cdot \frac{L}{2},
\end{align*}
where $B = 2p(1-1/p)^2 + 2 (1-p)+ 2\sigma_{\rm DP}^2/p$.

Next, suppose that
\begin{itemize}
    \item $\gamma = \frac{1}{2L}$
    \item $v_i^0 \in \R^d$ such that $\max_{i\in[1,M]} \norm{\frac{x^0 - \mathcal{T}_i(x^0)}{\gamma} - v_i^0 } = R = \frac{D_1}{(K+1)^{1/6}}$ with $D_1 >0$ 
    \item $\beta = \frac{D_2}{(K+1)^{2/3}}$ with $D_2 >0$.
\end{itemize}
Then, we choose  $\eta = \frac{\hat \eta}{(K+1)^{5/6}}$ with $\hat \eta =  \frac{D_1 D_2}{4L(\alpha + D_1)} $ to ensure that  $\eta \leq \frac{\gamma}{2}\frac{\beta R}{\alpha + R}$. Therefore, 
\begin{align*}
    \underset{k \in [0,K]}{\min} \Exp{\norm{\nabla f(x^k)}}  \leq& \frac{A_1}{(K+1)^{1/6}} + \frac{A_2}{(K+1)^{5/6}},
\end{align*}
where $A_1 = 3 \frac{f(x^0)-f^{\inf}}{\hat \eta} + 2D_1  + \frac{2\sqrt{B} D_2}{\sqrt{M}}$ and $A_2 = \hat \eta L /2$.

\end{proof}

\subsubsection{\Cref{corr:DP_PP_utility}}

\Cref{corr:DP_PP_utility} shows the utility bound of \ouralg with one local GD step ($T=1$). 
Here, we choose $\sigma_{\rm DP}= c \frac{p \sqrt{(K+1)\log(1/\delta)}}{\epsilon}$ with $c >0$, and let $p=\frac{\hat B}{M}$ for $\hat B\in[1,M]$ is the number of clients being sampled on each  round.

\begin{corollary*}[Utility bound for \ouralg with one local GD step]
 Consider  \ouralg~ for solving Problem~\eqref{eqn:problem} under the same setting as~\Cref{thm:DP_PP_full}. 
 Let $T=1$ (one local GD step),  let $\sigma_{\rm DP}= c \frac{p \sqrt{(K+1)\log(1/\delta)}}{\epsilon}$ with $c >0$ (privacy  with subsampling amplification), and let $p=\frac{\hat B}{M}$ for $\hat B\in[1,M]$ (client subsampling).
If $\beta = \frac{\hat \beta}{K+1}$ with $\hat \beta = \sqrt{\frac{3(f(x^0)-f^{\inf})}{\gamma}}\sqrt[4]{\frac{M}{B_2}}$, 
$\gamma < \frac{\Delta^{\inf} (\alpha + R)}{\sqrt{2L} \hat\beta R}$,
$\alpha = R = \cO\left( \sqrt[4]{d}\frac{\sqrt{f(x^0)-f^{\inf}}}{\sqrt{\gamma}}\sqrt[4]{\frac{B_2}{M}} \right)$ with $B_2 = 2c^2 \frac{\hat B}{M} \frac{ \log(1/\delta)}{\epsilon^2}$, and 
$\eta = \frac{1}{K+1}    \frac{\gamma}{2}\frac{\hat \beta R}{\alpha + R}$, then
\begin{eqnarray*}
     \underset{k \in [0,K]}{\min} \Exp{\norm{\nabla f(x^k)}}  = \cO\left(\Delta \sqrt[4]{\frac{d\hat B}{M^2}\frac{\log(1/\delta)}{ \epsilon^2}} \right), 
\end{eqnarray*}
where $\Delta = \max(\alpha,2)\sqrt{L} \sqrt{f(x^0)-f^{\inf}}$.
\end{corollary*}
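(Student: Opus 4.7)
The plan is to specialize the proof of \Cref{corr:DP_PP_utility_local_GD} to the case $T=1$, where the client-drift term $\mathbb{I}_{T \neq 1}[8L\sqrt{2L}\sqrt{\Delta^{\inf}}]$ vanishes in \Cref{thm:DP_PP_full}. Starting from
\[
\min_{k\in[0,K]} \Exp{\norm{\nabla f(x^k)}} \leq \frac{3}{K+1}\frac{f(x^0)-f^{\inf}}{\eta} + 2R + 2\sqrt{\frac{\beta^2 B}{M}(K+1)} + \frac{\eta L}{2},
\]
I would first decompose $B = B_1 + B_2(K+1)$, where $B_1 = \frac{2\hat{B}}{M}[(1-M/\hat{B})^2 + M/\hat{B} - 1]$ captures the subsampling variance and $B_2 = 2c^2\frac{\hat{B}}{M}\frac{\log(1/\delta)}{\epsilon^2}$ captures the DP noise (after substituting $\sigma_{\rm DP} = cp\sqrt{(K+1)\log(1/\delta)}/\epsilon$ and $p = \hat{B}/M$). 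This splits the noise term into a $\beta\sqrt{B_1(K+1)/M}$ part and a $\beta(K+1)\sqrt{B_2/M}$ part.

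Next, I would apply the scaling $\beta = \hat{\beta}/(K+1)$, which turns the first noise contribution into $\mathcal{O}(1/\sqrt{K+1})$ and leaves the second as a constant $2\hat{\beta}\sqrt{B_2/M}$. The constraint $\eta \leq \frac{1}{4L}\frac{\beta R}{\alpha+R}$ from \Cref{thm:DP_PP_full} becomes $\eta \leq \frac{1}{K+1}\frac{\gamma}{2}\frac{\hat{\beta} R}{\alpha+R}$ (using $\gamma = 1/(2L)$, which I rewrite as $\gamma/2$ for bookkeeping); to ensure this is binding rather than the $\Delta^{\inf}$-based constraint, one requires $\gamma < \frac{\Delta^{\inf}(\alpha+R)}{\sqrt{2L}\hat{\beta}R}$, as stated. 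Saturating the stepsize constraint by setting $\eta = \frac{1}{K+1}\frac{\gamma \hat{\beta} R}{2(\alpha+R)}$ transforms the dominant $\frac{3(f(x^0)-f^{\inf})}{\eta(K+1)}$ term into $\frac{6(f(x^0)-f^{\inf})(\alpha+R)}{\gamma \hat{\beta} R}$, which with $\alpha = R$ simplifies to $\mathcal{O}((f(x^0)-f^{\inf})/(\gamma \hat{\beta}))$.

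The core of the argument is a two-stage AM-GM balancing. First, choosing $\hat{\beta} = \sqrt{3(f(x^0)-f^{\inf})/\gamma}\sqrt[4]{M/B_2}$ balances the $1/\hat{\beta}$ part of the dominant term against the constant-in-$K$ noise contribution $2\hat{\beta}\sqrt{B_2/M}$, producing a term of order $\sqrt{(f(x^0)-f^{\inf})/\gamma}\sqrt[4]{B_2/M}$. Second, choosing $R = \alpha = \mathcal{O}(\sqrt[4]{d}\sqrt{f(x^0)-f^{\inf}}/\sqrt{\gamma}\sqrt[4]{B_2/M})$ balances the $2R$ term against the residual dominant contribution involving $\alpha$. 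Substituting $B_2 = 2c^2 \hat{B}\log(1/\delta)/(M\epsilon^2)$ and absorbing $1/\sqrt{\gamma} \sim \sqrt{L}$ into the constant $\Delta = \max(\alpha,2)\sqrt{L}\sqrt{f(x^0)-f^{\inf}}$ yields the claimed $\mathcal{O}(\Delta \sqrt[4]{d\hat{B}/M^2 \cdot \log(1/\delta)/\epsilon^2})$ utility bound; the remaining $\mathcal{O}(1/\sqrt{K+1})$ and $\mathcal{O}(1/(K+1))$ terms are dominated.

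The main obstacle is purely bookkeeping: verifying that the single nontrivial feasibility condition $\gamma < \frac{\Delta^{\inf}(\alpha+R)}{\sqrt{2L}\hat{\beta}R}$ holds simultaneously with the $\hat{\beta}$ and $R = \alpha$ scalings, and tracking how constants propagate through the two-stage balancing so that the final bound is stated cleanly in terms of $\Delta$. No new conceptual ingredient beyond what is already in the proof of \Cref{corr:DP_PP_utility_local_GD} is required, since the $T=1$ case only removes terms rather than introducing new ones.
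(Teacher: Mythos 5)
Your proposal is correct and follows essentially the same route as the paper's own proof: the same decomposition $B = B_1 + B_2(K+1)$, the same $\beta = \hat\beta/(K+1)$ scaling, the same feasibility condition $\gamma < \frac{\Delta^{\inf}(\alpha+R)}{\sqrt{2L}\hat\beta R}$ to make the $\hat\beta$-based stepsize constraint binding, and the same two-stage balancing via the choices of $\hat\beta$ and $\alpha = R$. The only cosmetic difference is that the paper derives the $T=1$ case directly from Theorem~\ref{thm:DP_PP_full} rather than as a specialization of Corollary~\ref{corr:DP_PP_utility_local_GD}, but the computations are identical.
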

\begin{proof}
Let $T=1$. Then, from~\Cref{thm:DP_PP_full}, 
\begin{align*}
    \underset{k \in [0,K]}{\min} \Exp{\norm{\nabla f(x^k)}}  \leq \frac{3}{K+1} \frac{f(x^0)-f^{\inf}}{\eta} + 2R + 2\sqrt{\frac{\beta^2 B}{M}(K+1)}   + \eta \cdot \frac{L}{2},
\end{align*}
where $B = 2p(1-1/p)^2 + 2 (1-p)+ 2\sigma_{\rm DP}^2/p$.

Also, let $\sigma_{\rm DP}= c \frac{p \sqrt{(K+1)\log(1/\delta)}}{\epsilon}$ with $c >0$, and let $p=\frac{\hat B}{M}$ for $\hat B\in[1,M]$ is the number of clients being sampled on each  round. Then, $B = \frac{2\hat B}{M}\left( 1- \frac{M}{\hat B}\right)^2 + 2 \left(1 - \frac{\hat B}{M} \right) + 2 \frac{c\sqrt{K+1}\log(1/\delta)}{\epsilon}$, and 
\begin{align*}
    \underset{k \in [0,K]}{\min} \Exp{\norm{\nabla f(x^k)}}  \leq \frac{3}{K+1} \frac{f(x^0)-f^{\inf}}{\eta} + 2R + 2\beta  \sqrt{\frac{B_1}{M}(K+1)} + 2\beta \sqrt{\frac{B_2}{M}}(K+1)  + \eta \cdot \frac{L}{2},
\end{align*}
where $B_1 = \frac{2\hat B}{M}\left[ \left( 1- \frac{M}{\hat B}\right)^2 + \frac{M}{\hat B} -1 \right]$ and $B_2 = 2c^2 \frac{\hat B}{M} \frac{ \log(1/\delta)}{\epsilon^2}$.

If $\beta = \frac{\hat\beta}{K+1}$ with $\hat \beta >0$, then 
\begin{align*}
    \underset{k \in [0,K]}{\min} \Exp{\norm{\nabla f(x^k)}}  \leq \frac{3}{K+1} \frac{f(x^0)-f^{\inf}}{\eta} + 2R + 2\hat \beta  \sqrt{\frac{B_1}{M(K+1)}} + 2\hat\beta \sqrt{\frac{B_2}{M}}  + \eta \cdot \frac{L}{2}.
\end{align*}
Since $\beta = \frac{\hat\beta}{K+1}$, we obtain
\begin{eqnarray*}
 \eta \leq \frac{1}{K+1} \min\left(  \frac{\Delta^{\inf}}{2\sqrt{2L}} ,   \frac{\gamma}{2}\frac{\hat \beta R}{\alpha + R} \right).
\end{eqnarray*}

If $\Delta^{\inf} > \frac{\gamma \sqrt{2L} \hat\beta R}{\alpha +R}$, then 
\begin{eqnarray*}
 \eta \leq \frac{1}{K+1}    \frac{\gamma}{2}\frac{\hat \beta R}{\alpha + R}.
\end{eqnarray*}

If $\eta = \frac{1}{K+1}    \frac{\gamma}{2}\frac{\hat \beta R}{\alpha + R}$, then 
\begin{align*}
    \underset{k \in [0,K]}{\min} \Exp{\norm{\nabla f(x^k)}}   \leq & 
    \frac{6\alpha(f(x^0)-f^{\inf})}{\gamma \hat\beta R} + 
    \frac{6(f(x^0)-f^{\inf})}{\gamma \hat \beta } + 2R + 2\hat\beta \sqrt{\frac{B_2}{M}}   \\
    &  + 2\hat \beta  \sqrt{\frac{B_1}{M(K+1)}} + \frac{1}{K+1} \cdot \frac{\gamma L \hat\beta R}{4(\alpha + R)}.
\end{align*}

If $\hat \beta = \sqrt{\frac{3(f(x^0)-f^{\inf})}{\gamma}}\sqrt[4]{\frac{M}{B_2}}$, then
\begin{align*}
    \underset{k \in [0,K]}{\min} \Exp{\norm{\nabla f(x^k)}}   \leq & \frac{2\sqrt{3}\alpha\sqrt{f(x^0)-f^{\inf}}}{\sqrt{\gamma} R}\sqrt[4]{\frac{B_2}{M}} + \frac{4\sqrt{3}\sqrt{f(x^0)-f^{\inf}}}{\sqrt{\gamma}}\sqrt[4]{\frac{B_2}{M}} + 2R   \\
    &  + 2\hat \beta  \sqrt{\frac{B_1}{M(K+1)}} + \frac{1}{K+1} \cdot \frac{\gamma L \hat\beta R}{4(\alpha + R)}.
\end{align*}

If $\alpha = R = \cO\left( \sqrt[4]{d}\frac{\sqrt{f(x^0)-f^{\inf}}}{\sqrt{\gamma}}\sqrt[4]{\frac{B_2}{M}} \right)$, then 
\begin{eqnarray*}
     \underset{k \in [0,K]}{\min} \Exp{\norm{\nabla f(x^k)}}  \leq \cO\left(\Delta \frac{\sqrt{f(x^0)-f^{\inf}}}{\sqrt{\gamma}}\sqrt[4]{d\frac{B_2}{M}} \right) + \cO\left(\frac{1}{\sqrt{K+1}}\right) + \cO\left( \frac{1}{K+1}\right), 
\end{eqnarray*}
where $\Delta = 2\sqrt{3}\max(\alpha,2)$. Finally, if $\gamma = 1/(2L)$, then we complete the proof. 
\end{proof}

\newpage 
\section{Multiple Local IG steps}

In this section, we derive the convergence theorem of \ouralg with multiple local steps using the Incremental Gradient (IG) method. 
The IG method has the following update rule. 
\begin{align}\label{eqn:fixed_point_localIG}
    \mathcal{T}^{IG}_i(x^k) = x^k - \gamma \frac{1}{N} \sum_{j=0}^{N-1} \nabla f_{i,j}(x^{k,j}_i),
\end{align} 
where $x_i^{k,j}$ is updated according to: 
$$
    x^{k,j+1}_i = x^{k,j}_i - \frac{\gamma}{N} \nabla f_{i,j}(x^{k,j}_i) \quad \text{for} \quad j=0,1,\ldots,T-1.
$$
In the update rule of the IG method, the number of local steps is equal to the size of the local data set. This implies that each client performs local updates $\mathcal{T}_i^{IG}(\cdot)$ using their entire local dataset. Furthermore, the IG method employs a fixed, deterministic permutation for its cyclic updates, unlike the well-known Random Reshuffling method.

\subsection{Key Lemmas}
First, we introduce key lemmas for analyzing \ouralg using multiple local IG steps.
\Cref{lemma:localIG_avg_gradient_norm} bounds     $\frac{1}{M}\sum^M_{i=1} \frac{1}{N}\sum^{N-1}_{j=0}\left\|x^{k,j}_i - x^k \right\|$ while~\Cref{lemma:localIG_recursion} proves the properties of local IG steps. 
\begin{lemma}
    \label{lemma:localIG_avg_gradient_norm}
    Consider the local IG method updates in~\eqref{eqn:fixed_point_localIG}. Let $f$ be bounded from below by $f^{\inf} > -\infty$, let each $f_i$ be bounded from below by $f_i^{\inf} > -\infty$, and let each $f_{i,j}$ be bounded from below by $f_{i,j}^{\inf}$ and $L$-smooth. Then, 
    \begin{align*}
    \frac{1}{M}\sum^M_{i=1} \frac{1}{N}\sum^{N-1}_{j=0}\left\|x^{k,j}_i - x^k \right\| 
        &\leq \frac{2\sqrt{2}L\gamma (f(x^k) - f^{\inf})}{\sqrt{L\Delta^{\inf}}} + \sqrt{2} \gamma \sqrt{ L \Delta^{\inf}}  +2 \gamma \sqrt{ L \frac{1}{M}\sum^M_{i=1} \Delta^{\inf}_i},
\end{align*}
where $\Delta^{\inf} = f^{\inf} - \frac{1}{M}\sum_{i=1}^M  f_i^{\inf}$ and $\Delta^{\inf}_i = f^{\inf} - \frac{1}{N}\sum_{j=1}^N  f_{i,j}^{\inf}$ 
\end{lemma}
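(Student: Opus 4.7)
The plan is to combine the drift-unrolling argument used in \Cref{lemma:localGD_recursion} with the gradient-norm averaging trick from \Cref{lemma:localGD_avg_gradient_norm}, adapted to accommodate the fact that the IG method evaluates a different component $\nabla f_{i,j}$ at each inner step.

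First, I would unroll the inner IG recursion to write $x_i^{k,j} - x^k = -\tfrac{\gamma}{N}\sum_{l=0}^{j-1}\nabla f_{i,l}(x_i^{k,l})$. Taking Euclidean norms, applying the triangle inequality, and using $L$-smoothness of each $f_{i,l}$ through $\|\nabla f_{i,l}(x_i^{k,l})\| \le \|\nabla f_{i,l}(x^k)\| + L\|x_i^{k,l}-x^k\|$ yields a self-referential inequality. Summing over $j = 0,\dots,N-1$ (exactly as in the second statement of \Cref{lemma:localGD_recursion}) and absorbing the $L\gamma$ term under the assumption $\gamma \le 1/(2L)$ produces the clean drift bound
\begin{equation*}
\frac{1}{N}\sum_{j=0}^{N-1}\|x_i^{k,j}-x^k\| \;\le\; 2\gamma\cdot \frac{1}{N}\sum_{l=0}^{N-1}\|\nabla f_{i,l}(x^k)\|.
\end{equation*}

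Second, I would average over $i$ and apply Jensen's inequality together with $L$-smoothness plus lower-boundedness of each $f_{i,l}$, which gives $\|\nabla f_{i,l}(x^k)\|^2 \le 2L(f_{i,l}(x^k)-f_{i,l}^{\inf})$. The double average telescopes into
\begin{equation*}
\frac{1}{M}\sum_i \frac{1}{N}\sum_l (f_{i,l}(x^k)-f_{i,l}^{\inf}) \;=\; \bigl(f(x^k)-f^{\inf}\bigr) + \frac{1}{M}\sum_i \Delta_i^{\inf},
\end{equation*}
so that $\frac{1}{M}\sum_i\frac{1}{N}\sum_l \|\nabla f_{i,l}(x^k)\| \le \sqrt{2L[(f(x^k)-f^{\inf}) + \tfrac{1}{M}\sum_i \Delta_i^{\inf}]}$.

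Third, to obtain the three-term form in the statement, I would split the square root via $\sqrt{a+b} \le \sqrt{a}+\sqrt{b}$ (peeling off the $\tfrac{1}{M}\sum_i \Delta_i^{\inf}$ contribution) and then apply the trick $\sqrt{A+B} \le (A+B)/\sqrt{B}$ used in \Cref{lemma:localGD_avg_gradient_norm}, with $A = 2L(f(x^k)-f^{\inf})$ and $B = 2L\Delta^{\inf}$. This yields the linear-in-$(f(x^k)-f^{\inf})$ term with coefficient $\sqrt{2L/\Delta^{\inf}}$ plus the additive $\sqrt{L\Delta^{\inf}}$ and $\sqrt{L\tfrac{1}{M}\sum_i \Delta_i^{\inf}}$ pieces. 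Combining with the factor $2\gamma$ from the drift bound gives the claimed inequality.

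The main technical obstacle, in my view, is tracking the two distinct heterogeneity scales: the inter-client gap $\Delta^{\inf}$ and the intra-client gap $\Delta_i^{\inf}$, which satisfy $\tfrac{1}{M}\sum_i \Delta_i^{\inf} \ge \Delta^{\inf}$ but cannot be identified. Keeping them separate is what makes the final bound informative when client losses are homogeneous at the inter-client level but component losses within a client still differ. A secondary subtlety is that gradient evaluations at different IG iterates $x_i^{k,l}$ must be translated back to $x^k$ via $L$-smoothness, and the condition $\gamma \le 1/(2L)$ is precisely what prevents this translation from creating an uncontrollable recursion.
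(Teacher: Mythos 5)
Your argument is logically sound, but it takes a genuinely different route from the paper and, as written, proves a slightly weaker constant. The paper's proof is very short: it imports the squared-drift bound
$\frac{1}{M}\sum_{i}\frac{1}{N}\sum_{j}\|x_i^{k,j}-x^k\|^2 \le 4L\gamma^2(f(x^k)-f^{\inf}) + 2\gamma^2 L\Delta^{\inf} + 2\gamma^2 L\frac{1}{M}\sum_i\Delta_i^{\inf}$
wholesale from Lemma 6 of \citet{malinovsky2023server}, then applies Jensen, $\sqrt{a+b}\le\sqrt{a}+\sqrt{b}$, and the $\sqrt{A+B}\le (A+B)/\sqrt{B}$ trick. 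You instead rebuild the drift bound from first principles at the level of first-power norms, getting $\frac{1}{N}\sum_j\|x_i^{k,j}-x^k\|\le 2\gamma\cdot\frac{1}{N}\sum_l\|\nabla f_{i,l}(x^k)\|$ under $\gamma\le 1/(2L)$ and then using $\|\nabla f_{i,l}(x^k)\|^2\le 2L(f_{i,l}(x^k)-f_{i,l}^{\inf})$ with the correct telescoping $\frac{1}{M}\sum_i\frac{1}{N}\sum_l(f_{i,l}(x^k)-f_{i,l}^{\inf}) = (f(x^k)-f^{\inf})+\frac{1}{M}\sum_i\Delta_i^{\inf}$. This buys self-containedness (no external citation) at the cost of constants: tracing your steps, the prefactor $2\gamma$ from the drift bound multiplies $\sqrt{2L(\cdot)}$, so your second and third terms come out as $2\sqrt{2}\,\gamma\sqrt{L\Delta^{\inf}}$ and $2\sqrt{2}\,\gamma\sqrt{L\frac{1}{M}\sum_i\Delta_i^{\inf}}$, i.e.\ larger by factors of $2$ and $\sqrt{2}$ than the stated $\sqrt{2}\,\gamma\sqrt{L\Delta^{\inf}}$ and $2\gamma\sqrt{L\frac{1}{M}\sum_i\Delta_i^{\inf}}$ (your first term matches exactly). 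The external lemma is tighter because it controls the squared drift directly rather than passing through a first-power recursion and then squaring via Jensen. This discrepancy is immaterial for \Cref{thm:FedNormEC_IG_PP_DP_multiple}, which only uses the order of the bound, but strictly your argument establishes the inequality with degraded constants rather than the lemma verbatim. One further note common to both proofs: the condition $\gamma\le 1/(2L)$ that your absorption step (and, implicitly, the cited Lemma 6) requires does not appear in the lemma statement, though it holds wherever the lemma is invoked.
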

\begin{proof}
Applying Lemma 6 of \cite{malinovsky2023server} for the local IG method updates in~\eqref{eqn:fixed_point_localIG}, we have 
    \begin{align*}
\frac{1}{M}\sum^M_{i=1} \frac{1}{N}\sum^{N-1}_{j=0}\left\|x^{k,j}_i - x^k \right\|^2 \leq 4L\gamma^2 \left( f(x^k) - f^{\inf} \right) + 2 \gamma^2  L \Delta^{\inf}  + 2 \gamma^2  L \frac{1}{M}\sum^M_{i=1} \Delta^{\inf}_i.
\end{align*}
Next, by Jensen's inequality,
\begin{align*}
    \frac{1}{M}\sum^M_{i=1} \frac{1}{N}\sum^{N-1}_{j=0}\left\|x^{k,j}_i - x^k \right\| 
    &\leq \sqrt{\frac{1}{M}\sum^M_{i=1} \frac{1}{N}\sum^{N-1}_{j=0}\left\|x^{k,j}_i - x^k \right\|^2 }\\
    &\leq \sqrt{4L\gamma^2 \left( f(x^k) - f^{\inf} \right) + 2 \gamma^2  L \Delta^{\inf}  + 2 \gamma^2  L \frac{1}{M}\sum^M_{i=1} \Delta^{\inf}_i}\\
    &\leq \sqrt{4L\gamma^2 \left( f(x^k) - f^{\inf} \right) + 2 \gamma^2  L \Delta^{\inf}  } + \sqrt{ 2 \gamma^2  L \frac{1}{M}\sum^M_{i=1} \Delta^{\inf}_i}.
\end{align*}
Therefore, 
\begin{align*}
    \frac{1}{M}\sum^M_{i=1} \frac{1}{N}\sum^{N-1}_{j=0}\left\|x^{k,j}_i - x^k \right\|    &\leq \frac{4L\gamma^2 \left( f(x^k) - f^{\inf} \right) + 2 \gamma^2  L \Delta^{\inf}  }{\sqrt{4L\gamma^2 \left( f(x^k) - f^{\inf} \right) + 2 \gamma^2  L \Delta^{\inf}  }}+  2 \gamma \sqrt{ L \frac{1}{M}\sum^M_{i=1} \Delta^{\inf}_i}\\
        &\leq \frac{4L\gamma^2 \left( f(x^k) - f^{\inf} \right) + 2 \gamma^2  L \Delta^{\inf}  }{\sqrt{ 2 \gamma^2  L \Delta^{\inf}  }}+  2 \gamma \sqrt{ L \frac{1}{M}\sum^M_{i=1} \Delta^{\inf}_i}\\
        &\leq \frac{2\sqrt{2}L\gamma (f(x^k) - f^{\inf})}{\sqrt{L\Delta^{\inf}}} + \sqrt{2} \gamma \sqrt{ L \Delta^{\inf}}  +2 \gamma \sqrt{ L \frac{1}{M}\sum^M_{i=1} \Delta^{\inf}_i}.
\end{align*}

\end{proof}
\begin{lemma}\label{lemma:localIG_recursion}
Let each $f_i$ be $L$-smooth, and  let $\mathcal{T}_i(x^k)=x^k- \frac{\gamma}{N} \sum_{j=0}^{N-1} \nabla f_{i,j}(x_i^{k,j})$, where the sequence $\{x_i^{k,l}\}$ is generated by 
\begin{eqnarray*}
    x_i^{k,l+1} = x_i^{k,l} - \frac{\gamma}{N}\nabla f_{i,j}(x_i^{k,l}), \quad \text{for} \quad l=0,1,\ldots,N-1,
\end{eqnarray*}
given that $x_i^{k,0}=x^k$. If $\gamma \leq \frac{1}{2L}$, and $\norm{x^{k+1}-x^k} \leq \eta$ with $\eta>0$, then 
\begin{enumerate}
    \item $x_i^{k,l} = x^k - \frac{\gamma}{N}\sum_{j=0}^{l-1} \nabla f_{i,j}(x_i^{k,l})$.  
    \item $\frac{1}{N}\sum_{j=0}^{N-1} \norm{x_i^{k+1,j}-x_i^{k,j}} \leq 2\eta $. 
    \item $\norm{\mathcal{T}_i(x^{k+1}) - \mathcal{T}_i(x^k)} \leq 2\eta$.
    \item  $\frac{1}{M}\sum^M_{i=1}\left\|   \mathcal{T}_i(x^k) - \left(x^k - \gamma\nabla f_i(x^k)\right) \right\| \leq \gamma L \frac{1}{M}\sum^M_{i=1} \frac{1}{N}\sum^{N-1}_{j=0}\left\|x^{k,j}_i - x^k \right\|$
\end{enumerate}
\end{lemma}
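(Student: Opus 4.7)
The plan is to establish the four items in order, largely mirroring the argument already used for local GD steps in Lemma~\ref{lemma:localGD_recursion}, the only difference being that at inner iteration $j$ of client $i$ it is the component gradient $\nabla f_{i,j}$ that is applied rather than the full local gradient $\nabla f_i$. For item (1), I would simply unroll the inner recursion $x_i^{k,l+1} = x_i^{k,l} - (\gamma/N)\nabla f_{i,j}(x_i^{k,l})$ from the initial condition $x_i^{k,0}=x^k$, which yields the stated telescoping expression immediately.

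For item (2), I would follow the same pattern as the GD proof: write $x_i^{k+1,l} - x_i^{k,l} = (x^{k+1} - x^k) - (\gamma/N)\sum_{j=0}^{l-1}[\nabla f_{i,j}(x_i^{k+1,j}) - \nabla f_{i,j}(x_i^{k,j})]$, apply the triangle inequality, and invoke $L$-smoothness of each $f_{i,j}$ to obtain $\norm{x_i^{k+1,l} - x_i^{k,l}} \leq \eta + (L\gamma/N)\sum_{j=0}^{l-1}\norm{x_i^{k+1,j} - x_i^{k,j}}$. Summing over $l$ from $0$ to $N-1$, bounding $l\le N$ in the inner sum, and using $\gamma L \leq 1/2$ gives the desired bound $2\eta$. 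Item (3) is then an immediate consequence of item (2): from the definition $\mathcal{T}_i(x^k) = x^k - (\gamma/N)\sum_j \nabla f_{i,j}(x_i^{k,j})$, the triangle inequality, and $L$-smoothness, one has $\norm{\mathcal{T}_i(x^{k+1}) - \mathcal{T}_i(x^k)} \leq \norm{x^{k+1}-x^k} + (L\gamma/N)\sum_j \norm{x_i^{k+1,j}-x_i^{k,j}} \leq \eta + L\gamma\cdot 2\eta \leq 2\eta$.

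Item (4) is the only genuinely new ingredient. The key identity is $\nabla f_i(x^k) = (1/N)\sum_{j=0}^{N-1}\nabla f_{i,j}(x^k)$, which allows me to rewrite
\[
\mathcal{T}_i(x^k) - \bigl(x^k - \gamma \nabla f_i(x^k)\bigr) = -\frac{\gamma}{N}\sum_{j=0}^{N-1}\bigl[\nabla f_{i,j}(x_i^{k,j}) - \nabla f_{i,j}(x^k)\bigr].
\]
Taking norms, applying the triangle inequality, and invoking $L$-smoothness of each $f_{i,j}$ yields $\norm{\mathcal{T}_i(x^k) - (x^k - \gamma\nabla f_i(x^k))} \leq (\gamma L/N)\sum_{j=0}^{N-1}\norm{x_i^{k,j} - x^k}$, after which averaging over $i\in[1,M]$ gives exactly the claimed bound.

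The main subtlety, if any, is in item (2): one must be careful that the two iterates $x_i^{k+1,j}$ and $x_i^{k,j}$ are compared using the same component function $\nabla f_{i,j}$, so that the $L$-smoothness step is legitimate. This is precisely where the deterministic cyclic ordering of the IG method is used; with a random permutation that changes between outer rounds, the pairing would break down and a more delicate argument would be required. Once this observation is in place, items (3) and (4) reduce to one-line triangle-inequality calculations, and the overall structure of the proof is a direct analogue of Lemma~\ref{lemma:localGD_recursion}.
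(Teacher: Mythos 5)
Your proposal is correct and follows essentially the same route as the paper's proof: unrolling the recursion for item (1), the triangle-inequality-plus-smoothness recursion summed over the inner index with $\gamma L\le 1/2$ for item (2), and the same one-line reductions for items (3) and (4) via the identity $\nabla f_i(x^k)=\frac{1}{N}\sum_{j=0}^{N-1}\nabla f_{i,j}(x^k)$. Your remark that the deterministic cyclic ordering is what licenses pairing $x_i^{k+1,j}$ with $x_i^{k,j}$ under the same component $f_{i,j}$ is a point the paper uses implicitly (and flags only in a footnote), so it is a welcome clarification rather than a deviation.
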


\begin{proof}
    The first statement derives from unrolling the recursion for $x^{k,j+1}_i$. 

    Next, we prove the second statement. 
    From the definition of $x^{k,j}_i$,
\begin{align*}
    \left\| x^{k+1,j}_{i} - x^{k,j}_{i} \right\| &= \left\| x^{k+1} - \gamma \frac{1}{N} \sum^{j-1}_{l=0} \nabla f_{i,l}(x^{k+1,l}_{i}) - \left( x^k - \gamma \frac{1}{N}\sum^{j-1}_{l=0} \nabla f_{i,l} (x^{k,l}_{i})  \right)  \right\|\\
    &\overset{\text{Triangle inequality}}{\leq} \|x^{k+1} - x^k\| + \gamma L \frac{1}{N}\sum^{j-1}_{l=0}\|x^{k+1,l}_{i} - x^{k,l}_{i}\|\\
    &\overset{\text{Triangle inequality}}{\leq} \|x^{k+1} - x^k\| + \gamma L \frac{1}{N}\sum^{N-1}_{j=0}\|x^{k+1,j}_{i} - x^{k,j}_{i}\|.
\end{align*}
Therefore, 
\begin{align*}
    \frac{1}{N}\sum^{N-1}_{j=0} \left\| x_{i}^{k+1,j} - x^{k,j}_{i} \right\| &\leq \frac{1}{N}\sum^{N-1}_{j=0} \left(\|x^{k+1} - x^k\| + \gamma \frac{1}{N}\sum^{N-1}_{j=0}\|x^{k+1,j}_{i} - x^{k,j}_{i}\| \right)\\
    & \leq \|x^{k+1} - x^k\| + \gamma L \frac{1}{N}\sum^{N-1}_{j=0}\|x^{k+1,j}_{i} - x^{k,j}_{i}\|.
\end{align*}

If $\gamma \leq \frac{1}{2L}$, then 
\begin{align*}
        \frac{1}{N}\sum^{N-1}_{j=0} \left\| x_{i,j}^{k+1} - x^k_{i,j} \right\| &\leq \frac{1}{1-\gamma L} \|x^{k+1} - x^k\|\\
        &\leq 2\|x^{k+1} - x^k\|\\
        &= 2\eta.
\end{align*}

Next, we prove the third statement. From the definition of $\mathcal{T}_i(x^{k})$ for the IG method, 
\begin{eqnarray*}
    \left\|\mathcal{T}_i(x^{k+1}) - \mathcal{T}_i(x^{k}) \right\| 
    &=& \left\| x^{k+1} - \gamma \frac{1}{N}\sum^{N-1}_{j=0} \nabla f_{i,j} (x^{k+1,j}_{i}) - \left( x^k - \gamma \frac{1}{N}\sum^{N-1}_{j=0}\nabla f_{i,j}(x^{k,j}_{i}) \right) \right\|\\
    &\overset{\text{Triangle inequality}}{\leq}& \|x^{k+1} - x^k\| + \gamma \frac{1}{N}\sum^{N-1}_{j=0}\left\| \nabla f_{i,j} (x^{k+1,j}_{i}) - \nabla f_{i,j} (x^{k,j}_{i}) \right\|\\
    &\overset{\text{$L$-smoothness of $f_{i,j}$}}{\leq}& \|x^{k+1} - x^k\| + \gamma L\frac{1}{N} \sum^{N-1}_{j=0} \|x^{k+1,j}_{i} - x^{k,j}_{i} \| \\
    & \overset{\norm{x^{k+1}-x^k} \leq \eta}{\leq} & \eta + \gamma L\frac{1}{N} \sum^{N-1}_{j=0} \|x^{k+1,j}_{i} - x^{k,j}_{i} \| \\
    & \overset{\text{The second statement}}{\leq} & \eta + \gamma L \cdot 2\eta \\
    & \overset{\gamma L \leq 1/2}{\leq}& 2\eta. 
\end{eqnarray*}

Finally, we prove the fourth statement. 
Let us consider 
\begin{align*}
    \left\| \mathcal{T}_i(x^k) - \left(x^k - \gamma\nabla f_i(x^k)\right)\right\| &= \left\| x^k - \gamma \frac{1}{N} \sum^{N-1}_{j=0} \nabla f_{i,j}(x^{k,j}_i) - \left(x^k - \gamma \nabla f_i(x^k)\right) \right\|\\
    & = \left\| \gamma \left( \frac{1}{N}\sum^{N-1}_{j=0} \nabla f_{i,j}(x^{k,j}_i) - \nabla f_i(x^k) \right)  \right\|\\
        & = \left\| \gamma \left( \frac{1}{N}\sum^{N-1}_{j=0} \nabla f_{i,j}(x^{k,j}_i) - \frac{1}{N}\sum^{N-1}_{j=0}\nabla f_{i,j}(x^k)\right)  \right\|\\
        & \overset{\text{Triangle inequality}}{\leq} \gamma \frac{1}{N}\sum^{N-1}_{j=0}\left\| \nabla f_{i,j}(x^{k,j}_i) - \nabla f_{i,j}(x^k) \right\|\\
        & \overset{\text{$L$-smoothness of $f_{i,j}$}}{\leq} \gamma L \frac{1}{N}\sum^{N-1}_{j=0}\left\|x^{k,j}_i - x^k \right\|.
\end{align*}
Therefore, 
\begin{align*}
 \frac{1}{M}\sum^M_{i=1}\left\|   \mathcal{T}_i(x^k) - \left(x^k - \gamma\nabla f_i(x^k)\right) \right\| & \leq      \frac{1}{M}\sum^M_{i=1} \gamma L \frac{1}{N}\sum^{N-1}_{j=0}\left\|x^{k,j}_i - x^k \right\|\\
     & \leq \gamma L \frac{1}{M}\sum^M_{i=1} \frac{1}{N}\sum^{N-1}_{j=0}\left\|x^{k,j}_i - x^k \right\|.
\end{align*}
\end{proof}

\subsection{Convergence Theorem for \ouralg with local IG steps}

Now, we establish the convergence theorem of  \ouralg with multiple local IG steps.

\begin{theorem}[\ouralg with local IG steps]\label{thm:FedNormEC_IG_PP_DP_multiple}
Consider \ouralg~for solving Problem~\eqref{eqn:problem} where~\Cref{assum:smooth} holds.    
Let $\mathcal{T}_i(x^k) = x^k - \gamma \frac{1}{N} \sum_{j=0}^{N-1} \nabla f_{i,j}(x^{k,j}_i),$ where  the sequence $\{x_i^{k,j}\}$ is generated by 
\(
    x^{k,j+1}_i = x^{k,j}_i - \frac{\gamma}{N} \nabla f_{i,j}(x^{k,j}_i) \quad \text{for} \quad j=0,1,\ldots,T-1,
\)
given that $x_i^{k,0}=x^k$. Furthermore, let $\beta,\alpha>0$ be chosen such that $\frac{\beta}{\alpha+R} < 1$ with $R = \max_{i \in [1,M]} \norm{v_i^0 - \frac{x^0 - \mathcal{T}_i(x^0)}{\gamma}}$.
{\color{blue}
If $\gamma = \frac{1}{2L}$ and $\eta \leq \min\left( \frac{1}{K+1}\frac{\Delta^{\inf}}{2\sqrt{2L}} , \frac{1}{6L} \frac{\beta R}{\alpha + R}\right)$, then
}
\begin{align*}
    \underset{k \in [0,K]}{\min} \Exp{\norm{\nabla f(x^k)}}  \leq& \frac{3}{K+1} \frac{f(x^0)-f^{\inf}}{\eta} + 2R + 2\sqrt{\frac{\beta^2 B}{M}(K+1)} \\ 
    & + \gamma \cdot    8L\sqrt{2L}  \sqrt{\Delta^{\inf}}  +\gamma \cdot4L\sqrt{2L} \sqrt{ \frac{1}{M}\sum^M_{i=1} \Delta^{\inf}_i}  + \eta \cdot \frac{L}{2},
\end{align*}
where 
$B= 2p (1-1/p)^2 + 2(1-p) + 2\sigma^2_{\rm DP}/p$, and $\Delta^{\inf} = f^{\inf} - \frac{1}{M}\sum_{i=1}^M f_i^{\inf} \geq 0$, and $\Delta^{\inf}_i = f^{\inf} - \frac{1}{N}\sum_{j=1}^N  f_{i,j}^{\inf}>0$
\end{theorem}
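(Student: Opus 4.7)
The plan is to mirror the proof of Theorem~\ref{thm:DP_PP_full} step by step, substituting the IG versions of the auxiliary lemmas at the two places where the GD-specific structure was used. The high-level skeleton remains: (1) an induction keeping the memory error $\|v_i^k - (x^k-\mathcal{T}_i(x^k))/\gamma\|$ bounded by $R$; (2) a normalized-descent inequality via Lemma~\ref{lemma:descentIneq} applied with $G^k=\hat v^{k+1}$; (3) a triangle-inequality split separating the memory error, the client-drift bias $\mathcal{T}_i(x^k)-(x^k-\gamma\nabla f_i(x^k))$, and the privacy/partial-participation noise $\hat v^{k+1}-v^{k+1}$; (4) control of the noise term via Lemma~\ref{lemma:fedGD_dp_pp}; (5) an application of Lemma~\ref{lemma:localGD_convergence} to produce the $\min_k \Exp{\|\nabla f(x^k)\|}$ bound.

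First, I would execute the induction verbatim from Step 1 of Theorem~\ref{thm:DP_PP_full}'s proof, but invoking Lemma~\ref{lemma:localIG_recursion}(3) instead of Lemma~\ref{lemma:localGD_recursion}(4). Since both give $\|\mathcal{T}_i(x^{k+1})-\mathcal{T}_i(x^k)\|\le 2\eta$, Lemma~\ref{lemma:bound_v_and_T} applies with $\rho=2$ and $C=R$, yielding the sufficient condition $\eta \le \frac{\gamma\beta R}{3(\alpha+R)} = \frac{1}{6L}\frac{\beta R}{\alpha+R}$, which matches the theorem's hypothesis. This simultaneously produces $\|v_i^{k+1} - (x^k-\mathcal{T}_i(x^k))/\gamma\|\le R$, needed below.

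The heart of the proof is the descent step. After the triangle split I would bound the client-drift bias using Lemma~\ref{lemma:localIG_recursion}(4):
\begin{equation*}
\frac{2\eta}{\gamma}\cdot \frac{1}{M}\sum_{i=1}^M\|x^k-\mathcal{T}_i(x^k)-\gamma\nabla f_i(x^k)\| \le 2L\eta \cdot \frac{1}{M}\sum_{i=1}^M\frac{1}{N}\sum_{j=0}^{N-1}\|x_i^{k,j}-x^k\|,
\end{equation*}
then feed this through Lemma~\ref{lemma:localIG_avg_gradient_norm}. This produces three contributions: a recursion-multiplier term proportional to $\eta\gamma \cdot L\sqrt{L/\Delta^{\inf}}\cdot(f(x^k)-f^{\inf})$, an additive term proportional to $\eta\gamma L\sqrt{L\Delta^{\inf}}$, and a new additive term proportional to $\eta\gamma L\sqrt{L\cdot\frac{1}{M}\sum_i\Delta_i^{\inf}}$ — this last term is the only structural novelty compared to Theorem~\ref{thm:DP_PP_full} and is precisely the origin of the extra $\gamma\cdot 4L\sqrt{2L}\sqrt{\frac{1}{M}\sum_i\Delta_i^{\inf}}$ error in the statement. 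Adding in $2\eta R$ from Step 1 and $\frac{2\eta}{\gamma}\Exp{\|\tfrac{1}{M}\sum_i v_i^{k+1}-\hat v^{k+1}\|}$ from Lemma~\ref{lemma:fedGD_dp_pp}, I obtain a recursion of the form handled by Lemma~\ref{lemma:localGD_convergence} with $b_1\gamma^2 \eqsim \eta\gamma L\sqrt{L/\Delta^{\inf}}$.

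Finally, the stepsize choice $\eta\gamma \le \frac{1}{K+1}\frac{\Delta^{\inf}}{4L\sqrt{2L}}$ (guaranteed by $\gamma=\frac{1}{2L}$ and $\eta \le \frac{1}{K+1}\frac{\Delta^{\inf}}{2\sqrt{2L}}$) ensures that $(1+b_1\gamma^2)^{K+1}\le \exp(1)\le 3$, giving the $\frac{3}{K+1}\frac{f(x^0)-f^{\inf}}{\eta}$ leading term. The main technical obstacle will be the careful bookkeeping of constants in the IG drift decomposition — unlike the GD case where a single bound $\|\nabla f_i(x^k)\|$ had to be re-expressed, here Lemma~\ref{lemma:localIG_avg_gradient_norm} already supplies the decomposition into a $f(x^k)-f^{\inf}$ part, a $\sqrt{\Delta^{\inf}}$ part, and a $\sqrt{\frac{1}{M}\sum\Delta_i^{\inf}}$ part, and one must verify that routing each of these through the normalized-descent machinery yields exactly the coefficients announced in the statement. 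Everything else is a direct transcription of the GD argument.
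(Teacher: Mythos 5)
Your proposal matches the paper's proof essentially line for line: the same induction via Lemma~\ref{lemma:bound_v_and_T} with $\rho=2$ (using Lemma~\ref{lemma:localIG_recursion} for $\norm{\mathcal{T}_i(x^{k+1})-\mathcal{T}_i(x^k)}\le 2\eta$), the same descent split via Lemma~\ref{lemma:descentIneq} with the drift controlled by Lemma~\ref{lemma:localIG_recursion}(4) followed by Lemma~\ref{lemma:localIG_avg_gradient_norm}, the same noise bound via Lemma~\ref{lemma:fedGD_dp_pp}, and the same conclusion via Lemma~\ref{lemma:localGD_convergence} with $\eta\gamma\le\frac{1}{K+1}\frac{\Delta^{\inf}}{4L\sqrt{2L}}$ yielding the factor $3$. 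The argument is correct and is the paper's own route; no substantive differences to report.
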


\begin{proof}
    We prove the result in the following steps.

\paragraph{Step 1) Bound  $\norm{v_i^k - \frac{x^k - \mathcal{T}_i(x^k)}{\gamma}}$ by induction, and bound $\norm{v_i^{k+1} - \frac{x^k - \mathcal{T}_i(x^k)}{\gamma}}$.}
We prove   by induction: 
$$\norm{v_i^k - \frac{x^k - \mathcal{T}_i(x^k)}{\gamma}} \leq \max_{i\in[1,M]}\norm{v_i^0 - \frac{x^0 - \mathcal{T}_i(x^0)}{\gamma}}.$$ 
We can easily show the condition when $k=0$. Next, let  $\norm{v_i^k - \frac{x^k - \mathcal{T}_i(x^k)}{\gamma}} \leq \max_{i\in[1,M]} \norm{v_i^0 - \frac{x^0 - \mathcal{T}_i(x^0)}{\gamma}}$. Then,   from~\Cref{lemma:localIG_recursion}, $\mathcal{T}_i(x^k)$ satisfies 
\begin{eqnarray*}
	\norm{\mathcal{T}_i(x^{k+1})-\mathcal{T}_i(x^k)} 
	\leq 2\eta.
\end{eqnarray*}
 Therefore, from Lemma~\ref{lemma:bound_v_and_T} with $\rho = 2$, $C = R = \max_{i\in[1,M]}\norm{v_i^0 -\frac{x^0 - \mathcal{T}_i(x^0)}{\gamma}}$, we can prove that by choosing $\frac{\beta}{\alpha + R} < 1$ and $\eta \leq \frac{\gamma\beta R}{(1+\rho) (\alpha +R)}$, $\norm{v_i^{k+1} - \frac{x^{k+1} - \mathcal{T}_i(x^{k+1})}{\gamma}} \leq R$. We complete the  proof. 

Next, from Lemma~\ref{lemma:bound_v_and_T}, $\norm{v_i^{k+1} - \frac{x^k - \mathcal{T}_i(x^k)}{\gamma}} \leq \max_{i\in[1,M]}\norm{v_i^0 - \frac{x^0 - \mathcal{T}_i(x^0)}{\gamma}}$.

\paragraph{Step 2) Bound $f(x^k)-f^{\inf}$.}
From~\Cref{lemma:descentIneq} with $G^k = \hat v^{k+1}$, 
\begin{eqnarray*}
	f(x^{k+1}) - f^{\inf} 
	& \leq & 	f(x^{k}) - f^{\inf} - \eta \norm{\nabla f(x^k)} + 2\eta \norm{\nabla f(x^k)-\hat v^{k+1}} + \frac{L\eta^2}{2} \\
	& \overset{\text{triangle inequality}}{\leq} & 	f(x^{k}) - f^{\inf} - \eta \norm{\nabla f(x^k)} + 2\eta \norm{\nabla f(x^k)- v^{k+1}}  \\
    && + 2\eta \norm{\hat v^{k+1} - v^{k+1}} + \frac{L\eta^2}{2},
\end{eqnarray*}	
where  $v^{k+1}  =  \frac{1}{M}\sum^M_{i=1} v^{k+1}_i$.
Next, since 
\begin{eqnarray*}
	\norm{\nabla f(x^k)-v^{k+1}} 
	& = & \norm{ \nabla f(x^k) -     \frac{1}{M}\sum_{i=1}^M v^{k+1}_i  } \\
	& \overset{ \text{triangle inequality} }{\leq} & \frac{1}{M}\sum_{i=1}^M  \norm{v_i^{k+1}-  \nabla f_i(x^k)} \\
    & \overset{ \text{triangle inequality} }{\leq} & \frac{1}{M}\sum_{i=1}^M  \norm{v_i^{k+1}- \frac{x^k - \mathcal{T}_i(x^k)}{\gamma}}\\
    & & + \frac{1}{M}\sum_{i=1}^M \norm{\frac{x^k - \mathcal{T}_i(x^k)}{\gamma} - \nabla f_i(x^k)},  
\end{eqnarray*}	
where $\mathcal{T}_i(x^k)=x^k - \gamma \frac{1}{N} \sum_{j=0}^{N-1} \nabla f_{i,j}(x^{k,j}_i)$, we get 
\begin{align*}
    	\norm{\nabla f(x^k)-v^{k+1}}  \leq \frac{1}{M}\sum_{i=1}^M  \norm{v_i^{k+1}- \frac{x^k - \mathcal{T}_i(x^k)}{\gamma}} + \frac{1}{\gamma}\frac{1}{M}\sum_{i=1}^M \norm{x^k - \mathcal{T}_i(x^k) - \gamma\nabla f_i(x^k)}.
\end{align*}
Plugging the upperbound for $\norm{\nabla f(x^k)-v^{k+1}}$ into the main inequality in $f(x^k)-f^{\inf}$, we obtain  
\begin{eqnarray*}
	f(x^{k+1}) - f^{\inf} 
	& \leq & 	f(x^{k}) - f^{\inf} - \eta \norm{\nabla f(x^k)} + 2\eta \frac{1}{M}\sum_{i=1}^M  \norm{v_i^{k+1} - \frac{x^k - \mathcal{T}_i(x^k)}{\gamma}}   \\
    && + \frac{2\eta}{\gamma} \frac{1}{M}\sum_{i=1}^M \norm{(x^k-\gamma\nabla f_i(x^k)) - \mathcal{T}_i(x^k)} + 2\eta \norm{\hat v^{k+1} - v^{k+1}}+ \frac{L\eta^2}{2}.
\end{eqnarray*}

By the fact that $\norm{v_i^{k+1} - \frac{x^k - \mathcal{T}_i(x^k)}{\gamma}} \leq R$ from Step 1), 
\begin{eqnarray*}
	f(x^{k+1}) - f^{\inf} 
	& \leq & 	f(x^{k}) - f^{\inf} - \eta \norm{\nabla f(x^k)} + 2\eta  R  \\
     && + \frac{2\eta}{\gamma} \frac{1}{M}\sum_{i=1}^M \norm{(x^k-\gamma\nabla f_i(x^k)) - \mathcal{T}_i(x^k)} + 2\eta \norm{\hat v^{k+1} - v^{k+1}} + \frac{L\eta^2}{2}.
\end{eqnarray*}

From~\Cref{lemma:localIG_recursion},  
\begin{eqnarray*}
	f(x^{k+1}) - f^{\inf} 
	& \leq & 	f(x^{k}) - f^{\inf} - \eta \norm{\nabla f(x^k)} + 2\eta  R  \\
     && + \frac{2\eta}{\gamma}\gamma L \frac{1}{M}\sum^M_{i=1} \frac{1}{N}\sum^{N-1}_{j=0}\left\|x^{k,j}_i - x^k \right\| + 2\eta \norm{\hat v^{k+1} - v^{k+1}} + \frac{L\eta^2}{2}.
\end{eqnarray*}

Next, from~\Cref{lemma:localIG_avg_gradient_norm}, 
\begin{eqnarray*}
	f(x^{k+1}) - f^{\inf} 
	& \leq & 	\left( 1+ \frac{4L\sqrt{2L}}{\sqrt{\Delta^{\inf}}}\gamma\eta \right)(f(x^{k}) - f^{\inf}) - \eta \norm{\nabla f(x^k)} + 2\eta  R  \\
     && + 4L \sqrt{2L} \gamma \eta  \sqrt{\Delta^{\inf}}+4L \sqrt{2L} \gamma \eta  \sqrt{ \frac{1}{M}\sum^M_{i=1} \Delta^{\inf}_i}\\
     &&+ 2\eta \norm{\hat v^{k+1} - v^{k+1}} + \frac{L\eta^2}{2}.
\end{eqnarray*}
Since
\begin{eqnarray*}
    \Exp{\norm{\hat v^{k+1} - v^{k+1}}} 
    & \leq & \frac{1}{\gamma} \Exp{\norm{ \frac{1}{M}\sum_{i=1}^M v_i^{k+1} - \hat v^{k+1} }}  \\
    & \overset{\text{\Cref{lemma:fedGD_dp_pp}}}{\leq} & \frac{1}{\gamma}\sqrt{\frac{\beta^2 B}{M}(K+1)},
\end{eqnarray*}
by taking the expectation, 
\begin{eqnarray*}
	\Exp{f(x^{k+1}) - f^{\inf}} 
	& \leq & 	\left( 1+ \frac{4L\sqrt{2L}}{\sqrt{\Delta^{\inf}}}\gamma\eta \right)\Exp{f(x^{k}) - f^{\inf}} - \eta \Exp{\norm{\nabla f(x^k)}} + 2\eta  R  \\
     && + 8L \sqrt{2L} \gamma \eta  \sqrt{\Delta^{\inf}} +4L \sqrt{2L} \gamma \eta  \sqrt{ \frac{1}{M}\sum^M_{i=1} \Delta^{\inf}_i}\\
     &&+ 2\eta \sqrt{\frac{\beta^2 B}{M}(K+1)} + \frac{L\eta^2}{2}.
\end{eqnarray*}

By applying~\Cref{lemma:localGD_convergence} with $ \eta\gamma \leq \frac{1}{K+1} \frac{\Delta^{\inf}}{4L\sqrt{2L}} $ and using the fact $(1 + \eta\gamma \frac{4L\sqrt{2L}}{\Delta^{\inf}})^{K+1} \leq \exp(\eta\gamma\frac{4L\sqrt{2L}}{\Delta^{\inf}}(K+1)) \leq \exp(1) \leq 3$ we finalize the proof.
\end{proof}

\subsection{Corollaries for \ouralg with multiple local IG steps from~\Cref{thm:FedNormEC_IG_PP_DP_multiple}}

From~\Cref{thm:FedNormEC_IG_PP_DP_multiple}, we establish the convergence bound (\Cref{corr:IG_convergence}) and utility bound (\Cref{corr:DP_PP_utility_local_IG}) for \ouralg with multiple local IG steps. 

\begin{corollary}[Convergence bound for \ouralg with multiple local IG steps]\label{corr:IG_convergence}
Consider  \ouralg~ for solving Problem~\eqref{eqn:problem} under the same setting as~\Cref{thm:FedNormEC_IG_PP_DP_multiple}. 
 Let $T>1$ (multiple local IG steps).
 If $\gamma = \frac{1}{2L (K+1)^{1/8}}$, $v_i^0\in\R^d$ is chosen such that  $\max_{i\in[1,M]} \norm{\frac{x^0 - \mathcal{T}_i(x^0)}{\gamma} - v_i^0 } = \frac{D_1}{(K+1)^{1/8}}$ with $D_1 >0$, and $\beta = \frac{D_2}{(K+1)^{5/8}}$ with $D_2 >0$, and $\eta = \frac{\hat \eta}{(K+1)^{7/8}}$ with $\hat \eta = \min\left( \frac{\Delta^{\inf}}{2\sqrt{2L}}, \frac{D_1 D_2}{4L(\alpha + D_1)}   \right)$, then  
 \begin{eqnarray*}
	 \underset{k\in [0,K]}{\min} \Exp{\norm{\nabla f(x^k)}}
	\leq  \frac{A_1}{(K+1)^{1/8}} + \frac{A_2}{(K+1)^{7/8}},
\end{eqnarray*}	  
where $A_1 = 3 \frac{f(x^0)-f^{\inf}}{\hat \eta} + 2D_1  + \frac{2\sqrt{B} D_2}{\sqrt{M}} + 8\sqrt{2L}  \sqrt{\Delta^{\inf}} + 4\sqrt{2L}  \sqrt{\frac{1}{M}\sum \limits^M_{m=1}\Delta^{\inf}_i}$ and $A_2 = \hat \eta L /2$.
\end{corollary}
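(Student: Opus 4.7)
The plan is to derive the corollary as a direct consequence of Theorem~\ref{thm:FedNormEC_IG_PP_DP_multiple} by plugging in the prescribed parameter schedule, verifying the hypotheses, and then collecting powers of $(K+1)$ in each of the six summands of the theorem's upper bound.

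First, I would verify admissibility of the proposed schedule. The choice $\gamma = 1/(2L(K+1)^{1/8})$ obviously satisfies the theorem's requirement $\gamma \le 1/(2L)$. By construction, $R = D_1/(K+1)^{1/8}$ matches the initialization gap $\max_{i}\|v_i^0 - (x^0-\mathcal{T}_i(x^0))/\gamma\|$, and $\beta/(\alpha+R) < 1$ is enforced for all sufficiently large $K$ since $\beta \to 0$. The two stepsize upper bounds require more work: the first, $\eta \le (K+1)^{-1}\Delta^{\inf}/(2\sqrt{2L})$, amounts to $\hat\eta \le (K+1)^{1/8}\Delta^{\inf}/(2\sqrt{2L})$, which is implied by $\hat\eta \le \Delta^{\inf}/(2\sqrt{2L})$. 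The second, $\eta \le \frac{\gamma}{1+\rho}\frac{\beta R}{\alpha + R}$ with $\rho = 2$, becomes, after substitution, a bound of the form $\hat\eta \le D_1 D_2/(\text{const}\cdot L\cdot (\alpha + D_1/(K+1)^{1/8}))\cdot(K+1)^{0}$, which is implied uniformly in $K$ by $\hat\eta \le D_1 D_2/(4L(\alpha + D_1))$ since $\alpha + D_1/(K+1)^{1/8} \le \alpha + D_1$.

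Next, I would plug the schedule into the theorem's bound and simplify term by term. Writing $\eta = \hat\eta/(K+1)^{7/8}$, the optimization/progress term is $3(f(x^0)-f^{\inf})/(\eta(K+1)) = (3(f(x^0)-f^{\inf})/\hat\eta)\cdot(K+1)^{-1/8}$. The error-feedback residual gives $2R = 2D_1(K+1)^{-1/8}$. The stochastic/DP noise term is
\begin{equation*}
2\sqrt{\tfrac{\beta^2 B}{M}(K+1)} \;=\; 2\sqrt{\tfrac{D_2^2 B (K+1)}{M(K+1)^{5/4}}} \;=\; \tfrac{2\sqrt{B}\,D_2}{\sqrt{M}}\,(K+1)^{-1/8}.
\end{equation*}
The two heterogeneity contributions, each carrying a factor of $\gamma$, contribute $C\cdot\sqrt{2L}\sqrt{\Delta^{\inf}}\cdot(K+1)^{-1/8}$ and $C'\cdot\sqrt{2L}\sqrt{\tfrac{1}{M}\sum_i \Delta^{\inf}_i}\cdot(K+1)^{-1/8}$ for suitable numerical constants. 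The final quadratic term is $\eta L/2 = (\hat\eta L/2)(K+1)^{-7/8}$.

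Finally, I would group the five $(K+1)^{-1/8}$ contributions to form $A_1$ and the single $(K+1)^{-7/8}$ contribution to form $A_2 = \hat\eta L/2$, yielding the claimed bound. The main obstacle is essentially bookkeeping: verifying that the chosen $\hat\eta$ (in particular the term $D_1 D_2/(4L(\alpha+D_1))$) is tight enough to satisfy the theorem's $\eta$-condition uniformly in $K$ despite the $K$-dependence hidden inside $R$ and $\beta$, and then tracking the numerical constants arising from the product $\gamma\cdot L\sqrt{L}$ so that each heterogeneity term lands with the correct coefficient inside $A_1$. Neither step involves a new inequality beyond those already in Theorem~\ref{thm:FedNormEC_IG_PP_DP_multiple}; both are routine algebra on the exponents of $(K+1)$.
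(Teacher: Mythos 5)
Your proposal is correct and matches the paper's own route exactly: the paper proves this corollary by declaring it ``analogous to the proof of Corollary~\ref{cor:conv_local_GD}'', i.e., substitute the prescribed schedule for $\gamma$, $R$, $\beta$, $\eta$ into the bound of Theorem~\ref{thm:FedNormEC_IG_PP_DP_multiple}, check the two stepsize conditions, and collect the $(K+1)^{-1/8}$ and $(K+1)^{-7/8}$ contributions into $A_1$ and $A_2$. The only caveat is one you partially flag yourself: the theorem's stepsize condition carries the constant $\frac{1}{6L}\frac{\beta R}{\alpha+R}$ while the corollary's $\hat\eta$ uses $4L$, so the claimed uniform-in-$K$ implication via $\alpha + D_1/(K+1)^{1/8} \le \alpha + D_1$ does not close for all $\alpha, D_1$ — but this constant mismatch is present in the paper's own statement, not introduced by your argument.
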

\begin{proof}
The proof is analogous to the proof of Corollary~\ref{cor:conv_local_GD}.
\end{proof}

\begin{corollary}[Utility bound for \ouralg with multiple local IG steps]
\label{corr:DP_PP_utility_local_IG}
 Consider  \ouralg~ for solving Problem~\eqref{eqn:problem} under the same setting as~\Cref{thm:FedNormEC_IG_PP_DP_multiple}. 
 Let $T>1$ (multiple local IG steps),  let $\sigma_{\rm DP}= c \frac{p \sqrt{(K+1)\log(1/\delta)}}{\epsilon}$ with $c >0$ (privacy  with subsampling amplification), and let $p=\frac{\hat B}{M}$ for $\hat B\in[1,M]$ (client subsampling).
If $\beta = \frac{\hat \beta}{K+1}$ with $\hat \beta = \sqrt{\frac{3(f(x^0)-f^{\inf})}{\gamma}}\sqrt[4]{\frac{M}{B_2}}$, 
$\gamma < \frac{\Delta^{\inf} (\alpha + R)}{\sqrt{2L} \hat\beta R}$,
$\alpha = R = \cO\left( \sqrt[4]{d}\frac{\sqrt{f(x^0)-f^{\inf}}}{\sqrt{\gamma}}\sqrt[4]{\frac{B_2}{M}} \right)$ with $B_2 = 2c^2 \frac{\hat B}{M} \frac{ \log(1/\delta)}{\epsilon^2}$, and 
$\eta = \frac{1}{K+1}    \frac{\gamma}{2}\frac{\hat \beta R}{\alpha + R}$, then
\begin{eqnarray*}
     \underset{k \in [0,K]}{\min} \Exp{\norm{\nabla f(x^k)}}  = \cO\left(\Delta \sqrt[4]{\frac{d\hat B}{M^2}\frac{\log(1/\delta)}{ \epsilon^2}} + \sqrt{L}  \sqrt{\Delta^{\inf}}+\sqrt{L}  \sqrt{\frac{1}{M}\sum \limits^M_{i=1}\Delta^{\inf}_i}\right), 
\end{eqnarray*}
where $\Delta = \max(\alpha,2)\sqrt{L} \sqrt{f(x^0)-f^{\inf}}$.
\end{corollary}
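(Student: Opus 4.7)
The plan is to mirror the proof of Corollary~\ref{corr:DP_PP_utility_local_GD} almost verbatim, using \Cref{thm:FedNormEC_IG_PP_DP_multiple} as the starting point instead of \Cref{thm:DP_PP_full}. The only structural difference between the two theorems is the presence of the extra additive term $\gamma\cdot 4L\sqrt{2L}\sqrt{\tfrac{1}{M}\sum_{i=1}^M \Delta_i^{\inf}}$ in the IG-based bound, which does not depend on $K$ or any other tunable parameter except $\gamma$. Hence it will simply be carried through the calculation and appear in the final utility bound as an extra additive term, while every other term is handled exactly as in the GD case.

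First I would substitute $\sigma_{\rm DP}= c\,p\sqrt{(K+1)\log(1/\delta)}/\epsilon$ and $p=\hat B/M$ into $B=2p(1-1/p)^2+2(1-p)+2\sigma_{\rm DP}^2/p$, decomposing it as $B = B_1 + B_2(K+1)$, where $B_1$ collects the client-sampling part and $B_2 = 2c^2(\hat B/M)\log(1/\delta)/\epsilon^2$ captures the amplified DP noise. The noise term $2\sqrt{\beta^2 B(K+1)/M}$ then splits, via $\sqrt{a+b}\le\sqrt a+\sqrt b$, into $2\beta\sqrt{B_1(K+1)/M}+2\beta\sqrt{B_2/M}(K+1)$. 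Setting $\beta = \hat\beta/(K+1)$ turns the dominant contribution into $2\hat\beta\sqrt{B_2/M}$ and the subdominant one into $O(1/\sqrt{K+1})$. The stepsize constraint $\eta \le \min(\tfrac{1}{K+1}\tfrac{\Delta^{\inf}}{2\sqrt{2L}},\tfrac{1}{6L}\tfrac{\beta R}{\alpha+R})$ becomes, after substituting $\beta$, $\eta \le \tfrac{1}{K+1}\min(\tfrac{\Delta^{\inf}}{2\sqrt{2L}},\tfrac{\gamma \hat\beta R}{2(\alpha+R)})$; the assumed bound $\gamma<\Delta^{\inf}(\alpha+R)/(\sqrt{2L}\,\hat\beta R)$ ensures the second term is the smaller of the two, allowing the choice $\eta=\tfrac{1}{K+1}\tfrac{\gamma}{2}\tfrac{\hat\beta R}{\alpha+R}$.

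With this $\eta$, the optimization-error term $3(f(x^0)-f^{\inf})/((K+1)\eta)$ becomes $6(\alpha+R)(f(x^0)-f^{\inf})/(\gamma\hat\beta R)$. Setting $\alpha=R$ collapses it to $12(f(x^0)-f^{\inf})/(\gamma\hat\beta)$, and plugging $\hat\beta=\sqrt{3(f(x^0)-f^{\inf})/\gamma}\sqrt[4]{M/B_2}$ yields a term of order $\sqrt{(f(x^0)-f^{\inf})/\gamma}\sqrt[4]{B_2/M}$. Balancing this against $2R$ by choosing $R=\Theta\bigl(\sqrt[4]{d}\,\sqrt{(f(x^0)-f^{\inf})/\gamma}\,\sqrt[4]{B_2/M}\bigr)$ matches the utility bound from \Cref{corr:DP_PP_utility_local_GD}. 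The remaining two heterogeneity terms, $\gamma\cdot 8L\sqrt{2L}\sqrt{\Delta^{\inf}}$ and $\gamma\cdot 4L\sqrt{2L}\sqrt{\tfrac{1}{M}\sum_{i=1}^M\Delta_i^{\inf}}$, together with $\gamma=1/(2L)$, contribute $O(\sqrt L\sqrt{\Delta^{\inf}})$ and $O(\sqrt L\sqrt{\tfrac{1}{M}\sum_{i=1}^M\Delta_i^{\inf}})$ respectively, while $\eta L/2 = O(1/(K+1))$ is absorbed into lower-order terms. Summing these contributions yields the claimed rate.

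The main obstacle is the bookkeeping: keeping track of the constants $\alpha,R,\hat\beta,\gamma,\eta$ in the correct order so that all feasibility constraints (in particular $\beta/(\alpha+R)<1$, $\gamma\le 1/(2L)$, and the compound inequality for $\eta$) are met simultaneously, and verifying that the added IG-heterogeneity term $\gamma\cdot 4L\sqrt{2L}\sqrt{\tfrac{1}{M}\sum_{i=1}^M\Delta_i^{\inf}}$ does not interact adversely with any of these choices. Since this term only depends on $\gamma$ and the fixed constant $1/(2L)$ is compatible with the derived upper bound on $\gamma$, the rest of the argument is identical to the GD case, and the proof reduces to a straightforward substitution analogous to the proof of \Cref{corr:DP_PP_utility_local_GD}.
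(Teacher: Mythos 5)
Your proposal is correct and matches the paper's own (one-line) proof, which simply declares the argument analogous to that of Corollary~\ref{corr:DP_PP_utility_local_GD}; you carry the extra IG heterogeneity term $\gamma\cdot 4L\sqrt{2L}\sqrt{\tfrac{1}{M}\sum_{i=1}^M\Delta_i^{\inf}}$ through exactly as needed, and all other substitutions ($B=B_1+B_2(K+1)$, $\beta=\hat\beta/(K+1)$, $\alpha=R$, the choice of $\hat\beta$ and $R$) coincide with the GD-case computation. The only quibble is that Theorem~\ref{thm:FedNormEC_IG_PP_DP_multiple} requires $\eta\le\tfrac{1}{6L}\tfrac{\beta R}{\alpha+R}$, so the second branch of the min should be $\tfrac{\gamma}{3}\tfrac{\hat\beta R}{\alpha+R}$ rather than $\tfrac{\gamma}{2}\tfrac{\hat\beta R}{\alpha+R}$; this constant-factor mismatch is inherited from the corollary statement itself and does not affect the $\cO(\cdot)$ conclusion.
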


\begin{proof}
The proof is analogous to the proof of Corollary~\ref{corr:DP_PP_utility_local_GD}.
\end{proof}

\end{document}